\begin{document}

\title{Inferring Change Points in High-Dimensional Regression via Approximate Message Passing}

% \author{\name Gabriel Arpino \email ga442@cam.ac.uk \\
%        \addr Department of Engineering, University of Cambridge 
%        % Cambridge, CB2 1PZ, United Kingdom
%        \AND
%        \name Xiaoqi Liu \email xl394@cam.ac.uk \\
%        \addr Department of Engineering, University of Cambridge
%        % Cambridge, CB2 1PZ, United Kingdom
%        \AND
%        \name Julia Gontarek \email jg991@cantab.ac.uk \\
%        \addr Department of Engineering, University of Cambridge
%        % Cambridge, CB2 1PZ, United Kingdom
%        \AND
%        \name Ramji Venkataramanan \email rv285@cam.ac.uk \\
%        \addr Department of Engineering, University of Cambridge
%        % Cambridge, CB2 1PZ, United Kingdom
%        }

\author{\name Gabriel Arpino \email ga442@cam.ac.uk \\
       \name Xiaoqi Liu \email xl394@cam.ac.uk \\
       \name Julia Gontarek \email jg991@cantab.ac.uk \\
       \name Ramji Venkataramanan \email rv285@cam.ac.uk \\
       \addr Department of Engineering, University of Cambridge \\
       Cambridge, CB2 1PZ, United Kingdom
       }

\editor{Mahdi Soltanolkotabi}

\maketitle

\begin{abstract}%
We consider the problem of localizing change points in a generalized linear model (GLM), a model that covers many widely studied problems in statistical learning including linear, logistic, and rectified linear regression. We propose a novel and computationally efficient approximate message passing (AMP) algorithm for estimating both the signals and the change point locations, and rigorously characterize its performance in the high-dimensional limit where the number of parameters $p$ is proportional to the number of samples $n$. This characterization is in terms of a state evolution recursion, which allows us to precisely compute performance measures such as the asymptotic Hausdorff error of our change point estimates, and allows us to tailor the algorithm to take advantage of any prior structural information of the signals and change points. Moreover, we show how our AMP iterates can be used to efficiently compute a Bayesian posterior distribution over the change point locations in the high-dimensional limit. We validate our theory via numerical experiments, and demonstrate the favorable performance of our estimators on both synthetic and real data in the settings of linear, logistic, and rectified linear regression.
\end{abstract}

\begin{keywords}
  change point detection, high-dimensional regression, generalized linear models, multi-index models, approximate message passing, data segmentation
\end{keywords}

\section{Introduction}
Heterogeneity is a common feature of large, high-dimensional data sets. When the data are ordered by time, a simple form of heterogeneity is a change in the data generating mechanism at certain unknown instants of time.  If these `change points' were known, or  estimated accurately, the data set could be partitioned into homogeneous subsets, each amenable to analysis via standard statistical techniques \citep{fryzlewicz_wild_2014}.  Models with change points have been studied in a  variety of statistical contexts, such as the detection of changes in: signal means \citep{wang_univariate_2020,liu2021minimax, li_automatic_2023}; covariance structures \citep{cho2015multiple,wang_optimal_2021b};  graphs \citep{londschien_random_2023,bhattacharjee_change_2020,fan2018approximate}; dynamic networks \citep{wang_optimal_2021a}; and functionals \citep{madrid_padilla_optimal_2022}. Change point models have found application in a range of fields including genomics \citep{braun2000multiple}, neuroscience \citep{AstonKirch12}, causality \citep{he_leveraging_2022}, and economics \citep{andreou2002detecting}.

In this paper, we consider high-dimensional generalized linear models (GLMs) with change points. We are given a sequence of data $(y_i, \X_i) \in \reals \times \reals^p$, for $i\in [n]$, from the model
\begin{align}
 y_i = q((\X_i)^\top \bbeta^{(i)}, \varepsilon_i), \quad i = 1, \dots, n. \label{eq:chgpt-model}
\end{align}
Here, $\bbeta^{(i)} \in \reals^p$ is the unknown regression vector (signal) for the $i$th sample,  $\X_i \in \reals^p$ is the (known) covariate vector, $\varepsilon_i$ is noise, and $q : \reals \times \reals \to \reals$ is a known function. We denote the unknown change points, that is, the sample indices where the regression vector changes,  by $\eta_1, \ldots, \eta_{L^*-1}$. Specifically, we have 
\begin{align*}
   1 = \eta_0 < \eta_1 < \dots < \eta_{L^*} = n + 1, 
\end{align*}
with $\bbeta^{(i)} \neq \bbeta^{(i -1)}$ if and only if $i \in \{ \eta_{\ell}\}_{\ell = 1}^{L^*-1}$.  We note that $L^*$ is the number of distinct  signals in the sequence $\{\bbeta^{(i)}\}_{i=1}^n$, and $(L^*-1)$ is the number of change points. The number of change points is not known, but an upper bound $L$ on the value of $L^*$ is available.  The goal is to estimate the change point locations as well as the $L^*$ signals. We would also like to quantify the uncertainty in these estimates, for example, via confidence sets or a posterior distribution. As we describe below, the model \eqref{eq:chgpt-model} covers many widely studied regression models including linear, logistic, and rectified linear regression. 

\subsection{Linear Regression with Change Points}

In this model, the data $(y_i, \X_i) \in \reals \times \reals^p$ are generated as follows: 
\begin{align}
y_i = (\X_i)^\top \bbeta^{(i)} + \varepsilon_i, \quad i = 1, \dots, n. \label{eq:linear-chgpt-model}
\end{align}
Notice that this corresponds to model \eqref{eq:chgpt-model} with $q(z, v) := z + v$, where for $i \in [n]$ the projected signal $(\X_i)^\top \bbeta^{(i)}$ is observed in  additive noise $\varepsilon_i$. When $L^* = 1$, this reduces to standard linear regression. 

Linear regression with change points in the high-dimensional regime (where the dimension $p$ is comparable to, or exceeds, the number of samples $n$) has been studied in a number of recent works, such as \cite{lee2016lasso, leonardi2016computationally,kaul2019efficient,rinaldo_localizing_2021,xu_change_2022, li_divide_2023, bai2023unified}. Most of these papers consider the setting where the signals are sparse (the number of non-zero entries in  $\bbeta^{(i)} \in \reals^p$ is $o(p)$), and  analyze procedures  that combine the LASSO estimator (or a variant) with a partitioning technique such as dynamic programming.  The  work of \citet{gao_sparse_2022} assumes sparsity on the difference between signals across a change point, and \citet{cho2024detection,liu2024change} consider general non-sparse signals. The recent works of \citet{liu2024change, yang_robust_2024} propose change point detection methods for high-dimensional linear models that do not rely on strict sub-Gaussian error assumptions, with performance comparable to that of \cite{xu_change_2022} in the case of light-tailed additive noise distributions.

\subsection{Logistic Regression with Change Points}
For the logistic model, we first define the function $\zeta(z) := \log{(1 + e^z)}$. The data $(y_i, \X_i) \in \{0, 1\} \times \reals^p$ are then generated according to the following model: 
\begin{align}
    \P\left[y_i = 1 \middle| (\X_i)^\top \bbeta^{(i)} \right] = \frac{e^{(\X_i)^\top \bbeta^{(i)}}}{1 + e^{(\X_i)^\top \bbeta^{(i)}}} = \zeta'((\X_i)^\top \bbeta^{(i)}), \quad i = 1, \dots, n. \label{eq:logistic-chgpt-model}
\end{align}
We may view this as an instance of \eqref{eq:chgpt-model} with $\varepsilon_1, \dots, \varepsilon_n \distas{i.i.d} U[0, 1]$ and $q(z, v) = \ind\{v \leq \zeta'(z) \}$, so that $y_i = q((\X_i)^\top \bbeta^{(i)}, \varepsilon_i) := \ind\{\varepsilon_i \leq \zeta'((\X_i)^\top \bbeta^{(i)})\}$ for each $i \in [n]$. When $L^* = 1$, model \eqref{eq:logistic-chgpt-model} corresponds to the standard logistic model. 

Logistic models are widely used for classification, and logistic regression with change points has been studied in  number of contexts. In epidemiology, for example, logistic regression with unknown change points has been   used to model the relationship between the continuous exposure variable and disease risk \citep{pastor-barriuso_transition_2003}. In medicine, it has been used to identify relevant immune response biomarkers in patients with potentially infectious diseases \citep{fong_change_2015}. 

\subsection{Rectified Linear Regression with Change Points}
In the rectified linear regression model, the data $(y_i, \X_i) \in \reals_{\geq 0} \times \reals^p$ are generated as follows: 
\begin{align}
    y_i = \max\{ (\X_i)^\top \bbeta^{(i)}, 0 \} + \varepsilon_i, \quad i = 1, \dots, n. \label{eq:ReLU-chgpt-model}
\end{align}
This is an instance of \eqref{eq:chgpt-model} with $q(z, v) := \max\{z, 0 \} + v$. When $L^* = 1$, the model \eqref{eq:ReLU-chgpt-model} is referred to as the ReLU (rectified linear unit) with additive noise. The output function acts as a threshold that maps negative values of its first input to zero. 

Rectified linear regression  is an important primitive in the theory of deep learning, and the problem without change points has been extensively studied in machine learning in recent years,  both from the perspective of  designing new algorithms and from that of computational hardness, as seen in the works of \cite{soltanolkotabi_learning_2017,diakonikolas_relu_2021}.

\paragraph{Challenges in the High-Dimensional Setting}
Change point detection in generic, low-dimensional, GLMs was studied by
\cite{hofrichter}, and  \citet{wang_efficient_2023} recently proposed a method for detecting change points in high-dimensional GLMs with sparse regression vectors.
Assuming $s$-sparse signals, they propose a change point estimator that combines an $\ell_1$-penalized estimator with a partitioning technique, and show that it is consistent under the sparsity requirement $s = o(\sqrt{n} / \log{p})$. 

Although existing procedures for high-dimensional change point regression such as \cite{wang_efficient_2023} incorporate sparsity-based constraints, they cannot be easily adapted to take advantage of other kinds of signal priors. Moreover, they are not well-equipped to exploit prior information on the change point locations.  Bayesian approaches to change point detection have been studied in several works, such as those of \cite{fearnhead2006exact, lungu2022changepoint}, however they  mainly focus  on (low-dimensional) time-series. %

\paragraph{Main Contributions} We propose an approximate message passing (AMP) algorithm for estimating the signals $\{\bbeta^{(i)}\}_{i = 1}^n$ and the change point locations $\{ \eta_{\ell}\}_{\ell = 1}^{L^*-1}$ in model \eqref{eq:chgpt-model}. 
Under the assumption that the covariates are i.i.d. Gaussian, we give an exact characterization of the performance of the algorithm in the limit as both the signal dimension $p$ and the number of samples $n$ grow, with $n/p$ converging to a constant $\delta$ (Theorem \ref{thm:SE}). The AMP algorithm is iterative, and defined via a pair of `denoising' functions for each iteration. We show how these functions can be tailored to take advantage of any  prior information on the signals and the change points (Proposition \ref{prop:opt_ensemble_ft_gt}). We then show how the change points can be estimated using the iterates of the AMP algorithm, and how the uncertainty can be quantified via a limiting posterior distribution (Section \ref{sec:chgpt_estimation}). Our theory enables asymptotic guarantees on the accuracy of the change point estimator and on the posterior distribution (Propositions \ref{prop:hausdorff_asymptotics}, \ref{prop:pointwise_posterior}). 

In Section \ref{sec:experiments} we present a range of numerical experiments that demonstrate the superior performance of AMP compared to other state-of-the-art algorithms for high-dimensional regression with change points. We consider synthetic data as well as a real Myocardial Infarctions (MI) data set sorted by age. In the latter, we identify a change point that is consistent with medical literature, and further validate this finding using standard logistic regression analysis. 

Although our theoretical results do not explicitly need assumptions on the separation between adjacent change points, they are most interesting when the separation is of order $n$, that is, $\Delta := \min_{ \ell \in [L]} (\eta_\ell - \eta_{\ell-1}) = O(n) $. This separation is natural in our regime, where the number of samples $n$ is proportional to $p$ and the number of degrees of freedom in the signals also grows linearly in $p$. Existing results on high-dimensional change point regression usually assume signals that are $s$-sparse with $s$ sufficiently small, and demonstrate  change point estimators that are consistent when $\Delta  = \omega\big(s \log p/\kappa^2\big)$, where $\kappa$ is a constant determined by the separation between the signals \citep{wang_2021_statistically, li_divide_2023, wang_efficient_2023}. In contrast, we do not assume signal sparsity that is sublinear in $n$, so the change point estimation error will not converge to zero unless $n/p \to \infty$. We therefore quantify the AMP performance via precise asymptotics for the estimation error and the limiting posterior distribution.

\paragraph{Approximate Message Passing}  AMP, a family of iterative algorithms first proposed for linear regression \citep{Kab03, Don09, Krz12}, has  been applied to a variety of high-dimensional estimation problems including estimation in generalized linear models \citep{Ran11,Sch14,Mai20,Mon21} and low-rank matrix estimation \citep{Fle18,Les17,Mont21, Bar20}. An attractive feature of AMP algorithms is that under suitable model assumptions, their performance in the high-dimensional limit can be characterized by a succinct deterministic recursion called \emph{state evolution}. The state evolution characterization has been used to show that AMP achieves Bayes-optimal performance for some models \citep{Des14,Don13,Bar19}. We refer the reader to Appendix \ref{app:background} for further background on AMP for Generalized Linear Models.

An important feature of our AMP algorithm, in contrast to the above works, is the use of \emph{non-separable} denoising functions.
(We say a function $g : \reals^{m \times L} \to \reals^{m \times L}$ is separable if it acts row-wise identically on the input matrix.)  Even with simplifying assumptions on the signal and noise distributions, non-separable AMP denoisers are required to handle the heterogeneous dependence structure caused by change points, and allow for precise uncertainty quantification around possible change point locations. Our main state evolution result (Theorem \ref{thm:SE}) leverages recent results by \citet{berthier_state-evolution_2019} and \citet{gerbelot_graph-based_2023} for AMP with non-separable  denoisers.   Non-separable denoisers for AMP have been been studied for generalized linear models and for matrix tensor product models, to exploit the dependence within the signal \citep{som2012compressive, metzler2016denoising, ma2019analysis, rossetti2023approximatemessagepassingmatrix} or between the covariates \citep{zhang_spectral_2023}. In our setting, non-separable denoisers are required to exploit the heterogeneous dependence structure caused by change points.

Although we assume i.i.d. Gaussian covariates, based on recent AMP universality results \citep{Wang22Universality}, we expect the results apply to a broad class of i.i.d. designs. An interesting direction for future work is to generalize our results to rotationally invariant designs, a much broader class for which AMP-like algorithms have been proposed for regression without change points  \citep{ma2017orthogonal, rangan2019vector, takeuchi2020rigorous, pandit2020inference}. 

\section{Preliminaries} \label{sec:prelim}
\paragraph{Notation}
We let $[n]=\{1, 2, \dots, n\}$.
We use boldface notation for matrices and vectors. For vectors $\x, \y \in \reals^n$, we write $\x \leq \y$ if $x_i \leq y_i$ for all $i \in [n]$, and let $[\x, \y] := \{\v \in \reals^n : x_i \leq v_i \leq y_i \; \forall i \in [n]\}$. 
For a matrix $\A \in \reals^{m \times L}$ and $i \in [m], j \in [L]$, we let $\A_{[i, :]}$ and $\A_{[:, j]}$ denote its $i$th row and $j$th column respectively. Similarly, for a vector $\bpsi \in [L]^n$, we let $\A_{[:, \bpsi]} \in \reals^m$ denote the vector whose  $i$-th entry is $\A_{i, \psi_i}$. 
For two sequences (in $n$) of random variables $X_n$, $Y_n$, we write $X_n \stackrel{\P}\simeq Y_n$ when their difference converges 
in probability to 0, that is, $\lim_{n\to \infty}\P(|X_n - Y_n|>\epsilon)=0$ for any $\epsilon > 0$.
Denote  the covariance matrix of random vector $\Z\in \R^q$ as $\cov(\Z)\in\R^{q\times q}$. 
We refer to all random elements, including  vectors and matrices, as random variables. When referring to probability densities, we include probability mass functions, with integrals interpreted as sums when the distribution is discrete.
Given a matrix $\B \in \reals^{p \times L}$, the empirical distribution of the entries of $\B$ is the measure $\mu_p(A) := \frac{1}{p}|\{ \B_j \in A: j \in [p]\}|$ for any  measurable set $A \subseteq \reals^L$. We say that the empirical distribution of the entries of $\B$ converges weakly to the law $\P_{\bar{\B}}$ if $ \frac{1}{p}\sum_{j=1}^p f(\B_j) \to \E_{\P_{\bar{\B}}}[f(\bar{\B})]$ as $p \to \infty$ for all bounded continuous functions $f: \reals^L \to \reals$. 

\paragraph{Model Assumptions}
In model \eqref{eq:chgpt-model}, we assume independent Gaussian covariate vectors $\X_i \distas{i.i.d} \N(0, \I_p/n)$ for $i \in [n]$. We consider the high-dimensional regime where $n, p \to \infty$ and $\frac{n}{p}$ converges to a constant $\delta > 0$. Following the change point literature, we assume the number of change points $(L^*-1)$ is fixed and does not scale with $n,p$. 
\paragraph{Pseudo-Lipschitz Functions} We will state our results in terms of uniformly \textit{pseudo-Lipschitz} functions \citep{berthier_state-evolution_2019}. For $C > 0$ and $r \in [1, \infty)$, let $PL_{n, m, q}(r, C)$ be the set of functions $\phi: \reals^{n \times q} \to \reals^{m \times q}$ such that 
$$\frac{\|\phi(\x) - \phi(\tilde{\x})\|_F}{\sqrt{m}} \leq C\left(1 + \left(\frac{\|\x\|_F}{\sqrt{n}}\right)^{r - 1} + \left(\frac{\|\tilde{\x}\|_F}{\sqrt{n}}\right)^{r - 1}\right)\frac{\|\x - \tilde{\x} \|_F}{\sqrt{n}}, \quad \forall \x, \tilde{\x} \in \reals^{n\times q}.$$ 
Note that $\cup_{C > 0} PL_{n, m, q}(r_1, C) \subseteq \cup_{C > 0} PL_{n, m, q}(r_2, C)$ for any $1 \leq r_1 \leq r_2$. A function $\phi \in PL_{n, m, q}(r, C)$ is called pseudo-Lipschitz of order $r$. A family of pseudo-Lipschitz functions is said to be \textit{uniformly} pseudo-Lipschitz if all functions of the family are pseudo-Lipschitz with the same order $r$ and the same constant $C$.  For $\x, \y \in \reals^n$, the mean squared error $\phi(\x,\y)=\langle \x - \y, \x - \y \rangle/n$ and the normalized  squared correlation $\phi(\x, \y)=|\langle \x, \y \rangle|/n$ are examples of uniformly pseudo-Lipschitz functions. 

\section{AMP Algorithm and Main Results} \label{sec:main}
We stack the feature vectors to form the design matrix $\X = [\X_1, \dots, \X_n]^\top\in \reals^{n\times p}$. Similarly, we stack the observations and noise elements to form $\y:= [y_1, \dots, y_n]^\top \in\R^n$ and $\bvarepsilon := [\varepsilon_1, \dots, \varepsilon_n]^\top \in \reals^n$, respectively.  
Let $\eeta := [ \eta_1,\dots,  \eta_{L^*}, \dots,  \eta_{L}]$ be the vector containing the true change points, and let \[\B := [\bbeta^{(\eta_1)}, \dots, \bbeta^{(\eta_{L^*})}, \dots,  \bbeta^{(\eta_L)}] \in \reals^{p \times L}\] be the matrix containing the true signals. Since the algorithm assumes no knowledge of $L^*$, other than $L^*\le L$, the  columns $L^*+1, \ldots, L$ of $\bB$ can all be taken to be  zero. Similarly, $\eta_{L^*} = \eta_{L^*+1}=\dots =\eta_{L}=n$.

Recall that for $i\in [n]$, each observation $y_i$ is generated from model \eqref{eq:chgpt-model}. 
Let $\bPsi \in [L]^n$ be the \textit{signal configuration vector}, whose $i$-th entry stores the  index of the signal underlying observation $y_i.$ That is, for  $i \in [n]$ and $\ell \in [L]$, let
$\Psi_i = \ell$ if and only if $\bbeta^{(i)}$ equals the $\ell$th column of the signal matrix $\bB$. We note that there is a one-to-one correspondence between $\bPsi$ and the change point vector $\eeta$. With a slight abuse of notation, we can rewrite the response vector $\y$ in a more general form:
\begin{equation} 
\y =: q(\X \B, \bPsi, \bvarepsilon)  \in \reals^n, \label{eq:model_psi}
\end{equation}
where $q$ is the known function in model \eqref{eq:chgpt-model}, expanded to act row-wise on matrix inputs and to incorporate $\bPsi$, where $\left( q(\X \B, \bPsi, \bvarepsilon) \right)_i := q(\X \bbeta^{(\Psi_i)}, \varepsilon_i)$ for $i \in [n]$.  We note that the mixed linear regression model \citep{Yi14, zhang2022precise,tan_mixed_2023} can also be written in the form in \eqref{eq:model_psi}, with a crucial difference. In mixed regression,
the components of $\bPsi$ are assumed to be drawn independently from some distribution on $[L]$, that is, 
each $y_i$ is independently generated from one of the $L$ signals. In the change point setting, the entries of $\bPsi$ are \emph{dependent}, since they change value only at indices $\eta_1, \ldots, \eta_{(L^*-1)}$. 
\paragraph{AMP Algorithm} We now describe the AMP algorithm for estimating  $\B$ and $\eeta$. In each iteration $t \ge 1$, 
the algorithm produces an updated estimate of the signal matrix $\B$, which we call $\B^t$, and of the linearly transformed signal $\TTheta:=\X\B$, which we call $\TTheta^t$. 
These estimates have distributions that can be described by a deterministic  low-dimensional matrix recursion called \textit{state evolution}. In Section \ref{sec:chgpt_estimation}, we show how the estimate $\TTheta^t$ can be combined with $\y$ to infer $\eeta$ with precisely quantifiable error.

Starting with an initializer $\B^0\in\reals^{p\times L}$ and defining $\hat{\bR}^{-1}:=\0_{n\times L}, $ for $t \geq 0$ the algorithm computes:
\begin{align}
\begin{split}
\label{eq:amp}
    &\TTheta^{t} = \X \hat{\B}^t -  \hat{\bR}^{t-1} (\F^t)^\top\,, \; \; \; \hat{\bR}^t = g^t\left(\TTheta^t, \y\right) \, ,  \\
    &\B^{t+1} = \X^\top \hat{\bR}^t -  \hat{\B}^{t} (\C^t)^\top\,, \; \; \; \hat{\B}^{t} = f^{t}\left(\B^{t}\right), \, \\
\end{split}
\end{align}
where the denoising functions $g^t: \R^{n\times L}\times \R^{n}\to \R^{n\times L}$ and $f^t: \R^{p\times L}\to \R^{p\times L}$ are used to define the matrices $\F^t$, $\C^t$ as follows:
\begin{align*}
    \C^t &= \frac{1}{n} \sum_{i=1}^n \partial_i{g^t_i}\left(\TTheta^t, \y\right)\, , \; \; \; \F^{t} = \frac{1}{n} \sum_{j=1}^p \d_j{f_j^{t}}(\B^{t}).
\end{align*}
Here $\partial_i{g^t_i}\left(\TTheta, \y\right)$ is the $L\times L$ Jacobian  of $g^t_i$ w.r.t. the $i$th row of $\TTheta$.
Similarly, $\d_j{f_j^{t}}(\B^{t})$ is the $L \times L$ Jacobian of $f_j^t$ with respect to the $j$-th row of its argument. The time complexity of each iteration in \eqref{eq:amp} is $O(npL + r_n)$, where $r_n$ is the time complexity of computing $f^t, g^t$. 

Crucially, the denoising functions $g^t$ and $f^t$ are {not restricted} to being separable. (Recall that a separable function acts row-wise identically on its input, that is,  $g : \reals^{m \times L} \to \reals^{m \times L}$ is separable if for all $\U \in \reals^{m \times L}$ and $i \neq j$, we have $[g(\U)]_i = g_i(\U_i) = g_j(\U_i)$.) 
Hence, in general, we have 
\begin{equation*}
g^t(\TTheta, \y) = \begin{bmatrix}
g^t_1(\TTheta, \y) \\ \vdots \\g^t_n(\TTheta, \y)
\end{bmatrix}
, \quad
f^t(\B) = \begin{bmatrix}
f_1^t(\B) \\ \vdots \\ f_n^t(\B) 
\end{bmatrix}.
\end{equation*}
Our non-separable approach is required to handle the heterogeneous dependence structure created by the change points. For example, if there was one change point uniformly distributed in  $[n]$, then $g^t(\TTheta, \y)$ should take into account that $y_i$ is more likely to have come from the the first signal for $i$ close to $1$, and from the second signal for $i$ close to $n$. This is in contrast to existing AMP algorithms for mixed regression \citep{tan_mixed_2023}, where  under standard model assumptions, it suffices to consider separable denoisers.

\paragraph{State Evolution}
The memory terms $-\hat{\bR}^{t-1} (\F^t)^\top$ and $-\hat{\B}^{t} (\C^t)^\top$ in our AMP algorithm \eqref{eq:amp} debias the iterates $\TTheta^t$ and $\B^{t+1}$, and enable a succinct distributional characterization.  In the high-dimensional limit as $n,p\to\infty$ (with $n/p \to \delta$),  the empirical distributions of $\TTheta^t$ and $\B^{t+1}$ are quantified through the random variables $\V_{\TTheta}^t$ and $\V_{\B}^{t+1}$ respectively, where 
\begin{align}
&\V_{\TTheta}^t := \Z \brho^{-1} \bnu^t_{\TTheta} + \G^t_{\TTheta} \,  \in \reals^{n \times L}, \label{eq:V_TTheta}\\ 
&\V_{\B}^{t+1} := \B \bnu^{t+1}_{\B} + \G^{t+1}_{\B} \, \in \reals^{p \times L}. \label{eq:V_B}
\end{align}
The matrices $\brho, \bnu_{\TTheta}^t, \bnu_{\B}^{t+1} \in \reals^{L \times L}$ are deterministic and defined below. The random matrices $\Z$, $\bG_{\TTheta}^t$, and $\bG_{\B}^{t+1} $ are independent of $\X$, and have i.i.d. rows following a Gaussian distribution. Namely, for $i \in [n]$ we have $Z_i \distas{i.i.d} \N(\0, \brho)$. For $i \in [n]$ and $s, r \geq 0$, $(\G_{\TTheta}^t)_i \distas{i.i.d} \N(\0, \bkappa_{\TTheta}^{t, t})$ with $\cov((\G_{\TTheta}^r)_i, (\G_{\TTheta}^s)_i) = \bkappa_{\TTheta}^{r, s}$. Similarly, for $j \in [p]$, $(\G_{\B}^t)_j \distas{i.i.d} \N(\0, \bkappa_{\B}^{t, t})$ with $\cov((\G_{\B}^r)_j, (\G_{\B}^s)_j) = \bkappa_{\B}^{r, s}$. The $L\times L$ deterministic matrices $\bnu_{\TTheta}^t, \bkappa_{\TTheta}^{r,s}, \bnu_{\B}^{t},$ and $\bkappa_{\B}^{r,s}$ are defined below via the \emph{state evolution} recursion.

Given the initializer $\B^0$ for our algorithm \eqref{eq:amp}, we initialize state evolution by setting $\bnu_{\TTheta}^0 := \0$, and
\begin{align}
    \brho := \frac{1}{\delta}  \lim_{p \to \infty} \frac{1}{p} \B^\top \B, \qquad  \bkappa^{0, 0}_{\TTheta} := \frac{1}{\delta}  \lim_{p \to \infty} \frac{1}{p} f^0({\B}^0)^\top f^0({\B}^0). \label{eq:kappa_theta_0} 
\end{align}
 Let $\tilde{g}_i^t(\Z, \V_{\TTheta}^t, \bPsi, \bvarepsilon) := g_i^t(\V_{\TTheta}^t, q(\Z, \bPsi, \bvarepsilon))$ and let $\partial_{1i} \tilde{g}_i^t$ be the partial derivative (Jacobian) w.r.t. the $i$th row of the first argument.  Then, the state evolution matrices are defined recursively as follows:
\begin{align}
&{\bnu}_{\B}^{t+1} := \lim_{n \to \infty} \frac{1}{n} \E\left[\sum_{i=1}^n \partial_{1i} \tilde{g}_i^t(\Z, \V_{\TTheta}^t, \bPsi, \bvarepsilon) \right], \label{eq:nu_B_SE}\\
&\bkappa_{\B}^{s+1, t+1} := \lim_{n \to \infty} \frac{1}{n} \E\left[g^s\left( \V_{\TTheta}^s, q(\Z, \bPsi, \bvarepsilon)\right)^\top g^t\left(\V_{\TTheta}^t, q(\Z, \bPsi, \bvarepsilon)\right)  \right], \label{eq:kappa_B_SE} \\
&\bnu^{t+1}_{\TTheta} := \frac{1}{\delta} \lim_{p \to \infty} \frac{1}{p} \E\left[\B^\top f^{t+1}(\V_{\B}^{t+1}) \right], \label{eq:nu_theta_SE}\\
&\bkappa_{\TTheta}^{s+1, t+1} := \frac{1}{\delta} \lim_{p \to \infty} \frac{1}{p} \E\left[\left(f^{s+1}(\V_{\B}^{s+1}) - \B\brho^{-1} \bnu^{s+1}_{\TTheta}\right)^\top \left(f^{t+1}(\V_{\B}^{t+1}) - \B\brho^{-1} \bnu^{t+1}_{\TTheta} \right) \right]. \label{eq:kappa_theta_SE}
\end{align}
The expectations above are taken with respect to $ \Z, \V_{\TTheta}^t, \V_{\TTheta}^s, \V_{\B}^{t+1}$ and $\V_{\B}^{s+1}$, and depend on  $g^t$, $f^t$, $\B$,  
$\bvarepsilon$, and $\bPsi$. 
The limits in \eqref{eq:nu_B_SE}--\eqref{eq:kappa_theta_SE} exist under suitable regularity conditions on $f_t, g_t$,  and on the limiting empirical distributions on $\B, \bvarepsilon$; see  Appendix \ref{app:SE_limits}.
The parametric dependence on $\B$, $\bvarepsilon$ can also be removed under these conditions---this is discussed in the next subsection.
\subsection{State Evolution Characterization of AMP Iterates}\label{sec:master_theorem}

Recall that the matrices \eqref{eq:kappa_theta_0}--\eqref{eq:kappa_theta_SE} are used to define the random variables $(\V_{\TTheta}^t, \V_{\B}^{t+1})$ in \eqref{eq:V_TTheta},\eqref{eq:V_B}. Through these quantities, we now give a precise characterization of the AMP iterates $(\TTheta^t$, $\B^{t+1})$ in the high-dimensional limit. Theorem \ref{thm:SE} below shows that any real-valued pseudo-Lipschitz function of $(\TTheta^t$, $\B^{t+1})$ converges in probability to its expectation under the limiting random variables $(\V_{\TTheta}^t, \V_{\B}^{t+1})$. 
In addition to the model assumptions in Section \ref{sec:prelim}, we make the following assumptions:
\begin{enumerate}[font={\bfseries},label={(A\arabic*)}]
\item \label{it:main-res-ass-1} The following limits exist and are finite almost surely: 
$ \lim_{p \to \infty} \|\B^0\|_F/\sqrt{p}$, \\ $\lim_{p \to \infty} \|\B^\top \B\|_{F}/p$,  and $\lim_{n \to \infty} \|\bvarepsilon\|_2/\sqrt{n}$.
\item \label{it:main-res-ass-2} For each $t \geq 0$, let $\tilde{g}^t: (\u, \z) \mapsto g^t(\u, q(\z, \bPsi, \bvarepsilon))$, where $(\u, \z) \in \reals^{n \times L} \times \reals^{n \times L} $. For each $i \in [n], \; j \in [p],$ the following functions are uniformly pseudo-Lipschitz: $f^t, \tilde{g}^t, \d_j f_j^t, \partial_{1i} \tilde{g}_i^{t}$. 
\item \label{it:main-res-ass-3} For $s, t \geq 0$, the limits in \eqref{eq:kappa_theta_0}--\eqref{eq:kappa_theta_SE} exist and are finite.
\end{enumerate}
Assumptions $\ref{it:main-res-ass-1} - \ref{it:main-res-ass-3}$ are natural extensions of those required for classical AMP results in settings with separable denoising functions. They are  similar to those required by existing works on non-separable AMP  \cite{berthier_state-evolution_2019, gerbelot_graph-based_2023}, and generalize these to the model \eqref{eq:model_psi}, with a matrix signal $\bB$ and an auxiliary vector  $\bPsi \in [L]^n$. 

\begin{theorem} \label{thm:SE}
Consider the model \eqref{eq:model_psi} and the AMP algorithm in \eqref{eq:amp}, with the model assumptions in Section \ref{sec:prelim} as well as $\ref{it:main-res-ass-1}-\ref{it:main-res-ass-3}$. Then for $t \geq 0$ and any sequence of uniformly pseudo-Lipschitz functions $\bphi_{p}(\cdot \; ; \B) : \reals^{p \times (L(t+1))} \to \reals$ and $\bphi_{n}(\cdot \;; \bPsi, \bvarepsilon) : \reals^{n \times (L(t+2))} \to \reals$,
\begin{align}
&\bphi_n(\TTheta^0, \dots, \TTheta^t, \X \B ; \bPsi, \bvarepsilon) \stackrel{\P}{\simeq} \E_{\V_{\TTheta}^0, \dots, \V_{\TTheta}^t, \Z} \{ \bphi_n(\V_{\TTheta}^0, \dots, \V_{\TTheta}^t, \Z ; \bPsi, \bvarepsilon)\} \label{eq:SE_Theta}, \\
&\bphi_p(\B^1, \dots, \B^{t+1} ; \B) \stackrel{\P}{\simeq} \E_{\V_{\B}^1, \dots, \V_{\B}^{t+1}} \{ \bphi_p(\V_{\B}^1, \dots, \V_{\B}^{t+1} ; \B) \} \label{eq:SE_B},
\end{align}
as $n, p \to \infty$ with $n/p \to \delta$, where the random variables $\Z, \V_{\TTheta}^t$ and $\V_{\B}^{t+1}$ are defined in \eqref{eq:V_TTheta}, \eqref{eq:V_B}.
\end{theorem}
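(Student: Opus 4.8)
The plan is to recognize the recursion \eqref{eq:amp} as an instance of an \emph{abstract} AMP iteration with matrix-valued, non-separable denoisers, and then to invoke the state-evolution master theorems for such iterations due to \citet{berthier_state-evolution_2019} and \citet{gerbelot_graph-based_2023}, after (a) verifying that their hypotheses are met and (b) matching their abstract recursion, term by term, to the explicit one in \eqref{eq:kappa_theta_0}--\eqref{eq:kappa_theta_SE}.

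First I would put \eqref{eq:amp} in canonical form. The iteration alternates multiplication by $\X$ and by $\X^\top$, so it carries the familiar bipartite structure: a ``$\TTheta$-side'' in $\reals^{n\times L}$ and a ``$\B$-side'' in $\reals^{p\times L}$, linked through the i.i.d.\ Gaussian $\X$ with entries of variance $1/n$. This is handled in the standard way --- either by the symmetrization trick (embedding $\X$ in a symmetric $(n+p)\times(n+p)$ Gaussian matrix and reading off the relevant blocks) or, more directly, by instantiating the graph-based formalism of \citet{gerbelot_graph-based_2023} on a bipartite graph with two node types, which accommodates exactly memory terms of the shape $-\hat{\bR}^{t-1}(\F^t)^\top$ and $-\hat{\B}^{t}(\C^t)^\top$ with right-multiplication by the $L\times L$ Onsager matrices. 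The vector $\y$ --- equivalently the pair $(\bPsi,\bvarepsilon)$ --- enters only through $g^t$, so I would treat it as a fixed side-information sequence, using that the cited non-separable results permit denoisers depending on deterministic auxiliary inputs with well-defined limiting empirical laws; assumption \ref{it:main-res-ass-1} (on $\bvarepsilon$) together with the stipulated empirical distribution of $\bPsi$ supplies exactly this, and the true signal $\B$ is carried along as a further deterministic sequence with a well-defined limiting law. The $L$ signal columns are kept by viewing the iterates as matrix-valued, so the state-evolution parameters become the $L\times L$ matrices $\brho$, $\bnu^t_{\TTheta}$, $\bnu^t_{\B}$, $\bkappa^{r,s}_{\TTheta}$, $\bkappa^{r,s}_{\B}$.

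Next I would check that \ref{it:main-res-ass-1}--\ref{it:main-res-ass-3} provide precisely the hypotheses needed: \ref{it:main-res-ass-1} controls the initializer and the noise energy; \ref{it:main-res-ass-2} is the uniform pseudo-Lipschitz condition on the composed denoisers $\tilde{g}^t, f^t$ and their row-Jacobians $\partial_{1i}\tilde{g}_i^t$, $\d_j f^t_j$ required by the non-separable framework; \ref{it:main-res-ass-3} ensures the limits defining the recursion exist. The master theorem then yields, for every uniformly pseudo-Lipschitz test function, convergence of empirical averages of the iterates to expectations under jointly Gaussian ``effective'' variables, and the remainder is bookkeeping, done by induction on $t$: at each step one shows (i) the Onsager coefficients $\C^t, \F^t$ concentrate (i.e., converge in probability) on the deterministic matrices that appear as $\bnu^{t+1}_{\B}, \bnu^{t+1}_{\TTheta}$ --- this is where the derivative terms enter, and where the $1/\delta$ factors and the covariance renormalization by $\brho^{-1}$ in \eqref{eq:V_TTheta} trace back to the $1/n$ variance of $\X$ (note $\X\B$ has row-covariance $\brho$); (ii) conditioned on the history, the fresh Gaussian contribution splits into a signal-aligned part ($\Z\brho^{-1}\bnu^t_{\TTheta}$ on the $\TTheta$-side, $\B\bnu^{t+1}_{\B}$ on the $\B$-side) plus an independent Gaussian with the stated cross-iteration covariances $\bkappa^{r,s}$, matching \eqref{eq:V_TTheta}--\eqref{eq:V_B}; (iii) the memory (Onsager) terms are exactly what decorrelate the new Gaussian from the earlier ones. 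Substituting these identifications into the conclusion of the master theorem gives \eqref{eq:SE_Theta} and \eqref{eq:SE_B}.

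The main obstacle is the reduction itself: verifying that the precise form of \eqref{eq:amp} --- a matrix signal $\B\in\reals^{p\times L}$, the bipartite structure, right-multiplicative $L\times L$ memory matrices, and denoisers that are simultaneously non-separable and dependent on the auxiliary sequence $\bPsi$ --- is genuinely covered by a suitable specialization of the graph-based / non-separable master theorems, and then carefully tracking the $L\times L$ matrix algebra and the $\delta$-scalings so that the abstract recursion reproduces \eqref{eq:kappa_theta_0}--\eqref{eq:kappa_theta_SE} term by term. A secondary technical point is well-posedness of the limits in \eqref{eq:nu_B_SE}--\eqref{eq:kappa_theta_SE} and the asymptotic irrelevance of the particular realizations of $\B, \bvarepsilon$, which requires these sequences to converge in the relevant Wasserstein-type sense; the details of this I would defer to the appendix, consistent with Appendix~\ref{app:SE_limits}.
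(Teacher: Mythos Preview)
Your strategy is essentially the paper's: reduce \eqref{eq:amp} to the symmetric non-separable AMP of \citet{gerbelot_graph-based_2023} (the paper uses the symmetrization trick you mention, embedding $\X$ in a $(n+p)\times(n+p)$ GOE matrix via the Javanmard--Montanari alternating construction), and then match the resulting abstract state evolution to \eqref{eq:kappa_theta_0}--\eqref{eq:kappa_theta_SE} term by term.

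One point deserves more care than you give it. You write that ``$\y$ --- equivalently the pair $(\bPsi,\bvarepsilon)$ --- enters only through $g^t$, so I would treat it as a fixed side-information sequence.'' But $\y$ is \emph{not} equivalent to $(\bPsi,\bvarepsilon)$: it also depends on $\X$ through $\X\B$, so it cannot be treated as side information independent of the sensing matrix. What is deterministic is only $\bXi=(\bPsi,\bvarepsilon)$, while $\y=q(\X\B,\bPsi,\bvarepsilon)$ must be handled via the GLM mechanism $\varphi(\A\W_0,\bXi)$ in the symmetric picture. The off-the-shelf Lemma~14 of \citet{gerbelot_graph-based_2023} takes $\varphi(\A\W_0)$ without an auxiliary $\bXi$, and its test functions $\Phi_N$ do not take $\A\W_0$, $\W_0$, or $\bXi$ as arguments. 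The paper therefore first states and proves a mild extension of that lemma allowing both of these (their Lemma in Appendix~\ref{appendix:SEproof}); this is what lets the conclusion \eqref{eq:SE_Theta} include $\y$ and $\bPsi$ on the left and $q(\Z,\bPsi,\bvarepsilon)$ on the right. The extension is short --- it rides on the uniform pseudo-Lipschitz hypothesis on $\tilde{g}^t$ in \ref{it:main-res-ass-2} and the boundedness of $\|\A\W_0\|_F/\sqrt{N}$ --- but it is not automatic, and your sketch should flag it rather than assume it is already covered.
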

The proof of the theorem is given in Appendix \ref{appendix:SEproof}.  It involves reducing the AMP in \eqref{eq:amp} to a variant of the symmetric AMP iteration analyzed in \citep[Lemma 14]{gerbelot_graph-based_2023}. We use a generalized version of their iteration which allows for the auxiliary quantities $\X \B, \bPsi, \bvarepsilon$ to be included. Theorem \ref{thm:SE} implies that any pseudo-Lipschitz function $\bphi_n(\TTheta^t, \X\B ; \bPsi, \bvarepsilon) := \tilde{\bphi}_n(\TTheta^t, q(\X \B, \bPsi, \bvarepsilon)) = \tilde{\bphi}_n(\TTheta^t, \y)$ will converge in probability to a quantity involving an expectation over $(\V_{\TTheta}^t, \Z)$.  An analogous statement holds for $(\B^{t+1}, \B)$. 
\paragraph{Evaluating Performance Metrics} Using Theorem \ref{thm:SE}, we can evaluate performance metrics such as the mean squared error
%and the normalized squared correlation 
between the signal matrix $\B$ and the estimate $\hat{\B}^t = f^t(\B^t)$. Taking $\varphi_p(\B^t, \B) = \|f^t(\B^t) - \B\|^2_F / p$ leads to $\|f^t(\B^t) - \B\|^2_F / p \stackrel{\P}\simeq \E[\|f^t(\V_{\B}^t) - \B\|^2_F / p]$, where the limiting value of the RHS (as $p \to \infty$) can be precisely computed  under suitable assumptions. In Section \ref{sec:chgpt_estimation}, we will choose $\varphi_n$ to capture metrics of interest for estimating change points.
\paragraph{Special Cases}
Theorem \ref{thm:SE} recovers two known special cases of (separable) AMP results, with complete convergence replaced by convergence in probability: linear regression when $L = 1$ \citep{feng_unifying_2022}, and mixed linear regression where $L >1$ and the empirical distributions of the rows of $\B, \bPsi, \bvarepsilon$ converge weakly to laws of well-defined random variables \citep{tan_mixed_2023}.  
\paragraph{Parametric Dependence of State Evolution on $\B$, $\bvarepsilon$, $\bPsi$}
The parametric dependence of the state evolution parameters in \eqref{eq:nu_B_SE}-\eqref{eq:kappa_theta_SE} on 
$\B, \bvarepsilon$ can be removed under reasonable assumptions.   A standard assumption in the AMP literature \citep{feng_unifying_2022} is: % the following:
\begin{enumerate}[font={\bfseries},label={(S0)}]
\item \label{it:simp-ass-1} As $n, p \to \infty$, the empirical distributions of $\{\B_j \}_{j \in [p]}$  and $\{\varepsilon_i\}_{i \in [n]}$ converge weakly to laws $\P_{\bar{\B}}$  and  $\P_{\bar{\varepsilon}}$, respectively,   with bounded second moments. 
\end{enumerate}
In  Appendix \ref{app:SE_limits}, we give
conditions on $f_t, g_t$, which together with  \ref{it:simp-ass-1}, allow 
the state evolution equations to be simplified and written in terms of $\bar{\bB} \sim \P_{\bar{\B}}$ and $\bar{\varepsilon} \sim \P_{\bar{\varepsilon}}$ instead of $(\B, \bvarepsilon)$.      
We believe that the parametric dependence of the state evolution matrices on the signal configuration vector $\bPsi$ is fundamental. Since  the entries of $\bPsi$ change value only at a finite number of  change points, the state evolution parameters will depend on the limiting fractional values of these change points; see \ref{it:simp-ass-2} in Appendix \ref{app:SE_limits}. This is also consistent with recent change point regression literature, where the limiting distribution of the change point estimators in \citep{xu_change_2022} is shown to be  a function of the data generating mechanism.
\subsection{Choosing the Denoising Functions $f^t, g^t$} \label{sec:opt_denoisers}
The  performance of the AMP algorithm \eqref{eq:amp}  is determined by the functions $\{f^{t+1}, g^t\}_{t \geq 0}$. We  now describe how these functions can be chosen based on the state evolution recursion  to maximize estimation performance.  Using $(\V_{\TTheta}^t, \V_{\B}^{t+1})$ defined in \eqref{eq:V_TTheta}--\eqref{eq:V_B}, we define the  matrices
\begin{align}
    &\tilde{\Z}^t: = \V_{\TTheta}^t (\brho^{-1} \bnu_{\TTheta}^t)^{-1} = \Z + \G_{\TTheta}^t(\brho^{-1} \bnu_{\TTheta}^t)^{-1}, \label{eq:Z_tilde} \\
    &\tilde{\B}^{t+1}:= \V_{\B}^{t+1}(\bnu_{\B}^{t+1})^{-1} = \B + \G_{\B}^{t+1} (\bnu_{\B}^{t+1})^{-1}, \label{eq:B_tilde}
\end{align}
where for $i \in [n], j \in [p]$ we have $\Z_i \distas{i.i.d} \N(\bzero, \brho)$, $\G_{\TTheta, i}^t \distas{i.i.d} \N(\0, \bkappa_{\TTheta}^{t, t})$, and $\G_{\B, j}^t \distas{i.i.d} \N(\0, \bkappa_{\B}^{t, t})$. (If the inverse doesn't exist we post-multiply by the pseudo-inverse). A  natural objective is to minimize the trace of the covariance of the ``noise'' matrices in \eqref{eq:Z_tilde} and \eqref{eq:B_tilde}, given by
\begin{align}
&\trace\Bigg( \frac{1}{n}\sum_{i=1}^n\cov\left(\tilde{\Z}_i^t-\Z_i\right) \Bigg) = \trace\left( \left((\bnu_{\TTheta}^t)^{-1}\brho\right)^\top\bkappa_{\TTheta}^{t,t} (\bnu_{\TTheta}^t)^{-1}\brho\right) , \label{eq:trace_theta}\\
&\trace\Bigg( \frac{1}{p}\sum_{j=1}^p\cov
\left(\tilde{\B}_i^{t+1}-\B_i\right) \Bigg) = \trace\left([(\bnu_{\B}^{t+1})^{-1}]^\top\bkappa_{\B}^{t+1,t+1}(\bnu_{\B}^{t+1})^{-1}\right), \label{eq:trace_B}
\end{align}
where we recall from \eqref{eq:nu_B_SE}--\eqref{eq:kappa_theta_SE} that $\bnu_{\TTheta}^{t}, \bkappa_{\TTheta}^{t, t}$ are defined by $f^{t}$, and $\bnu_{\TTheta}^{t+1}, \bkappa_{\TTheta}^{t+1, t+1}$ by $g^t$. 

For $t \ge 1$, we would like to iteratively choose the denoisers $f^t, g^t$ to minimize the quantities on the RHS of \eqref{eq:trace_theta} and \eqref{eq:trace_B}. However, as discussed above $\bnu_{\TTheta}^{t}, \bkappa_{\TTheta}^{t, t}, \bnu_{\TTheta}^{t+1}, \bkappa_{\TTheta}^{t+1, t+1}$ depend on the unknown $\bPsi$, so any denoiser construction based quantities  cannot be executed in practice.

\paragraph{Ensemble State Evolution} 
To remove the parametric dependence on $\bPsi$ in the state evolution equations, we can postulate randomness over this variable and take expectations accordingly. Indeed, assume that \ref{it:simp-ass-1} holds, and postulate a prior distribution $\pi_{\bar{\bPsi}}$ over $[L]^n$.   %In the limit, this is equivalent to postulating a prior $\pi_{\alpha}$ on the fractional change points $\{ \alpha_1, \ldots, \alpha_{L^*-1}\}$. 
For example, $\pi_{\bar{\bPsi}}$ may be the uniform distribution over all the signal configuration vectors with two change points that are at least $n/10$ apart.  We emphasize that our theoretical results do not assume that the true  $\bPsi$ is  drawn according to $\pi_{\bar{\bPsi}}$. Rather,  the prior $\pi_{\bar{\bPsi}}$ allows us to encode any  knowledge we may have about the change point locations, and  use it to define efficient AMP  denoisers $f^t, g^t$. This is done via the following 
 \textit{ensemble state evolution} recursion, defined in terms of the independent random variables $\bar{\bPsi} \distas{} \P_{\bar{\bPsi}}$, $\bar{\bB} \distas{} \P_{\bar{\bB}} $, and $\bar{\varepsilon} \distas{} \P_{\bar{\varepsilon}}$.
 
Starting with initialization  $\bar{\bnu}_{\TTheta}^0 := \0$, $\bar{\bkappa}^{0, 0}_{\TTheta} := \lim_{p \to \infty}  \frac{1}{\delta p} f^0({\B}^0)^\top f^0({\B}^0)$, for $t \ge 0$ define:
\begin{align}
&\bar{\bnu}_{\B}^{t+1} := \lim_{n \to \infty} \frac{1}{n} \sum_{i=1}^n \E\left[\partial_{1} \tilde{g}_i^t(\Z_1, (\bar{\V}_{\TTheta}^t)_1, \bar{\Psi}_i, \bar{\varepsilon}) \right] \label{eq:nu_bar_B_SE} \\
&\bar{\bkappa}_{\B}^{t+1, t+1} := \lim_{n \to \infty} \frac{1}{n} \sum_{i=1}^n \E\left[g_i^t\left( (\bar{\V}_{\TTheta}^t)_1, q(\Z_1, \bar{\Psi}_i, \bar{\varepsilon})\right) g_i^t\left((\bar{\V}_{\TTheta}^t)_1, q(\Z_1, \bar{\Psi}_i, \bar{\varepsilon})\right)^\top  \right], \label{eq:kappa_bar_B_SE} \\
&\bar{\bnu}^{t+1}_{\TTheta} := \frac{1}{\delta} \lim_{p \to \infty} \frac{1}{p} \sum_{j = 1}^p \E\left[\bar{\B} f_j^{t+1}(\bar{\V}_{\B}^{t+1})^\top \right], \label{eq:nu_bar_theta_SE}\\
&\bar{\bkappa}_{\TTheta}^{t+1, t+1} := \frac{1}{\delta} \lim_{p \to \infty} \frac{1}{p} \sum_{j = 1}^p \E\left[\left(f_j^{t+1}(\bar{\V}_{\B}^{t+1}) - (\bar{\bnu}^{t+1}_{\TTheta})^\top \brho^{-1} \bar{\B} \right) \right. \nonumber \\
&\hspace{7cm} \left. \left(f_j^{t+1}(\bar{\V}_{\B}^{t+1}) -(\bar{\bnu}^{t+1}_{\TTheta})^\top \brho^{-1} \bar{\B} \right)^\top \right], \label{eq:kappa_bar_theta_SE}
\end{align}
where 
\begin{align}
&\bar{\V}_{\TTheta}^t := {\Z} \brho^{-1} \bar{\bnu}^t_{\TTheta} + \bar{\G}^t_{\TTheta} \in \reals^{n \times L}, \label{eq:V_bar_TTheta}\\ 
&\bar{\V}_{\B}^{t+1} := (\bar{\bnu}^{t+1}_{\B})^\top \bar{\B}  + (\bar{\G}^{t+1}_{\B})_1 \in \reals^{L}, \label{eq:V_bar_B}
\end{align}
and for $i \in [n], j \in [p]$ we have that $\Z_i \distas{i.i.d} \N(\bzero, \brho)$,  $\bar{\G}_{\TTheta, i}^t \distas{i.i.d} \N(\0, \bar{\bkappa}_{\TTheta}^{t, t})$ and  $\bar{\G}_{\B, j}^t \distas{i.i.d} \N(\0, \bar{\bkappa}_{\B}^{t, t})$.  

When $\pi_{\bar{\bPsi}}$ is a unit mass on the true configuration  $\bPsi$,  \eqref{eq:nu_bar_B_SE}--\eqref{eq:kappa_bar_theta_SE} reduce to the simplified state evolution  in Appendix \ref{app:SE_limits}. The limits in \eqref{eq:nu_bar_B_SE}--\eqref{eq:kappa_bar_theta_SE} exist under suitable regularity conditions on $f^t, g^t$, such as those in Appendix \ref{app:SE_limits}.

 We now propose a construction of $f^t, g^t$ based on minimizing the following alternative objectives to \eqref{eq:trace_theta}--\eqref{eq:trace_B}: 
\begin{align}
&\trace\left( \left((\bar{\bnu}_{\TTheta}^t)^{-1}\brho\right)^\top\bar{\bkappa}_{\TTheta}^{t,t} (\bar{\bnu}_{\TTheta}^t)^{-1}\brho\right), \label{eq:alternate_trace_theta} \\
&\trace\left([(\bar{\bnu}_{\B}^{t+1})^{-1}]^\top \bar{\bkappa}_{\B}^{t+1,t+1}(\bar{\bnu}_{\B}^{t+1})^{-1}\right), \label{eq:alternate_trace_B}
\end{align}
where the deterministic matrices $\bar{\bnu}_{\TTheta}^t$, $\bar{\bkappa}_{\TTheta}^{t,t}$, 
$\bar{\bnu}_{\B}^{t+1}$, $\bar{\bkappa}_{\B}^{t+1,t+1} \in \reals^{L \times L}$ are defined in \eqref{eq:nu_bar_B_SE}--\eqref{eq:kappa_bar_theta_SE}.

%
%The following proposition then outlines our denoiser construction. 
\begin{proposition}\label{prop:opt_ensemble_ft_gt}
   Assume the limits in \eqref{eq:nu_bar_B_SE}--\eqref{eq:kappa_bar_theta_SE} exist. Then, for $t\ge 1$:
    \begin{enumerate}
        \item \label{enum:ensemble_ft}
    Given $\bar{\bnu}_{\B}^t$, $\bar{\bkappa}_{\B}^{t,t}$, the quantity \eqref{eq:alternate_trace_theta} is minimized when 
\begin{align}\label{eq:opt_ensemble_ft_lem}
f_j^t(\U) = f_{j}^{*t}(\U) :=  \E[\bar{\B} | \bar{\V}_{\B}^t = \U_j],
\end{align}
for $\U \in \reals^{p \times L}, j \in [p]$.
\item \label{enum:ensemble_gt}
Given $\bar{\bnu}_{\TTheta}^t, \bar{\bkappa}_{\TTheta}^{t,t}$, the quantity \eqref{eq:alternate_trace_B} is minimized when \begin{align}
& g_i^t(\V, \u) = g_i^{*t} (\V, \u) := 
 \notag\\
 & \left[\cov\left(\Z_1 | (\bar{\V}_{\TTheta }^t)_1 = \V_i\right)\right]^{-1} \left(\E[\Z_1 | (\bar{\V}_{\TTheta}^t)_1  = \V_i, q(\Z_1, \bar{\Psi}_i, \bar{\varepsilon}) = u_i]  
 - \E[\Z_1 | (\bar{\V}_{\TTheta}^t)_1 = \V_i] \right),\label{eq:opt_ensemble_gt_lem}
\end{align}
for $\V \in \reals^{n \times L}, \u \in \reals^n, i \in [n]$. 
\end{enumerate}
\end{proposition}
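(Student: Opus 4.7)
The plan is to recognize that both trace objectives in \eqref{eq:alternate_trace_theta} and \eqref{eq:alternate_trace_B} can be rewritten as matrix mean-squared errors of effective estimators derived from the denoisers, that each objective is invariant under left-multiplication of the denoiser by an invertible matrix, and that the resulting constrained MMSE problem is solved by a conditional expectation via a standard orthogonality argument.

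For Part 1, I would set $F := f^t(\bar{\V}_B^t)$ and $A := (\bar{\bnu}_{\TTheta}^t)^\top \brho^{-1} = \frac{1}{\delta}\E[F\bar{\B}^\top]\brho^{-1}$. Using $\delta\brho = \E[\bar{\B}\bar{\B}^\top]$ (which follows from \ref{it:simp-ass-1}) in \eqref{eq:kappa_bar_theta_SE}, direct expansion gives $\bar{\bkappa}_{\TTheta}^{t,t} = \tfrac{1}{\delta}\E[FF^\top] - A\brho A^\top$, whereupon \eqref{eq:alternate_trace_theta} collapses to $\tfrac{1}{\delta}\,\E\|A^{-1}F - \bar{\B}\|^2$. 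Under the transformation $f^t \mapsto Mf^t$ with $M$ invertible, one has $F \mapsto MF$ and $A \mapsto MA$, so $A^{-1}F$ is invariant; the effective optimization variable is $\hat{\B}(v) := A^{-1}f^t(v)$, which automatically satisfies $\E[\hat{\B}(\bar{\V}_B^t)\bar{\B}^\top] = \E[\bar{\B}\bar{\B}^\top]$, and conversely any measurable estimator satisfying this moment constraint arises from some admissible $f^t$. A Lagrangian argument over such constrained estimators identifies the minimizers as $\hat{\B}(v) = M\,\E[\bar{\B}|\bar{\V}_B^t = v]$ for some invertible $M$; choosing the canonical representative $f_j^t(U) = \E[\bar{\B}|\bar{\V}_B^t = U_j]$ thus achieves the minimum.

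For Part 2, the same template applies, but the preliminary step is to use Gaussian integration by parts (Stein's lemma) to rewrite the Jacobian in \eqref{eq:nu_bar_B_SE} as a covariance. Since $Z_1$ and $(\bar{\V}_{\TTheta}^t)_1$ are jointly Gaussian with cross-covariance $\bar{\bnu}_{\TTheta}^t$, and $\tilde g^t_i$ depends on $Z_1$ only through the scalar $q(Z_1,\bar{\Psi}_i,\bar{\varepsilon})$, conditioning on $(\bar{\V}_{\TTheta}^t)_1$ and applying Stein yields an identity of the form $\E[\partial_1 \tilde g^t_i] = \cov(Z_1|(\bar{\V}_{\TTheta}^t)_1)^{-1}\,\E[G_i(Z_1 - \E[Z_1|(\bar{\V}_{\TTheta}^t)_1])^\top]$, where $G_i := g^t_i((\bar{\V}_{\TTheta}^t)_1, q(Z_1,\bar{\Psi}_i,\bar{\varepsilon}))$. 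The trace \eqref{eq:alternate_trace_B} then reduces, up to the analogous left-multiplication invariance in $g^t$, to the MSE of an effective estimator of $Z_1$ built from the augmented observation $((\bar{\V}_{\TTheta}^t)_1, q(Z_1,\bar{\Psi}_i,\bar{\varepsilon}))$. The constrained MMSE is attained (up to invertible left multiplication) by $\E[Z_1|(\bar{\V}_{\TTheta}^t)_1, q(Z_1,\bar{\Psi}_i,\bar{\varepsilon})] - \E[Z_1|(\bar{\V}_{\TTheta}^t)_1]$: the subtracted baseline is forced by the constraint to strip away the information already present in $(\bar{\V}_{\TTheta}^t)_1$, and the precision factor $\cov(Z_1|(\bar{\V}_{\TTheta}^t)_1)^{-1}$ is the canonical rescaling, recovering \eqref{eq:opt_ensemble_gt_lem}.

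The main obstacle is the algebraic reduction in Part 2: unraveling the nested SE definitions so that the trace objective reveals itself as a genuine MSE relies on a careful application of Stein's lemma together with the conditional Gaussian structure of $(Z_1, (\bar{\V}_{\TTheta}^t)_1)$, whose cross-covariance itself depends on the earlier denoisers $\{f^s\}_{s \leq t}$. Once this reduction is in place, verifying that the prescribed $g^{*t}$ of \eqref{eq:opt_ensemble_gt_lem} is a valid representative of the minimizing equivalence class --- i.e., that the baseline subtraction and precision rescaling exactly match those dictated by the invariance --- is a routine computation.
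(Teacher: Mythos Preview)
Your proposal is correct and arrives at the same conclusion, but the optimization step is argued differently from the paper. The paper's proof, for both parts, hinges on the \emph{matrix Cauchy--Schwarz inequality} (their Lemma~\ref{lem:cauchy}): after using the tower property (Part~1) or Stein's lemma (Part~2) to write $\bar{\bnu}_{\TTheta}^t=\tfrac{1}{\delta}\E[f^{*t}(f^t)^\top]$ and $\bar{\bnu}_{\B}^{t+1}=\lim\E[\tfrac{1}{n}(g^t)^\top g^{*t}]$, it applies Cauchy--Schwarz to the pairs $(f^{*t},f^t)$ and $(g^{*t},g^t)$ to obtain a PSD lower bound on the trace objective, and then checks that equality holds at $f^{*t},g^{*t}$. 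Your route instead recasts each trace objective as a constrained second-moment or MSE problem: you observe the invariance of \eqref{eq:alternate_trace_theta}--\eqref{eq:alternate_trace_B} under $f^t\mapsto Mf^t$ (resp.\ $g^t\mapsto Mg^t$), use it to pass to an effective estimator subject to a fixed cross-moment constraint, and then identify the minimizer via a Lagrangian/orthogonality argument. These are dual viewpoints on the same optimality---the Cauchy--Schwarz inequality is precisely the statement that the conditional mean is the constrained-MMSE minimizer---so neither is more general, but yours makes the estimator interpretation explicit while the paper's is more algebraic and follows the established AMP template.

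One point in your Part~2 sketch deserves tightening: after Stein's lemma, the objective \eqref{eq:alternate_trace_B} is literally $\E\|N^{-\top}G\|^2$ with $N=\bar{\bnu}_{\B}^{t+1}$ (a second moment, not an MSE), and it only becomes equivalent to the MSE $\E\|G-\tilde Z_1\|^2$ \emph{after} you exploit the invariance to impose a normalization such as $N=I$, which fixes the cross term $\E[G\tilde Z_1^\top]=\cov(Z_1\mid(\bar{\V}_{\TTheta}^t)_1)$. That step is implicit in your description but should be stated, since it is exactly what forces both the baseline subtraction and the precision prefactor in \eqref{eq:opt_ensemble_gt_lem}.
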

The proof of Proposition \ref{prop:opt_ensemble_ft_gt}, given in Appendix \ref{sec:optimal_ft_gt_proof}, is similar to the derivation of the optimal denoisers for mixed regression in \cite{tan_mixed_2023}, with a few key differences in the derivation of $g^{*t}$, which is not separable  in the change point setting. With a product distribution on $\pi_{\bar{\bPsi}}$, we recover  mixed regression and  \eqref{eq:opt_ensemble_ft_lem}--\eqref{eq:opt_ensemble_gt_lem} reduce to the optimal denoisers in \cite{tan_mixed_2023}

The denoiser $f^{*t}$ is separable and  can be easily computed for sufficiently regular distributions $\P_{\bar{\B}}$ such as discrete,  Gaussian, or Bernoulli-Gaussian  distributions (see Appendix C.2 of \cite{arpino2024inferring}). In Appendix \ref{sec:full_computation_gt} we show how $g^{*t}$ can be efficiently computed for the linear and rectified linear models with additive Gaussian noise, as well as for the logistic model. As detailed in Appendix \ref{sec:computational_cost}, $f^{*t}$ and $g^{*t}$ can be computed in $O(nL^3)$ time, yielding a total computational complexity of $O(npL^3)$ for AMP with these denoisers.

\subsection{Change Point Estimation and Inference} \label{sec:chgpt_estimation}
We now show how the AMP algorithm can be used for estimation and inference of the change points $\{\eta_1, \dots, \eta_{L^* - 1}\}$.  We first define some notation. Let $\mathcal{X} \subset [L]^n$ be the set of all piece-wise constant vectors with respect to $i \in [n]$ with at most $(L - 1)$ jumps. This set includes  all  possible instances of the signal configuration vector $\bPsi$ in $[L]^n$, and without loss of generality can account for duplicate signals by increasing $L$.  Let the function $U: \eeta \mapsto \bPsi$ denote the one-to-one mapping between change point vectors $\eeta$ and signal configuration vectors $\bPsi$. For a vector $\hat{\eeta}$, we let $|\hat{\eeta}|$ denote its dimension (number of elements). 
%\RV{$\bPsi$ doesn't need to be monotonic nondecreasing} \ga{True}

\paragraph{Change Point Estimation} Theorem \ref{thm:SE} states that $(\TTheta^t, \y)$ converges in a specific sense to the random variable $(\V_{\TTheta}^t, q(\Z, \bPsi, \bvarepsilon))$, whose distribution crucially captures information about $\bPsi$. Hence, it is natural to consider point estimators for $\eeta$ of the form $\hat{\eeta}(\TTheta^t, \y)$. In the linear model, for instance, one example is an estimator that searches for a signal configuration vector $\bpsi \in \mathcal{X}$ such that $\TTheta^t$ indexed along $\bpsi$ is closest in $\ell_2$ distance to the observed vector $\y$. That is,
\begin{align}
\hat{\bPsi}(\TTheta^t, \y)  &= \argmin_{\bpsi \in \mathcal{X}} \,  \sum_{i = 1}^n (y_i - (\TTheta^t)_{i, {\psi}_i})^2 \, , \label{eq:example_estimator}
\end{align}
 and $\hat{\eeta}(\TTheta^t, \y) = U^{-1}(\hat{\bPsi}(\TTheta^t, \y))$. In \eqref{eq:argmax_approx_posterior} we propose a general form for $\hat{\bPsi}$ that is suitable for a generic nonlinear model. A common metric for evaluating the accuracy of change point estimators   is the Hausdorff distance \citep{wang_high_2018, xu_change_2022, li_divide_2023}. The Hausdorff distance between two non-empty subsets $X,Y$  of $\reals$ is 
\[d_{H}(X, Y) = \max\Big\{ \sup_{x \in X} d(x, Y) , \,  \sup_{y \in Y} d(X, y) \Big\},  \]
where $d(x, Y) := \min_{y \in Y} \|x - y\|_2$. The Hausdorff distance is a metric, and can be viewed as the largest of all distances from a point in $X$ to its closest point in $Y$ and vice versa.
% the greatest distance between any two closest points in $X$ and $ Y$. 
We interpret the Hausdorff distance between $\eeta$ and an estimate $\hat{\eeta}$ as the Hausdorff distance between the sets formed by their elements. The following theorem states that any well-behaved estimator $\hat{\eeta}$ produced using the AMP iterate  $\TTheta^t$ admits a precise asymptotic characterization in terms of Hausdorff distance and size. 
\begin{proposition} \label{prop:hausdorff_asymptotics}
Consider the AMP in \eqref{eq:amp}. Suppose the model assumptions in Section \ref{sec:prelim} as well as $\ref{it:main-res-ass-1}-\ref{it:main-res-ass-3}$ are satisfied. Let $\hat{\eeta}(\TTheta^t, \y) = \hat{\eeta}(\TTheta^t, q(\X\B, \bPsi, \bvarepsilon))$ be an estimator such that $(\V, \z) \mapsto U(\hat{\eeta}(\V, q(\z, \bPsi, \bvarepsilon))$ is uniformly pseudo-Lipschitz. Then:
\begin{align}
    \frac{d_{H}(\eeta, \hat{\eeta}(\TTheta^t, \y))}{n} \stackrel{\P}\simeq \E_{\V_{\TTheta}^t, \Z} \frac{d_{H}(\eeta, \hat{\eeta}(\V_{\TTheta}^t, q(\Z, U(\eeta), \bvarepsilon)))}{n}.\label{eq:hausdorff_convergence}
\end{align}
Moreover, if $(\V, \z) \mapsto |\hat{\eeta}(\V, q(\z, \bPsi, \bvarepsilon))|$ is uniformly pseudo-Lipschitz, then: 
\begin{align}
    &|\hat{\eeta}(\TTheta^t, \y)| \stackrel{\P}\simeq \E_{\V_{\TTheta}^t, \Z} |\hat{\eeta}(\V_{\TTheta}^t, q(\Z, U(\eeta), \bvarepsilon))| .\label{eq:size_convergence}
\end{align} 
\end{proposition}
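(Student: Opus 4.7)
Both claims follow from applying Theorem \ref{thm:SE} with appropriately chosen uniformly pseudo-Lipschitz test functions $\bphi_n$; the main work is to verify the pseudo-Lipschitz property of these functions.

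For the size statement \eqref{eq:size_convergence}, the convention of the one-to-one map $U$ labels the blocks of any configuration $\bpsi \in \mathcal{X}$ by $1, 2, \dots, K+1$, where $K$ is its number of change points, so that $\psi_n = K + 1$. Applied to the true $\bPsi$ and to $\hat{\bPsi} := U(\hat{\eeta})$ this gives $L^* - 1 = \Psi_n - 1$ and $|\hat{\eeta}(\TTheta^t, \y)| = \hat{\Psi}_n(\TTheta^t, \y) - 1$, so the left-hand side of \eqref{eq:size_convergence} equals $|\Psi_n - \hat{\Psi}_n(\TTheta^t, \y)|$. Taking $\bphi_n(\TTheta^t, \y, \bPsi) := |\Psi_n - \hat{\Psi}_n(\TTheta^t, \y)|$, the hypothesis that $\hat{\Psi}_n$ is uniformly pseudo-Lipschitz together with the $1$-Lipschitzness of $x \mapsto |\Psi_n - x|$ yields that $\bphi_n$ is uniformly pseudo-Lipschitz, and Theorem \ref{thm:SE} immediately gives \eqref{eq:size_convergence}.

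For \eqref{eq:hausdorff_convergence}, I would define $\bphi_n(\TTheta^t, \y, \bPsi) := d_H(U^{-1}(\bPsi), \hat{\eeta}(\TTheta^t, \y))/n$. By the triangle inequality for the Hausdorff metric on sets,
\begin{equation*}
|\bphi_n(\TTheta, \y, \bPsi) - \bphi_n(\tilde{\TTheta}, \tilde{\y}, \bPsi)| \le \frac{d_H(\hat{\eeta}(\TTheta, \y), \hat{\eeta}(\tilde{\TTheta}, \tilde{\y}))}{n}.
\end{equation*}
The central combinatorial estimate I would prove is that for any $\bpsi, \bpsi' \in \mathcal{X}$, $d_H(U^{-1}(\bpsi), U^{-1}(\bpsi')) \le C_L \|\bpsi - \bpsi'\|_2^2$ for a constant $C_L$ depending only on $L$: each change point $\eta$ of $\bpsi$ at distance $d$ from the nearest change point of $\bpsi'$ forces $\bpsi'$ to be constant on an interval of width $\Omega(d)$ around $\eta$ on which $\bpsi$ jumps, producing at least $\Omega(d)$ coordinates of disagreement, each of which contributes at least $1$ to $\|\bpsi - \bpsi'\|_2^2$ by integrality. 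Combining this with the trivial bound $d_H \le n$ and the elementary inequality $\min(a, b) \le \sqrt{ab}$ yields $d_H(U^{-1}(\bpsi), U^{-1}(\bpsi'))/n \le \sqrt{C_L}\, \|\bpsi - \bpsi'\|_2/\sqrt{n}$. Instantiated with $(\bpsi, \bpsi') = (\hat{\bPsi}(\TTheta, \y), \hat{\bPsi}(\tilde{\TTheta}, \tilde{\y}))$ and combined with the assumed uniform pseudo-Lipschitz property of $\hat{\bPsi} = U(\hat{\eeta})$, this shows $\bphi_n$ is uniformly pseudo-Lipschitz in $(\TTheta^t, \y)$; the same estimate applied to the first argument handles variation in $\bPsi$. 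Theorem \ref{thm:SE} then delivers \eqref{eq:hausdorff_convergence}.

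The main obstacle is the combinatorial estimate $d_H(U^{-1}(\bpsi), U^{-1}(\bpsi')) \le C_L \|\bpsi - \bpsi'\|_2^2$. The argument is transparent when $\bpsi$ and $\bpsi'$ have equally many change points that can be paired in order; the technical care lies in handling configurations with differing numbers of change points (including the padding of $\eeta$ with trailing $n$'s) and in verifying that the constant $C_L$ is independent of $n$, which is what ultimately drives the uniform pseudo-Lipschitz constant of $\bphi_n$.
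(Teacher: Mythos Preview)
Your approach is essentially the same as the paper's: both parts reduce to Theorem \ref{thm:SE} after verifying uniform pseudo-Lipschitzness, via the reverse triangle inequality for $d_H$ and the key combinatorial bound $d_H(U^{-1}(\bpsi), U^{-1}(\bpsi')) \le C_L\|\bpsi-\bpsi'\|_2^2$ (the paper obtains it with $C_L=1$ under a monotonicity convention on $\mathcal{X}$, by exactly the ``interval of disagreement'' argument you sketch). The only cosmetic difference is in converting the quadratic bound to a linear one: the paper uses $\mathcal{X}\subseteq[L]^n$ to get $\|\bpsi-\bpsi'\|_F^2/n \le L\,\|\bpsi-\bpsi'\|_F/\sqrt{n}$ directly, whereas you combine $d_H\le n$ with $\min(a,b)\le\sqrt{ab}$; both routes yield the required $\|\cdot\|_F/\sqrt{n}$ control.
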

The proof is given in Appendix \ref{sec:proof_hausdorff}. To prove  \eqref{eq:hausdorff_convergence}, we  show that ${d_{H}(\eeta, \hat{\eeta}(\TTheta^t, \y))}/{n}$ is uniformly pseudo-Lipschitz. For each $\eeta$, Proposition \ref{prop:hausdorff_asymptotics} precisely characterizes the asymptotic Hausdorff distance and size errors for a large class of estimators $\hat{\eeta}(\TTheta^t, \y)$. 
\paragraph{Uncertainty Quantification} 
  The random variable ${\V}_{\TTheta}^t = \Z \brho^{-1} {\bnu}_{\TTheta}^t + {\G}^t_{\TTheta}$ in \eqref{eq:V_TTheta}, combined with an observation of the form $q(\Z, \bPsi, \bar{\bvarepsilon})$, yields a recipe for constructing a posterior distribution over $\bPsi$. Recalling that $\bPsi$ is the unknown ground-truth signal configuration, we let $\bar{\bPsi}$ denote a random signal configuration drawn according to density $\pi_{\bar{\bPsi}}$, and $\bpsi$  a realization of $\bar{\bPsi}$. Using the prior  
$\pi_{\bar{\bPsi}}$, the posterior is: 
\begin{align}
p_{\bar{\bPsi} | {\V}_{\TTheta}^t, q(\Z, \bar{\bPsi}, \bar{\bvarepsilon})}(\bpsi | \V, \u) &= \frac{  \pi_{\bar{\bPsi}}(\bpsi) \cL(\V, \u | \bpsi)}{\sum_{\tilde{\bpsi}} \pi_{\bar{\bPsi}}(\tilde{\bpsi}) \cL(\V, \u | \tilde{\bpsi})}, \label{eq:posterior_density_2}
\end{align}
where $\V \in \reals^{n \times L}$, $\u \in \reals^n$. Here, $\cL$ is the likelihood of $({\V}_{\TTheta}^t, q(\Z, \bar{\bPsi}, \bar{\bvarepsilon}))$ given $\bar{\bPsi} = \bpsi \in \mathcal{X}$, where the state evolution parameters $\bnu_{\TTheta}^t, \bkappa_{\TTheta}^t$ associated to $\V_{\TTheta}^t$ are computed as in \eqref{eq:nu_B_SE}--\eqref{eq:kappa_theta_SE} with $\bPsi$ replaced by $\bpsi$. The likelihood $\cL$ can be computed in closed form for the linear and rectified linear models under the assumption of additive and independent Gaussian noise; for the logistic model, the likelihood can be well approximated. The computations are given in Appendix \ref{sec:likelihood_computation}.

Since Theorem \ref{thm:SE} states that $(\TTheta^t, \y)$ converges in a specific sense to $(\V_{\TTheta}^t, q(\Z, \bPsi, \bvarepsilon))$, we can obtain an uncertainty estimate over $\bPsi$ by plugging in $(\TTheta^t, \y)$ for $(\V, \u)$ in \eqref{eq:posterior_density_2}. It then follows from our theory that this uncertainty estimate converges point-wise in probability to a faithful posterior distribution over the change points in the high-dimensional limit. 
\begin{proposition}\label{prop:pointwise_posterior}
Let $(\TTheta^t, \y)$ be as in \eqref{eq:amp}, $(\V_{\TTheta}^t, q(\Z, \bPsi, \bvarepsilon))$ be as in \eqref{eq:V_TTheta}, and $\bpsi \in \mathcal{X}$. Suppose the model assumptions in Section 2 as well as $(A1)$--$(A3)$ are satisfied. Assume that $(\V, \z) \mapsto p_{\bar{\bPsi} | {\V}_{\TTheta}^t, q(\Z, \bar{\bPsi}, \bar{\bvarepsilon})}(\bpsi | \V, q(\z, \bPsi, \bvarepsilon)) =: p(\bpsi | \V, q(\z, \bPsi, \bvarepsilon))$ is uniformly pseudo-Lipschitz. Then:   
\begin{align}
   p(\bpsi | \TTheta^t, \y) \stackrel{\P}{\simeq}  p(\bpsi | {\V}_{\TTheta}^t, q(\Z, \bPsi, \bvarepsilon)).\label{eq:posterior_convergence}
\end{align}
\end{proposition}
Given $\bpsi$ and ground-truth variables $\bPsi, \bvarepsilon$, the right-hand side of \eqref{eq:posterior_convergence} can be computed by sampling $({\V}_{\TTheta}^t, \Z)$, where the state evolution parameters $\bnu_{\TTheta}^t, \bkappa_{\TTheta}^t$ associated to $\V_{\TTheta}^t$ are computed exactly as in \eqref{eq:nu_B_SE}--\eqref{eq:kappa_theta_SE}, and plugging these into $p(\bpsi | \cdot , q( \cdot , \bPsi, \bvarepsilon))$. The proof, given in Appendix \ref{sec:proof_posterior}, is an application of Theorem \ref{thm:SE}. 
The state evolution predictions on the RHS of \eqref{eq:hausdorff_convergence} and \eqref{eq:posterior_convergence} can be computed under reasonable assumptions, as outlined in Appendix \ref{app:SE_limits}.

\section{Experiments} \label{sec:experiments}
In this section, we demonstrate the estimation and inference capabilities of the AMP algorithm in a range of settings, with both synthetic and real data. For synthetic data, for $i \in [n]$, we use i.i.d. Gaussian covariates $\X_i \distas{i.i.d} \N(0, \I_p/n)$, and we let $\bvarepsilon_i \distas{i.i.d} \P_{\bar{\varepsilon}} = \N(0, \sigma^2)$ in the linear and rectified linear models. The denoisers $\{g^{t}, f^{t + 1}\}_{t \geq 0}$ in the AMP algorithm  are chosen according to Proposition \ref{prop:opt_ensemble_ft_gt}. The computation of $f^t$ is standard  (outlined in Appendix C.2 of \cite{arpino2024inferring}) and the computation of $g^t$ is described in Appendix \ref{sec:full_computation_gt}.  The Jacobians of these denoisers are computed using Automatic Differentiation in Python JAX \citep{jax2018github}.  In all the experiments, we use  a uniform prior $\pi_{\bar{\Psi}}$ over all  configurations with change points at least $\Delta$ apart, for some $\Delta$ that is a fraction of $n$. Error bars represent one standard deviation, and all experiments are the result of at least $10$ independent trials with $t \leq 15$. Posterior densities are smoothed with a Gaussian kernel of lengthscale $0.5$ for interpretability.

For a given point estimator $\hat{\bPsi}(\TTheta^t, \y)$ of $\bPsi$, the iterate $\TTheta^t$ is obtained by running the AMP algorithm in \eqref{eq:amp} and we let $\hat{\eeta}(\TTheta^t, \y) := U^{-1}(\hat{\bPsi}(\TTheta^t, \y))$. As discussed in Section \ref{sec:chgpt_estimation}, the point estimation guarantees of Proposition \ref{prop:hausdorff_asymptotics} hold for a large class of sufficiently regular point estimators of $\bPsi$. For linear models we could use the point estimator in \eqref{eq:example_estimator}, and for nonlinear models,  the mean of the posterior distribution $p_{\bar{\bPsi} | \V_{\TTheta}^t, q(\Z, \bar{\bPsi}, \bar{\bvarepsilon})}(\bpsi | \TTheta^t, \y)$ defined in \eqref{eq:posterior_density_2}. The latter, although justified by the limiting posterior distribution in Proposition \ref{prop:pointwise_posterior}, involves tabulating the state evolution parameters $(\bnu_{\TTheta}^t, \bkappa_{\TTheta}^t)$ from \eqref{eq:nu_B_SE}--\eqref{eq:kappa_theta_SE} (with $\bPsi$ replaced by $\bpsi$), since these are required to compute the likelihood $\cL$ of $(\V_{\TTheta}^t, q(\Z, \bar{\bPsi}, \bar{\bvarepsilon}))$ given $\bar{\bPsi} = \bpsi$. In the following experiments, we  use an  alternative point estimate of $\bPsi$ that is computationally faster, by considering instead the likelihood $\bar{\cL}$ of $(\bar{\V}_{\TTheta}^t, q(\Z, \bar{\bPsi}, \bar{\bvarepsilon}))$ given $\bar{\bPsi} = \bpsi$. We recall that $\bar{\V}_{\TTheta}^t$ is defined in \eqref{eq:V_bar_TTheta} through the $\bpsi$-independent parameters $(\bar{\bnu}_{\TTheta}^t, \bar{\bkappa}_{\TTheta}^t)$. The point estimator of interest is then defined as follows: 
\begin{align} \label{eq:argmax_approx_posterior}
   \hat{\bPsi}(\TTheta^t, \y) = \argmin_{\bpsi \in \mathcal{X}} \, p_{\bar{\bPsi} | \bar{\V}_{\TTheta}^t, q(\Z, \bar{\bPsi}, \bar{\bvarepsilon})}(\bpsi | \TTheta^t, \y), 
\end{align}
where we call $p_{\bar{\bPsi} | \bar{\V}_{\TTheta}^t, q(\Z, \bar{\bPsi}, \bar{\bvarepsilon})}(\bpsi | \V, \u) := \frac{  \pi_{\bar{\bPsi}}(\bpsi) \bar{\cL}(\V, \u | \bpsi)}{\sum_{\tilde{\bpsi}} \pi_{\bar{\bPsi}}(\tilde{\bpsi}) \bar{\cL}(\V, \u | \tilde{\bpsi})}$ the \textit{approximate posterior}.  The explicit form of the likelihood $\bar{\cL}$ for each of the linear, logistic, and rectified linear models is presented in Appendix \ref{sec:likelihood_computation}. Full implementation details are provided in Appendix \ref{sec:further_implementation}. A Python implementation of our algorithm and code to run the experiments is available at \citep{arpino_impl_gen_24}.

\subsection{Linear Model with Change Points}

Figure \ref{fig:linear_estimation_size_SE} (left) plots the Hausdorff distance normalized by $n$  for varying $\delta$, for two different change point configurations $\bPsi$. We choose $p = 600$, $\P_{\bar{\B}} = \N(\0, \I)$, $\sigma = 0.1$, $\Delta = n/5$ and fix  two true change points, whose locations are indicated in the legend. The algorithm uses $L = L^*= 3$.
The state evolution prediction closely matches the performance of AMP, verifying \eqref{eq:hausdorff_convergence} in  Proposition \ref{prop:hausdorff_asymptotics}. 

The two right-most plots in Figure \ref{fig:linear_estimation_size_SE} display the approximate posterior over the \emph{number} of change points, that is, $\sum_{\bpsi\in \mathcal{S}}p_{\bar{\bPsi} | \bar{\V}_{\TTheta}^t, q(\Z, \bar{\bPsi}, \bar{\bvarepsilon})}(\bpsi |\TTheta^t, \y)$ where $\mathcal{S}$ contains all configurations with a specified number of change points. We  use $p=200, \P_{\bar{\B}}=N(\0,\I), \sigma=0.1, \Delta=n/10, L^*=3, w=4$, and a uniform prior over the number of change points (zero to three). We observe that the approximate posterior concentrates around the ground truth for moderately large $\delta$. 
\begin{figure}[t!]
\centering
\subfloat{%
% \begin{subfigure}[c]{.44\textwidth}
  \centering
  \includegraphics[width=0.415\linewidth]{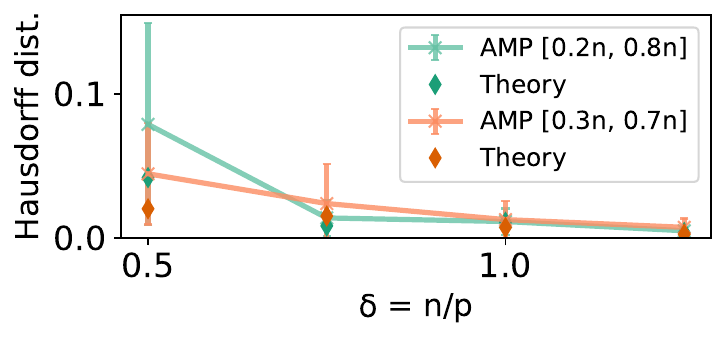}
% \end{subfigure}%
}
\subfloat{%
% \begin{subfigure}[c]{.56\textwidth}
  \centering
    \raisebox{4mm}{\includegraphics[height=0.168\linewidth]{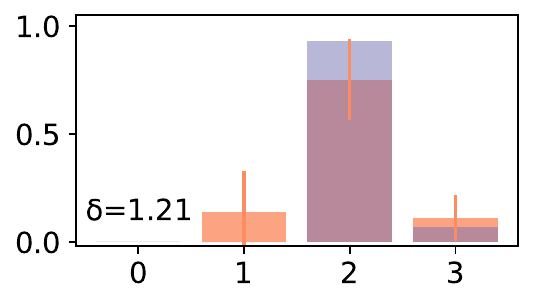}
    \includegraphics[height=0.168\linewidth]{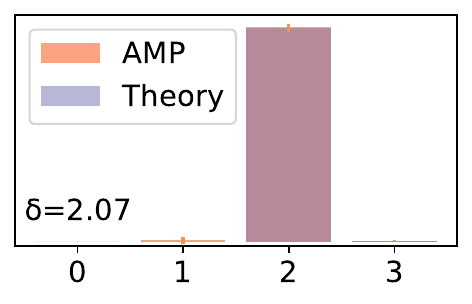}}
    % \raisebox{4mm}{\includegraphics[width=0.22\linewidth]{figures/posterior/icml_posterior_over_L_δ_1.21_L3_L4_p200.pdf}
    % \includegraphics[width=0.22\linewidth]{figures/posterior/icml_posterior_over_L_δ_2.07_L3_L4_p200.pdf}}
% \end{subfigure}
}
\caption{Performance of AMP for estimating change points in  the linear model under two ground-truth  configurations (left), approximate posterior density over the number of change points  (two plots on the right).}
\label{fig:linear_estimation_size_SE}
\end{figure}

Figure \ref{fig:exact_posterior} (left) plots the estimated posterior $p_{\bar{\bPsi} | \V_{\TTheta}^t, q(\Z, \bar{\bPsi}, \bar{\bvarepsilon})}(\cdot | \TTheta^t, \y)$ labelled `AMP', and the limiting \textit{exact} posterior $p_{\bar{\bPsi} | \V_{\TTheta}^t, q(\Z, \bar{\bPsi}, \bar{\bvarepsilon})}(\cdot | \V_{\TTheta}^t, q(\Z, \bPsi, {\bvarepsilon}))$ labelled `Theory', averaged over $30$ trials with true change point locations at  $n/3$ and $ 8n/15$. The posterior is computed over the grid $[0, 1]$ subsampled by a factor of ten. The experiment uses $p = 800, \P_{\bar{\B}} = \N(\0, \I), \sigma = 0.1, \Delta = n/5, L = L^* = 3$, with a uniform prior over all valid configurations. We observe that the estimated posterior closely matches the limiting posterior and concentrates around the ground truth as $\delta$ increases. 

\begin{figure}[t!]
\centering
\subfloat{%
% \begin{subfigure}{.33\textwidth}
  \centering
  \includegraphics[width=0.32\linewidth]{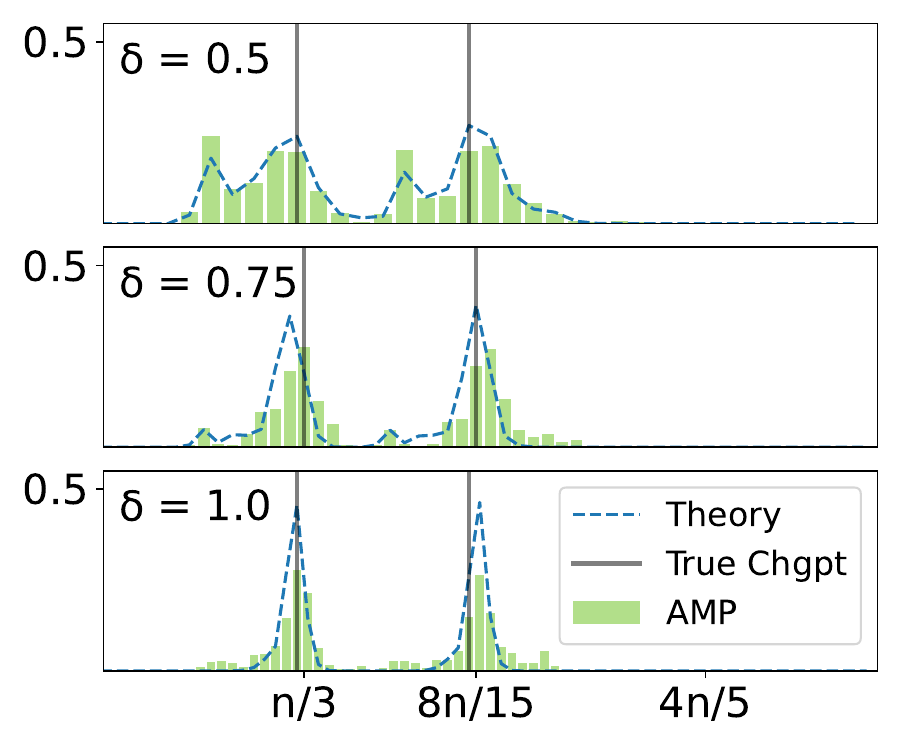}
% \end{subfigure}%
}
\subfloat{%
% \begin{subfigure}{.33\textwidth}
  \centering
  \includegraphics[width=0.32\linewidth]{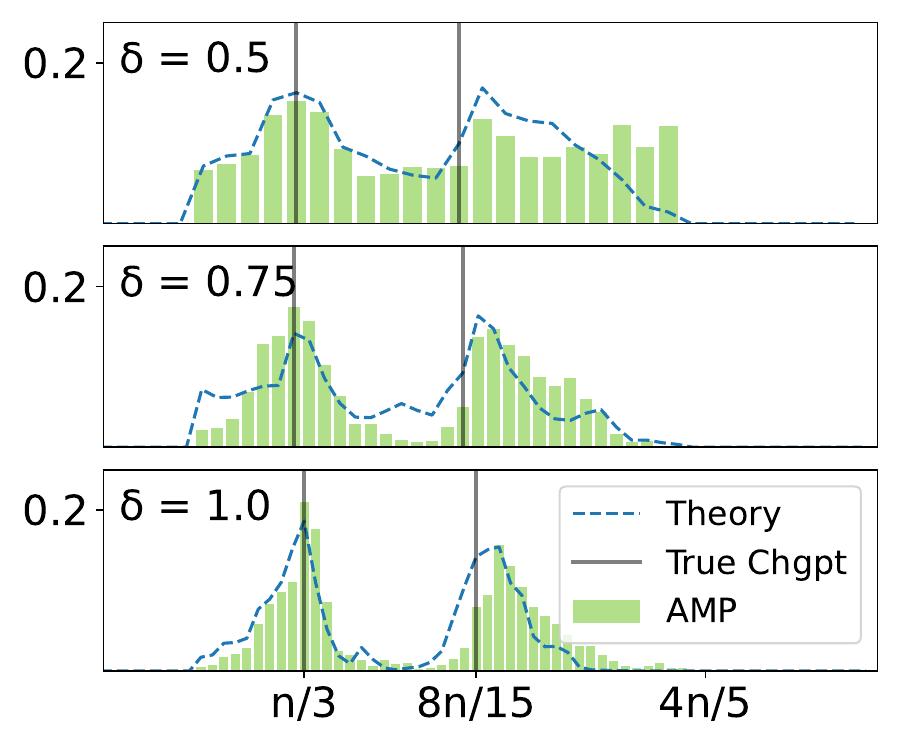}
% \end{subfigure}
}
\subfloat{%
% \begin{subfigure}{.33\textwidth}
  \centering
  \includegraphics[width = 0.32\linewidth]{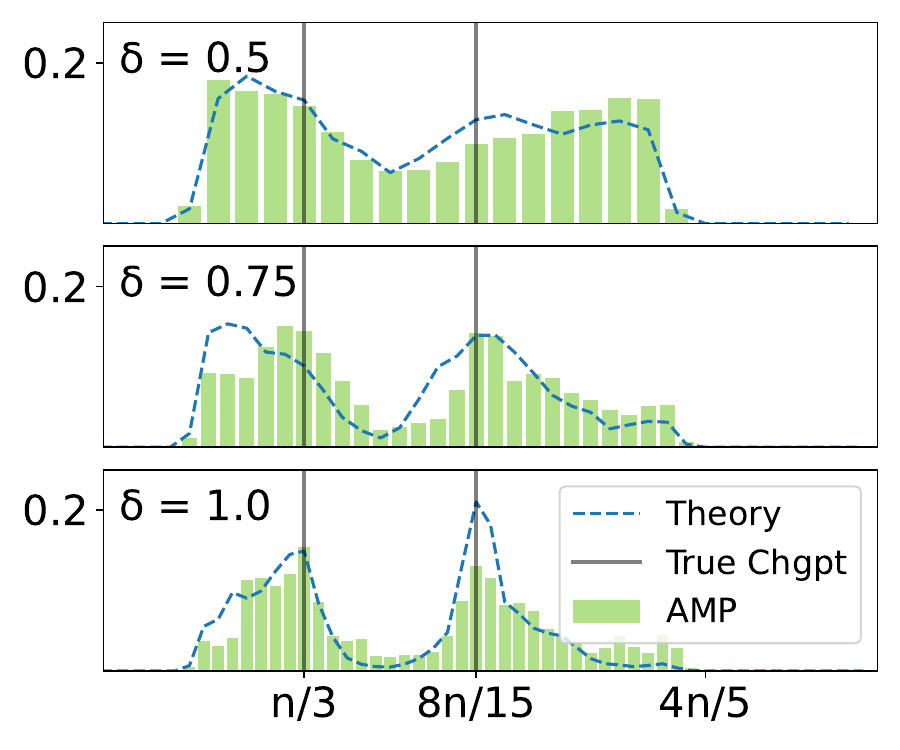}
% \end{subfigure}
}
% \vspace{-0.7cm}
\caption{Estimated posterior  $p(\cdot | \TTheta^t, \y)$ plotted against the limiting exact posterior $p(\cdot | {\V}_{\TTheta}^t, q(\Z, \bPsi, \bvarepsilon))$ as per \eqref{eq:posterior_convergence} for the linear (left), logistic (middle), rectified linear (right) models.}
\label{fig:exact_posterior}
\end{figure}
Figures \ref{fig:DPDU_L3} and \ref{fig:charcoal_L3} compare the  performance of AMP against four state-of-the-art algorithms: the dynamic programming ($\DP$) approach  in \citep{rinaldo_localizing_2021};  dynamic programming with dynamic updates $(\DPDU)$ \citep{xu_change_2022};  divide and conquer dynamic programming ($\DCDP$)  \citep{li_divide_2023}; 
and the complementary-sketching-based algorithm $\charcoal$ from \cite{gao_sparse_2022}.
Hyperparameters are chosen using cross validation (CV), as outlined in Section A.1 of \cite{li_divide_2023}.
%These four  algorithms use prior knowledge of the signal distribution $\P_{\bar{\B}}$ for estimation: 
 The first three algorithms, designed for sparse signals, combine LASSO-type estimators  with partitioning techniques based on dynamic programming. The $\charcoal$ algorithm is designed for the setting where the difference  $\bbeta^{(\eta_\ell)}- \bbeta^{(\eta_{\ell+1})}$ between adjacent signals 
is sparse.
%, but the signals themselves can be dense. 
None of these algorithms uses a prior on the change point locations, unlike AMP which can flexibly incorporate both  priors via $\P_{\bar{\B}}$  and $\pi_{\bar{\bPsi}}$. 
Figure \ref{fig:DPDU_L3} uses $p=200, \sigma=0.1,  \Delta=n/10, L^*=L=3$  and a sparse Bernoulli-Gaussian signal prior $\P_{\bar{\B}} =0.5 \N(\0, \delta\I) + 0.5 \delta_\0$. AMP assumes no knowledge of the true sparsity level 0.5 and estimates the sparsity level using CV over a set of values not including the ground truth (details in Appendix \ref{sec:further_implementation}). Figure \ref{fig:charcoal_L3}
 uses $p=300, \Delta=n/10, L^*=L=3$ and a Gaussian sparse difference prior with sparsity level $0.5$ (described in \eqref{eq:sparse_diff_prior}). AMP is run assuming a mismatched sparsity level of 0.9 and a mismatched magnitude for the sparse difference vector (details in Appendix \ref{sec:further_implementation}).
Figures \ref{fig:DPDU_L3} and \ref{fig:charcoal_L3} show that AMP  consistently achieves the lowest  Hausdorff distance among all algorithms and outperforms most algorithms in  runtime.  Figures  \ref{fig:extra_comparison_DCDP}  and \ref{fig:vary_p} in Appendix \ref{sec:further_implementation} show results from an additional set of experiments 
comparing AMP with $\DCDP$ (the fastest algorithm in Figure \ref{fig:DPDU_L3}).  Figure \ref{fig:extra_comparison_DCDP} shows the performance of AMP with different change point priors $\pi_{\bar{\bPsi}}$ and suboptimal denoising functions such as soft thresholding. Figure \ref{fig:vary_p} demonstrates the favourable runtime scaling of AMP over $\DCDP$ with respect to $p$, due to the LASSO computations involved in $\DCDP$. 
\begin{figure}
\centering
\subfloat[Comparison with $\DPDU$, $\DCDP$ and $\DP$ for sparse prior $\P_{\bar{\B}} =  0.5\N(\0, \delta\I) +0.5 \delta_{\0}$. $L^*=L=3$. Runtime shown is the average runtime per set of CV parameters.]{%
% \begin{subfigure}[t]{.49\textwidth}
    \centering
    \includegraphics[width=0.475\linewidth]{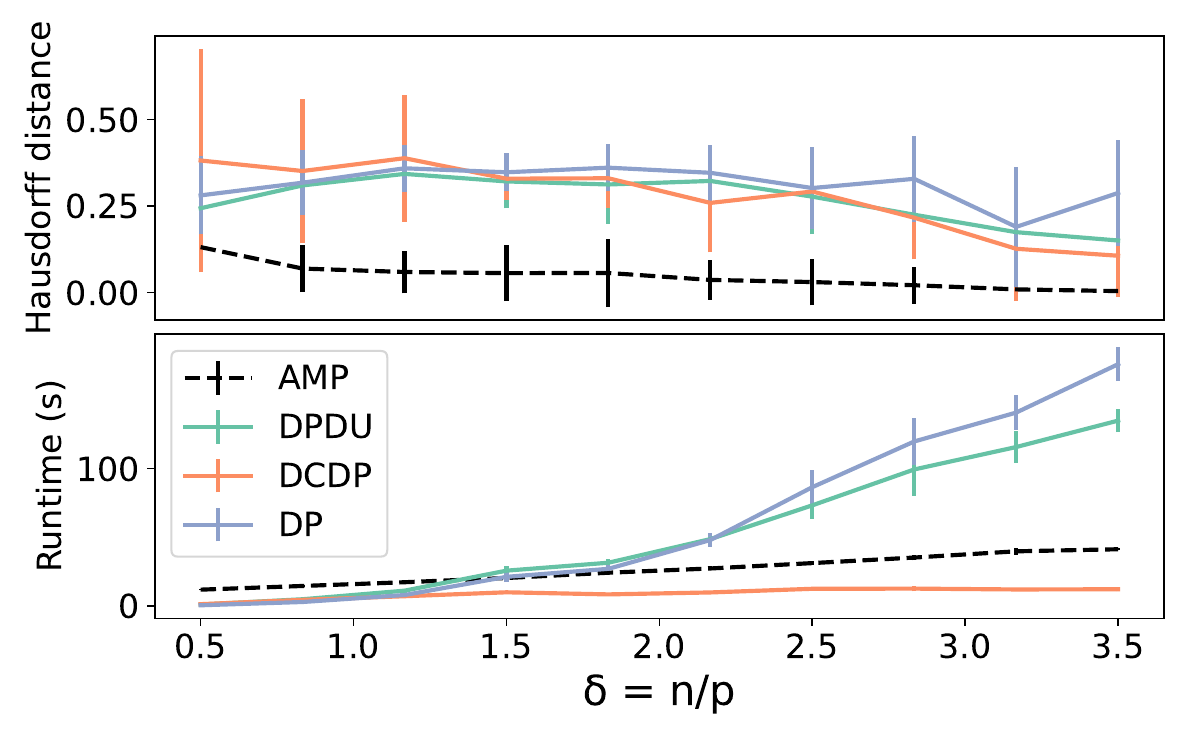}
     % \caption{}
    \label{fig:DPDU_L3}
% \end{subfigure} 
}
\hspace{0.3cm}
\subfloat[Comparison with $\charcoal$ and $\McScan$ for a sparse difference prior with sparsity level 0.5. $L^*=L=3$.]{%
% \begin{subfigure}[t]{.49\textwidth}
    \centering
    \includegraphics[width=0.475\textwidth]{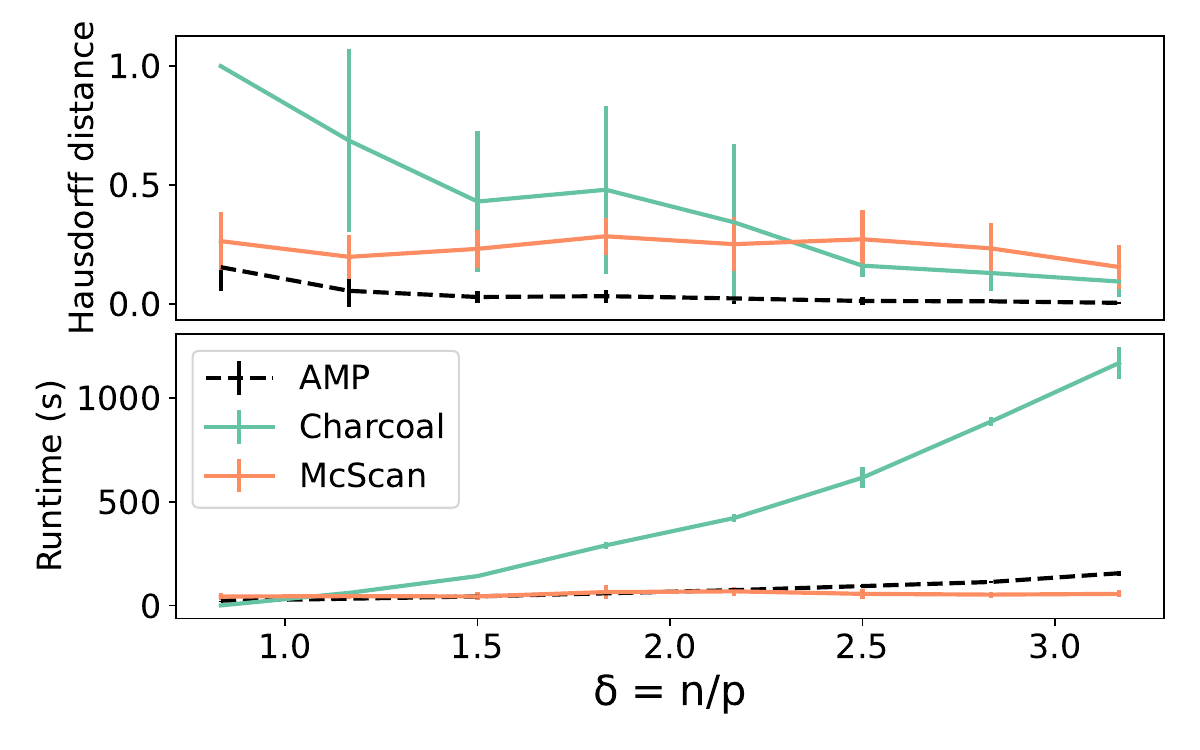}
    % \caption{}
    % UPDATED. note $\charcoal$ doesn't use CV so the plotted AMP runtime is for one set of mismatched parameters.
\label{fig:charcoal_L3}
% \end{subfigure}
}
\caption{Linear model with sparse or sparse difference signal prior.}
\label{fig:sparse_sparse_diff}
\end{figure}
\paragraph{Compressed Sensing with Change Points}
In Figure \ref{fig:satellite}, we consider noiseless compressed sensing, where the signals $\{\bbeta^{(i)}\}_{i \in [n]}$ are rotated versions of a $(255, 255)$ sparse grayscale image used by \citet{Sch14}. The fraction of nonzero components in the image is  8645/50625. We downsample the original image by a factor of three and flatten, yielding an operation dimension of $p = 85^2 = 7225$.  We set $\{\bbeta^{(i)}\}_{i = 1}^{0.3n}$ to be the image, $\{\bbeta^{(i)}\}_{i = 0.3n}^{0.7n}$  to be a $30^{\circ}$ rotated version, and $\{\bbeta^{(i)}\}_{i = 0.7n}^{n}$ to be a $45^{\circ}$  rotated version. We run AMP with a Bernoulli-Gaussian prior $\P_{\bar{\B}}$,  $\Delta = n/4$ and $L = L^* = 3$. Figure \ref{fig:satellite} shows image reconstructions along with the approximate posterior $p_{\bar{\bPsi} | \bar{\V}_{\TTheta}^t, q(\Z, \bar{\bPsi}, \bar{\bvarepsilon})}(\cdot |\TTheta^t, \y)$. The approximate posterior concentrates around the true change point locations as $\delta$ increases, even when the image reconstructions are approximate. The experiment took one hour to complete on an Apple M1 Max chip, whereas competing algorithms did not return an output within 2.5 hours, due to the larger signal dimension compared to Figure \ref{fig:DPDU_L3} ($p=7225$ vs $p=200$).
\begin{figure}[t!]
\centering
\includegraphics[height=0.328\linewidth]{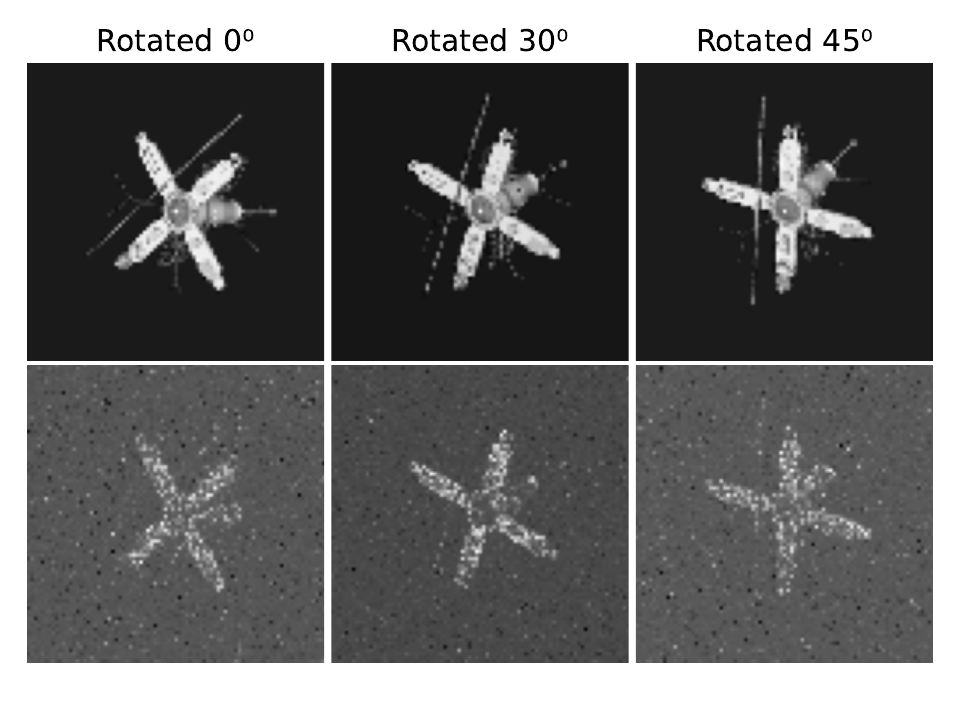}
\vspace{-0.3cm}
\includegraphics[height=0.328\linewidth]{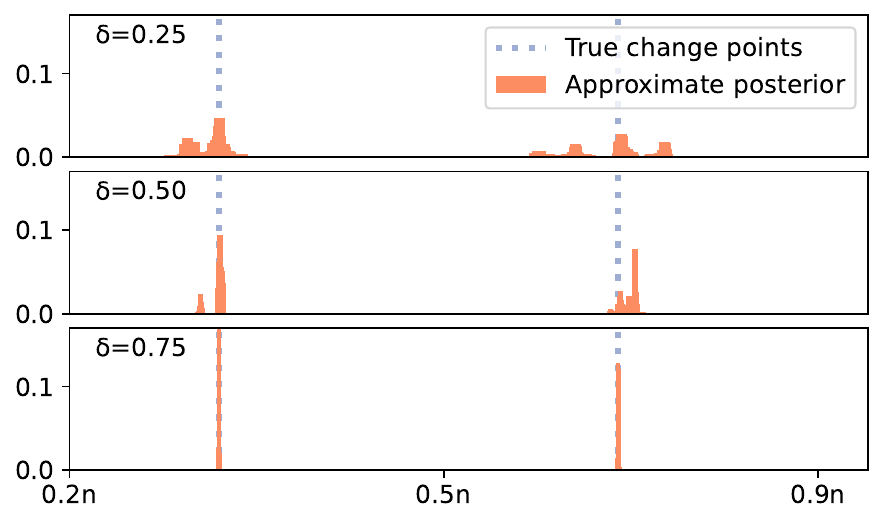}
\caption{Top: Ground truth images (first row) and reconstruction from AMP (second row) at $\delta = 0.75$. Bottom: Approximate posterior vs. fractional change point locations for different $\delta$.} 
\label{fig:satellite}
\vspace{-0.5cm}
\end{figure}

\subsection{Rectified Linear Regression with Change Points}
For the rectified linear model in \eqref{eq:ReLU-chgpt-model} with a single change point at $(0.6n)$, Figure \ref{fig:logistic_relu_synthetic_estimation} (right) plots the Hausdorff distance divided by $n$ as a function of $\delta$. We let $p = 600$, $L = 3$, $\P_{\bar{\B}} = \N(\0, \delta \I)$, $\sigma = 0.3$, and average over ten trials. Similarly, Table \ref{tab:synthetic_num_chgpts} displays the performance of AMP in estimating the number of change points in the same setup, where we vary $\sigma$ from $0.5$ to $0.3$. We observe a close match between the AMP performance and that dictated by theory in Proposition \ref{prop:hausdorff_asymptotics}. Figure \ref{fig:exact_posterior} (right) plots the estimated posterior $p_{\bar{\bPsi} | \V_{\TTheta}^t, q(\Z, \bar{\bPsi}, \bar{\bvarepsilon})}(\cdot | \TTheta^t, \y)$  against its the limiting  posterior $p_{\bar{\bPsi} | \V_{\TTheta}^t, q(\Z, \bar{\bPsi}, \bar{\bvarepsilon})}(\cdot | {\V}_{\TTheta}^t, q(\Z, \bPsi, \bvarepsilon))$, for the rectified linear model with $p = 600, \sigma = 0.5$, true change point locations at $n/3$ and $8n/15$, and $L = 3$ with a uniform prior over all configurations with two change points. We empirically observe a match between the estimated and limiting posteriors as per Proposition \ref{prop:pointwise_posterior}. 

\subsection{Logistic Regression with Change Points}
For the logistic model in \eqref{eq:logistic-chgpt-model} with change points  at $0.35n$ and  $0.7n$, Figure \ref{fig:logistic_relu_synthetic_estimation} (left) plots the Hausdorff distance normalized by $n$ for varying $\delta$. We choose $p = 600$, $\Delta = n/5$, and $L = 3$ with a uniform prior over all valid configurations. We let $\P_{\bar{\B}} = \N\left(\0, \begin{bmatrix}
    15 & 0.2 & 0\\
    0.2 & 15 & 0\\
    0 & 0 & 15 
\end{bmatrix}\right)$. We observe a close match between the AMP performance and that dictated by the state evolution theory in \eqref{eq:hausdorff_convergence}. 

Table \ref{tab:synthetic_num_chgpts} displays AMP's estimated number of change points $|\hat{\eeta}(\TTheta^t, \y)|$ in a variety of different settings. For the logistic model we set $\brho$ to be $50 \I$ and $75 \I$, achieved by respectively setting $\P_{\bar{\B}} = \N(\0, 50 \cdot \delta \I)$ and $\P_{\bar{\B}} = \N(\0, 75 \cdot \delta \I)$ in our experiment. We let $p = 600$, $L = 3$, and let the true (single) change point location be at $0.6n$ ($L^* = 2$). We observe that the average number of change points predicted by AMP is close to the truth value of $1$, and closely matches the theory from \eqref{eq:size_convergence}. 

Figure \ref{fig:exact_posterior} (middle) shows the estimated posterior $p(\cdot | \TTheta^t, \y)$ from \eqref{eq:posterior_convergence} and the limiting (exact) posterior $p(\cdot | {\V}_{\TTheta}^t, q(\Z, \bPsi, \bvarepsilon))$ in the logistic model with two true change points at $n/3$ and $8n/15$. We let $p = 600$, $\P_{\bar{\B}} = \N(\0, 20 \I)$, and subsample the domain $[0, 1]$ by a factor of eight. We let $L = 3$ and place uniform prior over all possible two-changepoint configurations. We observe a close match with the theory in Proposition \ref{prop:pointwise_posterior}, and concentration around the true change point locations as $\delta$ increases.

Figure \ref{fig:logistic_compare} compares AMP with the binary segmentation  algorithm proposed in \cite{wang_efficient_2023} for sparse logistic regression. We use $p=300,  \Delta=0.15n$,  and a sparse Bernoulli-Gaussian signal prior $\P_{\bar{\B}} =0.5 \N(\0, 15\delta\I) + 0.5 \delta_\0$. The true change points are at $n/3$ and $8n/15$ with $L^*=3$. The hyperparameters for binary segmentation are chosen as suggested in the source code of  \cite{wang_efficient_2023}. Both algorithms can take advantage of  prior information on the number of change points. 
In Fig.\@ \ref{fig:logistic_L3or4}, both  algorithms assume a maximum of $L=3$ or 4 signals while  $L^*=3$. The algorithms are observed to  achieve  similar Hausdorff distance, with AMP enjoying a much lower runtime  when assuming $L=3$. Fig.\@ \ref{fig:logistic_exactly_L3} compares the two algorithms by assuming there are \emph{exactly} 3 signals. The performance of AMP improves significantly with  larger $\delta$, both in terms of Hausdorff distance and runtime. We recall that the algorithm in \cite{wang_efficient_2023} is tailored to sparse logistic regression, whereas AMP can account for a broader range of signal priors. 
%

% Size Estimation Table -- Logistic
\begin{table}[t!]
    \centering
    \resizebox{\columnwidth}{!}{\begin{tabular}{@{}lcccccccccccl@{}}\toprule
    & \multicolumn{3}{c}{Logistic $\brho = 50 \I$} & \multicolumn{3}{c}{Logistic $\brho = 75 \I$} & \multicolumn{3}{c}{ReLU $\sigma = 0.50$} & \multicolumn{3}{c}{ReLU $\sigma = 0.30$}
    \\\cmidrule(lr){2-4}\cmidrule(lr){5-7}\cmidrule(lr){8-10}\cmidrule(lr){11-13}
               & Theory  & Mean & Std    & Theory  & Mean & Std & Theory  & Mean & Std & Theory  & Mean & Std\\\midrule
    $\delta = 0.75$           &  & -- &  &  & -- & & 1.10 & 1.10 & 0.70 & 1.00 & 1.10 & 0.30 \\
    $\delta = 1.25$ & 1.00 & 1.10 & 0.30 & 1.00 & 0.90 & 0.30 & 0.80 & 1.30 & 0.46 & 1.00 & 1.00 & 0.00\\
    $\delta = 1.75$ & 1.00 & 1.30 & 0.46 & 1.00 & 1.20 & 0.40 & 1.00 & 1.20 & 0.60 & 1.00 & 1.00 & 0.00\\
    $\delta = 2.25$   & 1.00 & 1.10 & 0.30 & 1.00 & 1.10 & 0.30 &  & -- & &  & -- & \\\bottomrule
    \end{tabular}}
    \caption{AMP performance when estimating the number of change points in the logistic and rectified linear (ReLU) models. The true (single) change point location is $0.6n$ ($L^* = 2$), AMP is run with $L = 3$.}
    \label{tab:synthetic_num_chgpts}
\end{table}

\begin{figure}[t!]
\centering
\subfloat{%
% \begin{subfigure}{.5\textwidth}
  \centering
  \includegraphics[width=0.49\linewidth]{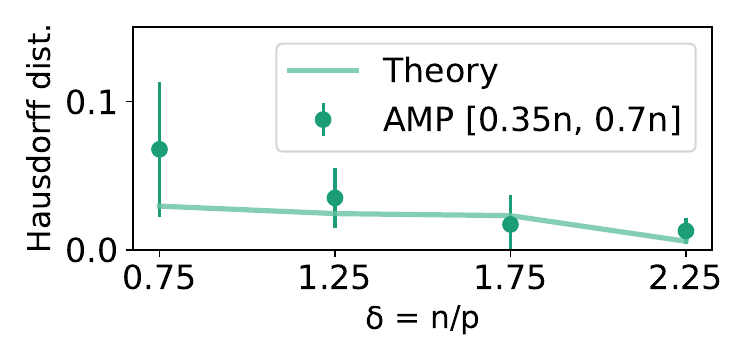}
% \end{subfigure}%
}
\subfloat{%
% \begin{subfigure}{.5\textwidth}
  \centering
  \includegraphics[width=0.49\linewidth]{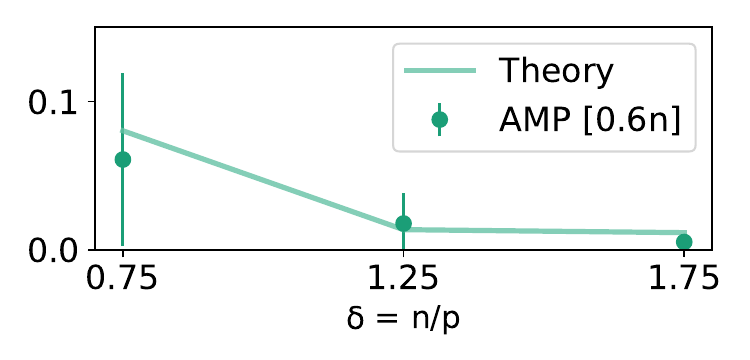}
% \end{subfigure}
}
% \vspace{-1cm}
\caption{AMP performance in terms of the normalized Hausdorff distance plotted against the theoretical predictions of Proposition \ref{prop:hausdorff_asymptotics} for the logistic (left) and rectified linear (right) models.}
\label{fig:logistic_relu_synthetic_estimation}
\end{figure}

\begin{figure}[t!]
\centering
\subfloat[Algorithms  assume a maximum of $3$ or $4$ signals (that is, $L=3$ or 4).]{%
% \begin{subfigure}{.49\textwidth}
\centering
\includegraphics[width=0.475\textwidth]{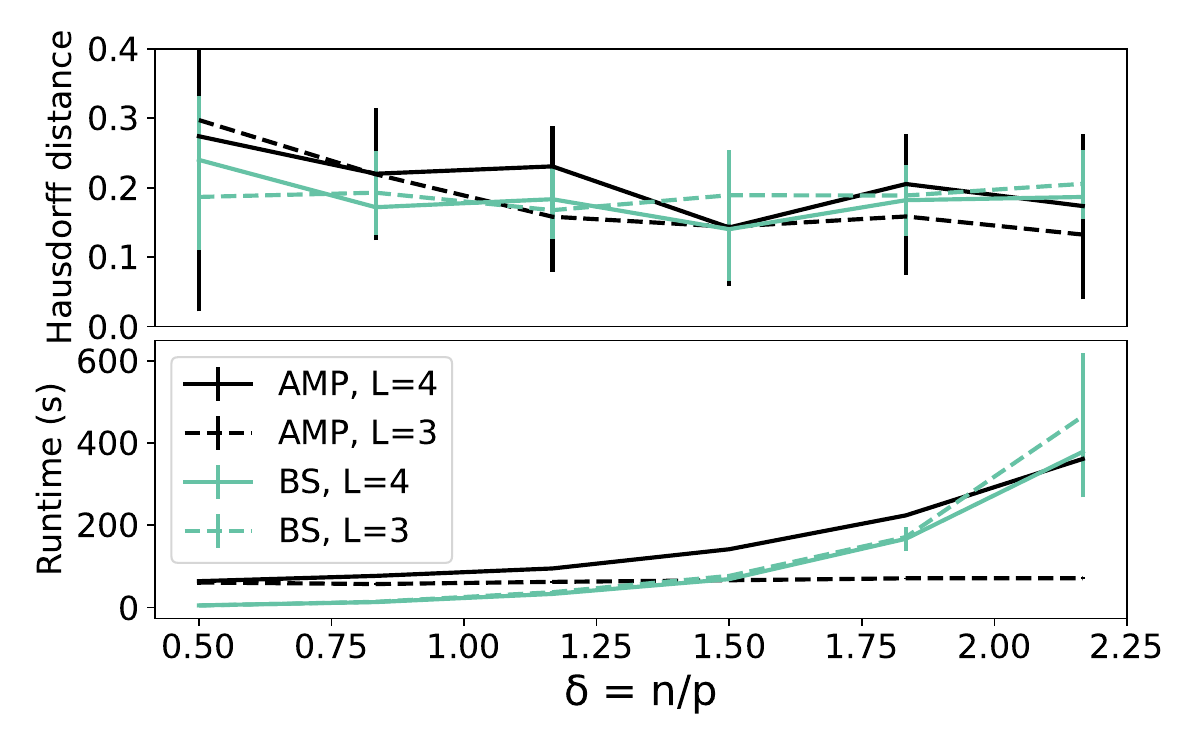}
 % \caption{}
    \label{fig:logistic_L3or4}
% \end{subfigure}
}
% \hspace{0.05cm}
\subfloat[Algorithms assume  exactly 3 signals.]{%
% \begin{subfigure}{.49\textwidth}
    \centering
\includegraphics[width=0.475\linewidth]{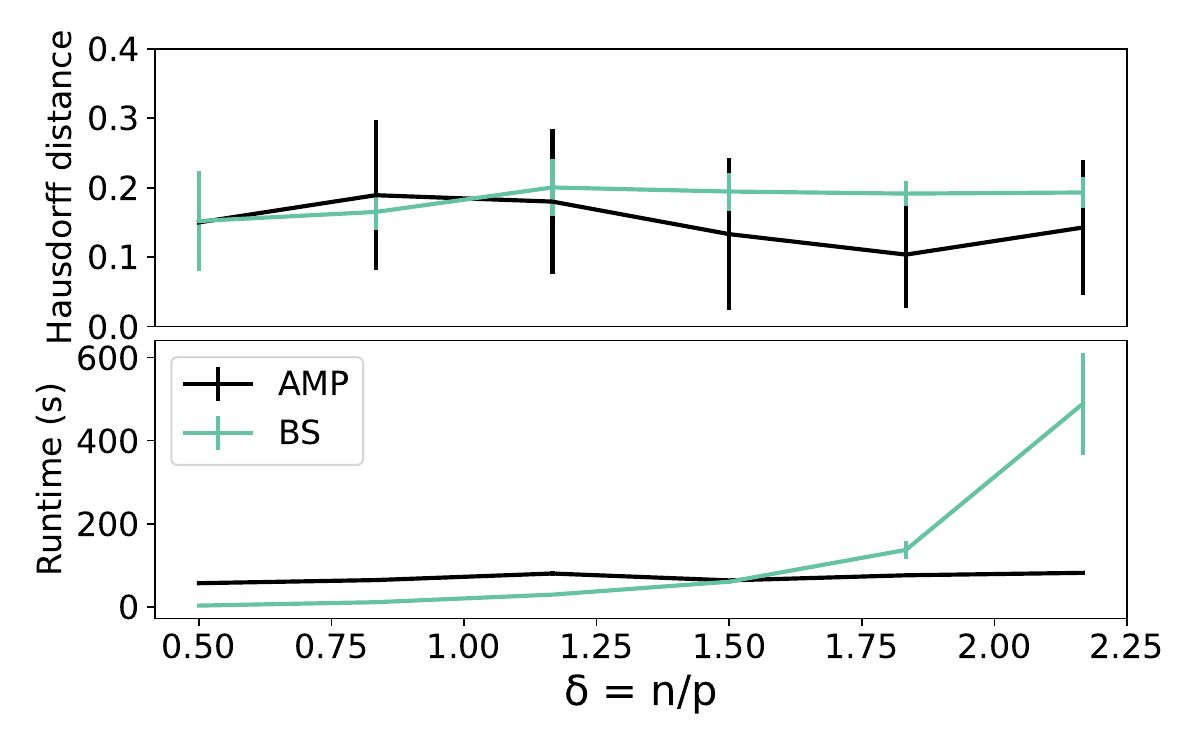}   
% \caption{}
\label{fig:logistic_exactly_L3}
\vspace{0.4cm}
% \end{subfigure}
}
\caption{AMP vs.\@ the binary segmentation (BS) algorithm in \cite{wang_efficient_2023} for  logistic regression. The black plots correspond to AMP and the green plots correspond to BS. $\P_{\bar{\B}} =  0.5\N(\0, 15\delta \I) +0.5 \delta_{\0}$,  $L^*=3$. } 
\label{fig:logistic_compare}
\end{figure}
\paragraph{Inferring Change Points in Myocardial Infarction Data}
We consider the Myocardial Infarction (MI) complications data set from \cite{misc_myocardial_infarction_complications_579}, which contains the medical information of $n = 1700$  patients (samples) with MI complications. 
Each  sample has $p = 111$ medical features such as age, sex, heredity, and the presence of diabetes. The data set also contains $12$ binary response variables for each patient relating to the state of the patient's overall heart health, indicating the presence of complications such as `Atrial Fibration' and `Chronic Heart Failure' (CHF). We investigate the relation between the binary CHF response  variable   and the features of each patient, using the logistic model. 
We sort the data by age in the range $26-92$ and exclude age as a feature, with the aim of determining whether the relationship between the response and the features changes markedly at a certain age. We whiten the age-ordered $n \times (p-1)$ feature matrix, replacing missing values with interpolated estimates. We set $L = 2, \Delta = n/500$, and select the signal prior $\P_{\bar{\B}} = \N\left(\0, \begin{bmatrix}
   1 & 3/4\\
   3/4 & 1
\end{bmatrix}\right)$ independently of the data. For $\pi_{\bar{\bPsi}}$ we first specify a uniform prior over the number of change points (0 or 1 change points with probability $\frac{1}{2}$ each); we then define a uniform prior on the change point location for the case of 1 change point. We compute the estimated posterior $p(\bpsi| \TTheta^t, \y)$ from \eqref{eq:posterior_convergence}, reporting the existence of a change point with posterior probability close to 1. Assuming one change point, we plot the estimated posterior over all possible single change point locations in Figure \ref{fig:real_data_plots} (left). 
The plot displays strong concentration of the posterior within the age range $55-75$, attaining its maximum value at age $66$. This indicates the existence of a change point in the age range $55 - 75$ in how the presence of CHF relates to various medical features such as heredity, sex, and diabetes. This is consistent with findings in the medical literature, where older patients with the presence of CHF (aged $>75$ years) were found to be more often female, have fewer cardiovascular diseases, and have fewer associated risk factors than patients with CHF aged $55$ years or less \citep{stein_diversity_2012, Azad2014-ky,tromp_age_2021}. 

We further validate our findings by  running $\ell_2$-regularised  logistic regression separately on the data samples before and after the age of $66$. The resulting estimated parameter vectors are named `Estimated Vector $1$' and `Estimated Vector $2$' in Figure \ref{fig:real_data_plots} (right). We apply class weighting to the loss function to mitigate the effect of class imbalance, and use $5$-fold cross-validation to select the regularisation constant. We observe from the plot in Figure \ref{fig:real_data_plots} (right) that the found regression vectors differ significantly, with a correlation coefficient of $0.09$, yielding further evidence for a possible change in the underlying data generating mechanism before and after age $66$. Moreover, we observe that the squared sum of the coefficients of `Estimated Vector 2' ($1.9$) is significantly smaller than that of `Estimated Vector 1' ($5.9$), a finding that is consistent with the observation that traditional risk factors provide less combined evidence of heart failure risk in elderly patients \citep{tromp_age_2021}. 
\begin{figure}[t!]
\centering
\subfloat{%
% \begin{subfigure}{.5\textwidth}
  \centering
  \includegraphics[width=0.4999\linewidth]{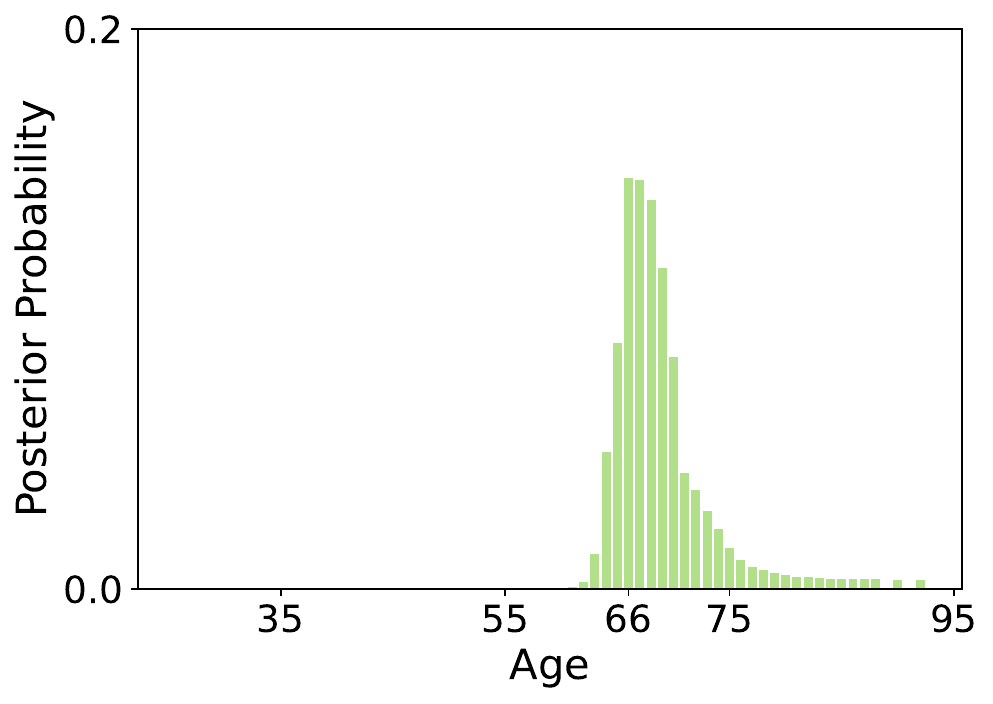}
  \label{fig:realdata_exact_post}
% \end{subfigure}%
}
\subfloat{
% \begin{subfigure}{.466\textwidth}
  \centering
  \includegraphics[width=0.466\linewidth]{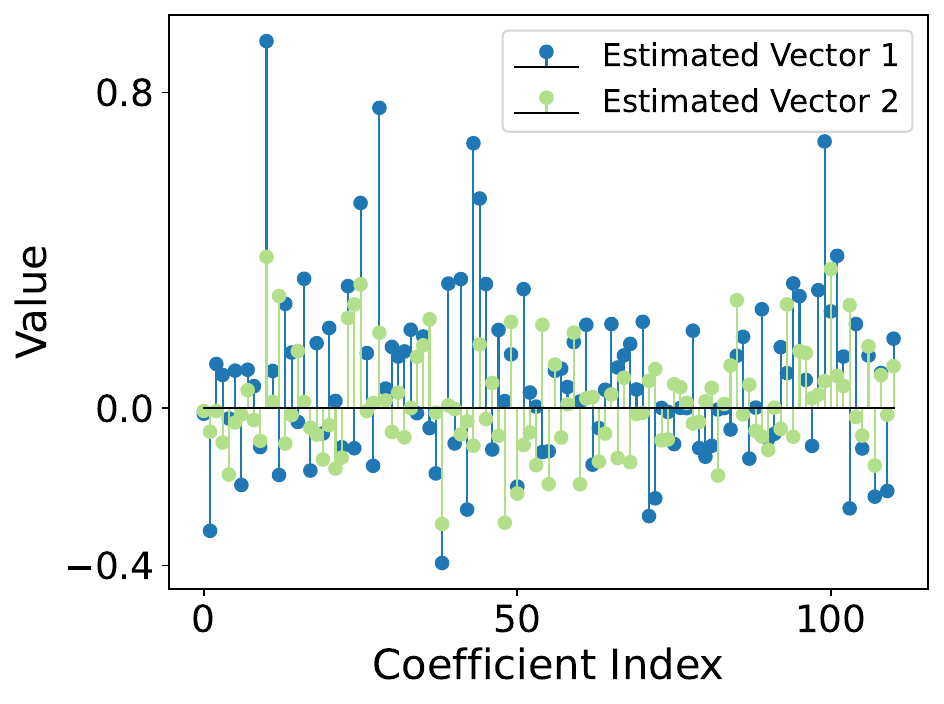}
  \label{fig:real_data_stemplot}
% \end{subfigure}
}
\vspace{-0.7cm}
\caption{Change point posterior $p(\cdot | \TTheta^t, \y)$ with respect to age in the Myocardial Infarction data set (left), coefficients of both regression vectors estimated using samples before and after the age of $66$ (right).}
\label{fig:real_data_plots}
\vspace{-0.4cm}
\end{figure}

\section{Discussion}
In this work we proposed an AMP algorithm  for regression with change points. The algorithm can efficiently incorporate prior information on the change points, and its state evolution characterization gives rigorous asymptotic guarantees on estimation accuracy. It also allows to quantify uncertainty in the change point estimates via a computable posterior distribution. 

In principle, the choice of change point prior could significantly affect the performance of our proposed algorithm---for example, the performance would be poor if we choose a prior that assigns very small probability mass to the true change-point locations. However, our experimental results indicate that an uninformative prior (uniform on both the number and locations of the change points) consistently gives good performance across a range of settings. The algorithm also uses  priors on the signal $\B$ and the measurement noise  $\bvarepsilon$ to tailor the AMP denoising functions $f^t, g^t$. A misspecified prior on $\bvarepsilon$ or $\B$ would lead to choices on $f^t, g^t$ that are suboptimal, but we emphasize the algorithm and the theoretical guarantees remain valid: the performance of the algorithm is characterized by state evolution parameters, which in turn depend on the denoising functions used within AMP.  Figures \ref{fig:sparse_sparse_diff}, \ref{fig:satellite}, \ref{fig:logistic_compare}, \ref{fig:real_data_plots}, \ref{fig:extra_comparison_DCDP} illustrate the robustness of our algorithm to lack of exact knowledge about signal or noise priors.

A limitation of our method is that the AMP algorithm and its theoretical guarantees are derived assuming that the covariates are i.i.d. Gaussian. We discuss below  how this assumption can be relaxed in certain settings, along with other directions for future work.

%main assumptions in this work and how these may be relaxed, along with potential limitations of our method.

\paragraph{Non-Gaussian Design}
Based on recent universality results for AMP \citep{Wang22Universality}, we expect that our algorithm and theoretical guarantees will apply to a broad class of i.i.d. designs. We also expect that our approach can be generalized to the much broader class of rotationally invariant designs, for which AMP-like algorithms have been proposed for regression without change points  \citep{ma2017orthogonal, rangan2019vector, takeuchi2020rigorous, pandit2020inference}. 

\paragraph{Gaussian Design with General Covariance}
%We discuss how the AMP iteration \eqref{eq:amp} can be adapted to admit an asymptotic characterization of the form \eqref{eq:SE_Theta}-\eqref{eq:SE_B} in the case 
In the setting where $\X_i \distas{i.i.d} \N(\0, \bSigma)$ for $i \in [n]$ and for some covariance matrix $\bSigma \in \reals^{p \times p}$, we can write $\X_i =  \bSigma^{1/2}\tilde{\X}_i$,  where  $\tilde{\X}_i \distas{i.i.d} \N(\0, \I_p)$ for $i \in [n]$.  The model \eqref{eq:chgpt-model} can then be equivalently written as:
\begin{align*}
   y_i = q\big( (\tilde{\X}_i)^\top (\bSigma^{1/2} \bbeta^{(i)}), \, \varepsilon_i \big), \quad i = 1, \dots, n,
\end{align*}
where $\tilde{\X} \in \reals^{n \times p}$ is now our effective design matrix composed of stacked rows $(\tilde{\X}_i)^\top$, and $\bSigma^{1/2} \bbeta^{(i)}$ is our effective signal. If $\bSigma \in \reals^{p \times p}$ is known or can be approximated with reasonable accuracy, the user can run the AMP iteration \eqref{eq:amp} with $\tilde{\X}$ in place of $\X$. Since the entries of $\tilde{\X}$ are i.i.d. Gaussian, we can apply Theorem \ref{thm:SE} to this iteration to  obtain guarantees on the accuracy of change point estimation (via Proposition \ref{prop:hausdorff_asymptotics}) and guarantees on the recovery of $\bSigma^{1/2} \B$ (via Theorem \ref{thm:SE}). 
Because $\bSigma$ is known, this also gives us guarantees on the recovery of $\B$. 

The more realistic setting of unknown $\bSigma$ is challenging,  and developing effective AMP-like algorithms for this case is an interesting open question. One idea is to use weighted versions of standard convex estimators for high-dimensional regression such as LASSO or M-estimators, where the weights can be chosen to encode prior information on change point locations. By leveraging the connection between AMP and convex optimization (Section 4.4 of \citealt{feng_unifying_2022}), it is possible to analyze such convex estimators under unknown covariance $\bSigma$, as in \citep{zhao_asymptotic_2022}.

\paragraph{Design with Temporal Dependence}
\citet{xu_change_2022} recently studied the problem of change point detection in linear regression  with temporally dependent data, where the rows of the design matrix $\X$ form a high-dimensional time-series. An AMP framework has recently been proposed for linear regression when the data are generated from a high-dimensional time series  \citep{tieplova2025informationtheoreticlimitsapproximatemessagepassing}. Using this framework to generalize our change point AMP to temporal dependent data is a promising direction for future work. 
%
%\paragraph{Potential Costs} 

% We highlight that potential costs associated with the AMP approach we propose in this paper are: a) the theory relies on strong assumptions on the design matrix, b) there is no clear way to design the denoisers $f^t, g^t$ without postulated priors on $\B, \bvarepsilon$, c) the algorithm has sub-optimal runtime dependence on $n$ when compared to $\DCDP$ as per Figure \ref{fig:DPDU_L3}. We highlight these as potential avenues for future work.
%
\acks{G. Arpino is supported by a Cambridge Trust Scholarship and X. Liu was supported   by a Schlumberger Cambridge International Scholarship.
}

\newpage
\appendix
\section{Proof of State Evolution} \label{appendix:SEproof}

The proof of Theorem \ref{thm:SE} relies on a generalization of Lemma 14 in \cite{gerbelot_graph-based_2023}, presented as Lemma \ref{lemma:generalized_lemma_14} below. Let $\W_{0}\in \mathbb{R}^{N \times Q}$ be a matrix such that $\|\W_0^\top \W_0\|_F / N$ converges to a finite constant as $N \to \infty$, and let $\A \in \mathbb{R}^{N \times N}$ be a symmetric GOE($N$) matrix, independent of $\W_{0}$. Then Lemma 14 in \cite{gerbelot_graph-based_2023} gives a state evolution result  involving an AMP iteration whose denoising function takes as input the output of the generalized linear model $\varphi(\A \W_0)$, where $\varphi : \mathbb{R}^{N \times Q} \to \mathbb{R}^{N}$. Lemma \ref{lemma:generalized_lemma_14} 
%include an additional  auxiliary matrix $\bXi \in \reals^{N \times L_{\bXi}}$, and 
generalizes Lemma 14 of \citet{gerbelot_graph-based_2023} in two ways: i) it allows for the inclusion of an auxiliary matrix $\bXi \in \reals^{N \times L_{\bXi}}$ so that the  generalized linear model in question is of the form $\varphi(\A \W_0, \bXi)$, and ii) the state evolution convergence result holds for pseudo-Lipschitz test functions taking both the auxiliary matrix $\bXi$ and   $\A \W_0$ as inputs. 

We extend this lemma by considering an independent random matrix $\bXi \in \reals^{N \times L_{\bXi}}$, serving as input to a set of pseudo-Lipschitz functions $\varphi : \mathbb{R}^{N \times (Q + L_{\bXi})} \to \mathbb{R}^{N}$. We then analyze a similar AMP iteration whose denoising function $\tilde{f}^t$ takes $\varphi(\A \W_0, \bXi)$ as input instead of $\varphi(\A \W_0)$, initialized with an independent initializer $\X^{0} \in \mathbb{R}^{N \times Q}$ and $\bM^{-1} := \0 \in \reals^{N \times Q}$:
\begin{align}
    &\bX^{t+1} = \A\bM^{t}-\bM^{t-1}(\bb^{t})^{\top} && \in \R^{N\times Q} \, ,  \label{eq:symm_AMP_Xt}\\
    &\bM^{t} =\tf^{t}(\varphi\left(\A\W_{0}, \bXi\right),\bX^{t}) && \in \R^{N\times Q} \, ,  \label{eq:symm_AMP_ft} \\
    &\bb^t = \frac{1}{N} \sum_{i=1}^N \frac{\partial \tf^t_i}{\partial \bX_i^t}(\varphi\left(\A\W_{0}, \bXi\right),\bX^t) && \in
    \R^{Q\times Q}\,  \label{eq:symm_AMP_deriv}.
\end{align}

Our result in Lemma \ref{lemma:generalized_lemma_14} presents an asymptotic characterization of \eqref{eq:symm_AMP_Xt}--\eqref{eq:symm_AMP_deriv} via the following state evolution recursion: 
\begin{align}
        \label{eq:generalized_lemma_14_init_nu}
        &\boldsymbol{\nu}^{0}, \hat{\bnu}^0 = \0, \\
        % &\boldsymbol{\nu}^{0} = \lim_{N \to \infty} \frac{1}{N} \W_0^\top \tf^{0}(\bX^{0}) \\
        \label{eq:generalized_lemma_14_init_SE}
        &\boldsymbol{\kappa}^{0,0} = \lim_{N \to \infty} \frac{1}{N} \tf^{0}(\bX^{0})^{\top}\tf^{0}(\bX^{0}), \\
        \label{eq:generalized_lemma_14_nu}
        &\boldsymbol{\nu}^{t+1} = \lim_{N \to \infty} \frac{1}{N} \mathbb{E}\left[\W_{0}^{\top} \right. \left. \tf^{t}\left(\varphi(\Z_{\W_{0}}, \bXi), \Z_{\W_{0}}\brho_{\W_{0}}^{-1}\boldsymbol{\nu}^{t}+\W_{0}\hat{\boldsymbol{\nu}}^{t}+\Z^{t}\right)\right],  \\
        \label{eq:generalized_lemma_14_nu_hat}
        &\hat{\boldsymbol{\nu}}^{t+1} = \lim_{N \to \infty} \frac{1}{N} \mathbb{E}\left[\sum_{i=1}^{N} \partial_{1i} \bar{f}_{i}^{t} \left(\Z_{\W_{0}}, \Z_{\W_{0}}\brho_{\W_{0}}^{-1}\boldsymbol{\nu}^{t}+\W_{0}\hat{\boldsymbol{\nu}}^{t}+\Z^{t}, \bXi\right) \right], \\ \label{eq:generalized_lemma_14_kappa} 
        &\boldsymbol{\kappa}^{t+1, s+1} = \boldsymbol{\kappa}^{s+1, t+1} \notag \\
        &= \lim_{N \to \infty} \frac{1}{N} \mathbb{E}\left[ \left(\tf^{s}\left(\varphi(\Z_{\W_{0}}, \bXi), \Z_{\W_{0}}\brho_{\W_{0}}^{-1}\boldsymbol{\nu}^{s}+\W_{0}\hat{\boldsymbol{\nu}}^{s}+\Z^{s}\right) -\W_{0}\brho_{\W_{0}}^{-1}\boldsymbol{\nu}^{s+1}\right)^{\top} \right. \nonumber \\
        & \left. \hspace{4.0cm}\left(\tf^{t}\left(\varphi(\Z_{\W_{0}}, \bXi), \Z_{\W_{0}}\brho_{\W_{0}}^{-1}\boldsymbol{\nu}^{t}+\W_{0}\hat{\boldsymbol{\nu}}^{t}+\Z^{t}\right)  -\W_{0}\brho_{\W_{0}}^{-1}\boldsymbol{\nu}^{t+1}\right)\right],
    \end{align}
where $\brho_{\W_{0}} = \lim_{N \to \infty} \frac{1}{N}\W_{0}^{\top}\W_{0}$, and for $i \in [n]$ we have $(\Z_{\W_{0}})_i \distas{i.i.d} \N(\zero, \brho_{\W_{0}})$. For $i \in [n], 0\leq s,r \leq t$, we have that $(\Z_{\W_{0}})_i$ is independent from $(\bZ^{s})_i \distas{i.i.d} \N\left(\0, \bkappa^{s,s}\right)$ with $\cov((\bZ^{s})_i, (\bZ^{r})_i) = \bkappa^{s,r}$.
In \eqref{eq:generalized_lemma_14_nu_hat}, we let $\bar{f}^t: (\z, \u, \v) \mapsto \tf^t(\varphi(\z, \v), \u)$ and we let $\partial_{1i} \bar{f}_i^t$ denote the partial derivative of $\bar{f}_i^t$ with respect to the $i$-th row of its first argument. We list the necessary assumptions for characterizing this AMP iteration, followed by the result:
\paragraph{Assumptions.}
\begin{enumerate}[font={\bfseries},label={(B\arabic*)}]
\item\label{it:ass-sym-1} $\A \in \mathbb{R}^{N \times N}$ is a GOE($N$) matrix, that is, $\A = \mathbf{G}+\mathbf{G}^{\top}$ for $\mathbf{G} \in \mathbb{R}^{N \times N}$ with i.i.d.~entries $G_{ij} \sim \N(0,1/(2N))$.
\item\label{it:ass-sym-2} For each $t \in \mathbb{N}_{>0}$, $\bar{f}^t: (\z, \u, \bXi) \mapsto \tf^t(\varphi(\z, \bXi), \u)$ is uniformly pseudo-Lipschitz. For each $t \in \mathbb{N}_{>0}$ and for any $1 \leqslant i \leqslant N$, $(\u, \z) \mapsto \frac{\partial \tf^t_i}{\partial \bX_i}(\varphi(\z, \bXi), \u)$ is uniformly pseudo-Lipschitz. The function $\tf^0: \mathbb{R}^{N \times Q} \to \mathbb{R}^{N \times Q}$ is uniformly pseudo-Lipschitz. 
\item \label{it:ass-sym-3} The initialization $\X^0$ is deterministic, and $\|\bX^{0}\|_{F}/\sqrt{N}$, $\|{\W_0^\top \W_0}\|_2 / N$, $\|{\bXi}\|_F / \sqrt{N}$ converge almost surely to finite constants as $N \to \infty$.
\item\label{it:ass-sym-4} The following limits exist and are finite:
\begin{equation*}
    \lim_{N \to \infty} \frac{1}{N} \tf^{0}(\bX^{0})^{\top} \tf^{0}(\bX^{0}), \quad \lim_{N \to \infty} \frac{1}{N} \W_0^\top \tf^{0}(\bX^{0}).
\end{equation*}
\item\label{it:ass-sym-5} For any $t \in \mathbb{N}_{>0}$ and any $\boldsymbol{\kappa} \in \cS_{Q}^{+}$, the following limit exists and is finite:
\begin{equation*}
    \lim_{N \to \infty} \frac{1}{N}  \mathbb{E}\left[\tf^{0}(\bX^{0})^{\top} \tf^{t}(\varphi(\Z, \bXi), \bZ)\right] 
\end{equation*}
where $\Z \in \mathbb{R}^{N \times Q}$, $\Z \sim \N\left(0,\boldsymbol{\kappa} \otimes \mathbf{I}_{N} \right)$.
\item\label{it:ass-sym-6} For any $s,t \in \mathbb{N}_{>0}$ and any $\boldsymbol{\kappa} \in \cS_{2Q}^{+}
$, the following limit exists and is finite:
\begin{equation*}
    \lim_{N \to \infty} \frac{1}{N}  \mathbb{E}\left[\tf^{s}(\varphi(\Z^s, \bXi), \bZ^{s})^{\top} \tf^{t}(\varphi(\Z^t, \bXi), \bZ^{t})\right]
\end{equation*}
where $(\Z^{s},\Z^{t}) \in (\mathbb{R}^{N \times Q})^{2}$,$(\Z^{s},\Z^{t}) \sim \N(0,\boldsymbol{\kappa} \otimes \mathbf{I}_{N})$.
\end{enumerate}

\begin{lemma} \label{lemma:generalized_lemma_14}
Consider the AMP iteration \eqref{eq:symm_AMP_Xt}--\eqref{eq:symm_AMP_deriv} and the state evolution recursion \eqref{eq:generalized_lemma_14_init_nu}--\eqref{eq:generalized_lemma_14_kappa}. 
Assume $\ref{it:ass-sym-1}-\ref{it:ass-sym-6}$. Then for any sequence of functions $\Phi_{N}:(\mathbb{R}^{N \times Q})^{\otimes (t+3)} \times \reals^{N \times L_{\bXi}} \to \mathbb{R}$ such that $(\X^1, \dots, \X^t, \V) \mapsto \Phi_{N}(\mathbf{X}^{1}, ..., \mathbf{X}^{t}, \V ; \W_0, \bXi)$ is uniformly pseudo-Lipschitz, we have that: 
    \begin{align}
        &\Phi_{N}\left(\X^{0}, \X^{1}, ..., \X^{t}, \A \W_{0} ; \W_0, \bXi\right) \stackrel{\P}\simeq \nonumber \\
        & \E_{\Z^1, \dots, \Z^t, \Z_{\W_0}}  \left[ \Phi_{N}\bigl(\X^0, \Z_{\W_0}\brho^{-1}_{\W_0}\bnu^1 + \W_0 \hat{\bnu}^t + \Z^1, \dots, \right. \nonumber \\
        & \hspace{5cm} \left. \Z_{\W_{0}}\brho_{\W_{0}}^{-1}\boldsymbol{\nu}^{t}+\W_{0}\hat{\boldsymbol{\nu}}^{t}+\Z^{t}, \Z_{\W_0} ; \W_0, \bXi \bigr) \right]. \label{eq:convergence_generalized_lemma14_SE}
    \end{align}
\end{lemma}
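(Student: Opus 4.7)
The plan is to reduce Lemma \ref{lemma:generalized_lemma_14} to Lemma 14 of \citet{gerbelot_graph-based_2023} by conditioning on the auxiliary matrix $\bXi$. Since $\bXi$ is independent of both the GOE matrix $\A$ and of the initialization $\bX^0$, freezing $\bXi$ turns the map $(\z, \u) \mapsto \tf^t(\varphi(\z, \bXi), \u)$ into a deterministic pseudo-Lipschitz function, so that the iteration \eqref{eq:symm_AMP_Xt}--\eqref{eq:symm_AMP_deriv} becomes an instance of the symmetric AMP analyzed by \citet{gerbelot_graph-based_2023}, with $\bXi$ playing the role of a fixed parameter entering through the denoisers. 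The extra dependence of the test function $\Phi_N$ on $\bXi$ will be handled by the same conditioning.

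First, I would verify that the $\bXi$-conditioned denoisers and their row-wise Jacobians satisfy the uniform pseudo-Lipschitz conditions required by the original Lemma 14, on a high-probability event $E_N$ where $\|\bXi\|_F/\sqrt{N}$ is bounded near its almost-sure limit from \ref{it:ass-sym-3}. The joint uniform pseudo-Lipschitzness of $\bar{f}^t$ and of $(\u, \z) \mapsto \partial \tf^t_i/\partial \bX_i$ in \ref{it:ass-sym-2} translates, upon fixing $\bXi$, to uniform pseudo-Lipschitz bounds whose constants depend only on $\|\bXi\|_F/\sqrt{N}$ and are therefore uniformly controlled on $E_N$. The initialization and moment limits \ref{it:ass-sym-3}--\ref{it:ass-sym-6} of the generalized lemma reduce to the corresponding assumptions of the original Lemma 14 after conditioning on $\bXi$ and $\W_0$, using that the relevant moments of these matrices converge almost surely.

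Second, I would apply the original Lemma 14 conditionally on $(\bXi, \W_0)$, both of which are independent of $\A$. For any realization in $E_N$, the map $(\bX^0, \ldots, \bX^t, \V) \mapsto \Phi_N(\bX^0, \ldots, \bX^t, \V, \W_0, \bXi)$ is uniformly pseudo-Lipschitz by hypothesis, so the conditional form of the Gerbelot-Berthier result yields convergence in probability of $\Phi_N(\bX^0, \ldots, \bX^t, \A \W_0, \W_0, \bXi)$ to $\E_{\Z^0, \ldots, \Z^t, \Z_{\W_0}}[\Phi_N(\cdots, \Z_{\W_0}, \W_0, \bXi)]$, where the Gaussian arguments inside the expectation use SE parameters matching \eqref{eq:generalized_lemma_14_nu}--\eqref{eq:generalized_lemma_14_kappa} evaluated at the realizations of $\bXi$ and $\W_0$. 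Since $E_N$ has probability approaching one and the SE limits already involve an outer expectation over the random $\bXi$, the tower property together with dominated convergence delivers the unconditional statement \eqref{eq:convergence_generalized_lemma14_SE}.

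The main obstacle is controlling the uniform pseudo-Lipschitz constants as $\bXi$ varies: after conditioning, these constants pick up polynomial factors in $\|\bXi\|_F/\sqrt{N}$, so we must restrict to $E_N$ to keep them bounded. Showing that this truncation is compatible with the convergence-in-probability statement of the original Lemma 14 requires a standard argument combining the almost-sure boundedness in \ref{it:ass-sym-3} with the fact that the excluded event contributes negligibly to the probability bound. A secondary technical point is that the proof of Gerbelot-Berthier relies on orthogonal invariance of $\A$, which survives conditioning on $(\bXi, \W_0)$ by independence, so their inductive argument on the iteration index $t$ carries over verbatim with the $\bXi$-parametrized denoisers in place of $\tf^t$.
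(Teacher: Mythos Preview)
Your approach is essentially the same as the paper's: treat $\bXi$ and $\W_0$ as fixed (they are independent of $\A$), so that the iteration becomes a direct instance of Lemma 14 in \citet{gerbelot_graph-based_2023}, with $\bXi$ entering only as a parameter of the denoisers and the test function. The paper's argument is terser than yours---it simply asserts that Assumption \ref{it:ass-sym-2} guarantees the required pseudo-Lipschitz properties survive after fixing $\bXi$, without the explicit high-probability truncation you describe---but your more careful handling of the constants is a reasonable elaboration.

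There is, however, a misreading in your final step. You claim the state-evolution limits ``already involve an outer expectation over the random $\bXi$'' and invoke the tower property with dominated convergence to obtain the unconditional statement. But in \eqref{eq:convergence_generalized_lemma14_SE} the matrix $\bXi$ appears \emph{outside} the expectation $\E_{\Z^0, \dots, \Z^t, \Z_{\W_0}}$: both sides are functions of the realization of $\bXi$, and the SE parameters \eqref{eq:generalized_lemma_14_nu}--\eqref{eq:generalized_lemma_14_kappa} are defined with $\bXi$ held fixed (the expectations there are only over the auxiliary Gaussians). Once the conditional convergence in probability holds for almost every $\bXi$---which the almost-sure bound in \ref{it:ass-sym-3} gives directly, so an $E_N$ of probability merely tending to one is weaker than needed---the unconditional statement follows immediately; no averaging over $\bXi$ is performed. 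The paper also supplies one ingredient you omit: to justify the extra slot $\A\W_0$ in the test function it invokes the standard GOE operator-norm bound (Proposition 2 of \citet{gerbelot_graph-based_2023}) to ensure $\|\A\W_0\|_F/\sqrt{N}$ remains bounded, so that plugging $\V = \A\W_0$ into the uniformly pseudo-Lipschitz $\tilde{\Phi}_N$ is legitimate.
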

\begin{proof}
The main differences between the AMP result in \cite{gerbelot_graph-based_2023} and our characterization are: 1) the $\varphi$ function is allowed to depend on an auxiliary random variable $\bXi$, 2) the test function $\Phi_N$ is allowed to depend additionally on $\A \W_0$, and on the fixed parameters $\W_0$ and $\bXi$. Assumption \ref{it:ass-sym-2} guarantees that $\tf^t$ maintains the same required convergence properties despite modification $1)$. 

We now address modification 2). Since $\bXi$ and $\W_0$ are fixed and $(\X^0, \dots, \X^t, \V) \mapsto \Phi_{N}(\mathbf{X}^{0}, \mathbf{X}^{1}, ..., \mathbf{X}^{t}, \V ; \W_0, \bXi)$ is assumed to be uniformly pseudo-Lipschitz, these can be included in $\Phi_N$ and the right hand side of \eqref{eq:convergence_generalized_lemma14_SE} is unaffected. 
% Further, note that by Proposition 2 in \cite{gerbelot_graph-based_2023} (a standard upper bound on the operator norm of $\A \distas{} GOE(N)$) we have that  $\|\A \W_0\|_F / \sqrt{N} < \infty$ almost surely as $N \to \infty$. We can therefore include $\A \W_0$ in the definition of $\Phi_N$ as well, and the uniform pseudo-Lipschitzness of $\tilde{\Phi}_N$ guarantees convergence to the result. 
We now present the argument for why $\A\W_0$ can be included as in input to $\Phi_N$ on the LHS of \eqref{eq:convergence_generalized_lemma14_SE}, and yield a corresponding input $\Z_{\W_0}$ on the RHS of \eqref{eq:convergence_generalized_lemma14_SE}. Consider the following ``augmented'' AMP iteration, initialized with $\mathbf{X}^{aug, 0} := \begin{bmatrix}
    \X^0 & \X^0
\end{bmatrix} \in \mathbb{R}^{N \times 2Q}$ and $\bM^{aug, -1} := \0 \in \reals^{N \times 2Q}$: 
\begin{align}
    &\bX^{aug, t+1} = \A\bM^{aug, t}-\bM^{aug, t-1}(\bb^{aug, t})^{\top} && \in \R^{N\times 2Q} \, ,  \label{eq:symm_AMP_Xt_aug}\\
    &\bM^{aug, t} =\tf^{aug, t}(\varphi^{aug}\left(\A\W^{aug}_{0}, \bXi\right),\bX^{aug, t}) && \in \R^{N\times 2Q} \, ,  \label{eq:symm_AMP_ft_aug} \\
    &\bb^{aug, t} = \frac{1}{N} \sum_{i=1}^N \frac{\partial \tf^{aug, t}_i}{\partial \bX_i^{aug, t}}(\varphi^{aug}\left(\A\W^{aug}_{0}, \bXi\right),\bX^{aug, t}) && \in
    \R^{2Q\times 2Q}\,  \label{eq:symm_AMP_deriv_aug}.
\end{align}
In the above iteration, we define $\W_0^{aug} := \begin{bmatrix} \W_0 & \W_0 \end{bmatrix} \in \reals^{N \times 2Q}, \varphi^{aug}\left(\A\W^{aug}_{0}, \bXi\right) := \begin{bmatrix} \varphi\left(\A\W_{0}, \bXi\right) & \varphi\left(\A\W_{0}, \bXi\right) \end{bmatrix} \in \reals^{N \times 2Q}$. Letting $\X^{t + 1}, \tilde{\X}^{t+1} \in \reals^{N \times Q}$ take on the definition 
\[
\begin{bmatrix}
    \X^{t+1} & \tilde{\X}^{t+1}
\end{bmatrix} := \X^{aug, t+1}
\]
and defining: 
\[\tf^{aug, t}(\varphi^{aug}\left(\A\W^{aug}_{0}, \bXi\right),\bX^{aug, t}) := \begin{bmatrix}
    \tf^{t}(\varphi\left(\A\W_{0}, \bXi\right),\bX^{t}) & \xi^t(\tilde{\X}^{t})
\end{bmatrix}
\]
for some uniformly pseudo-Lipschitz function $\xi^t(\tilde{\X}^{t + 1})$ with $\xi^0(\tilde{\X}^0) := \W_0$, we obtain that: 
\begin{align}
    &\M^{aug, t} =: \begin{bmatrix} \M^t & \tilde{\M}^t \end{bmatrix} \\
    & \bb^{aug, t} = \begin{bmatrix}
        \bb^t & \0 \\
        \0 & *
    \end{bmatrix},
\end{align}
where $\M^t$ and $\bb^t$ are quantities from the initial iteration \eqref{eq:symm_AMP_Xt}--\eqref{eq:symm_AMP_deriv}, and $*$ denotes an unspecified matrix in $\reals^{Q \times Q}$. We therefore have that the first $Q$ columns of the matrix iteration \eqref{eq:symm_AMP_Xt_aug}--\eqref{eq:symm_AMP_deriv_aug} are equivalent to the original iteration \eqref{eq:symm_AMP_Xt}--\eqref{eq:symm_AMP_deriv}. The last $Q$ columns of the augmented iteration will be used to incorporate $\A \W_0$. 

For $t \geq 1$, let $(\X^{aug, t})_{[:, :Q]}, (\X^{aug, t})_{[:, Q:]}$ denote the $N \times Q$ matrices composed of the first $Q$ columns and of the last $Q$ columns of $\X^{aug, t}$, respectively. Let $\brho := \frac{1}{N} \W_0^\top \W_0$, and for $t \geq 1$ define the following random variables, independent of all else: $(\Z_{\W_0})_{i} \distas{i.i.d} \N(\0, \brho)$ and $(\Z^t)_i \distas{i.i.d} \N(0, \bkappa^{t, t})$ for $i \in [n]$. We compute the state evolution equations associated with the augmented AMP iteration for $t = 1$, outlined in $(C.25) - (C.28)$ of \cite{gerbelot_graph-based_2023}, to obtain: 
\begin{align*}
   &\bnu^{aug, 1} 
   = \lim_{N \to \infty} \frac{1}{N} \E \left[ \begin{bmatrix}
      \W_0^\top \\ \W_0^\top 
   \end{bmatrix} \begin{bmatrix}
       \tf^0(\X^0) & \xi^0(\tilde{\X}^0)
   \end{bmatrix}\right] 
   = \begin{bmatrix}
       \bnu^1 & \brho \\
       \bnu^1 & \brho
   \end{bmatrix}, \\
   &\hat{\bnu}^{aug, 1} = \begin{bmatrix}
       \hat{\bnu}^1 & \0 \\
       \0 & \0
   \end{bmatrix}, \\
   &\bkappa^{aug, 1, 1} = \lim_{N \to \infty} \frac{1}{N}  \\
   &\quad \E \left[ \left(\begin{bmatrix}
      \tf^0(\varphi(\Z_{\W_0}, \bXi), \Z_{\W_0} \brho^{-1} \bnu^1 + \W_0 \hat{\bnu}^1 + \Z^1) \\
      \W_0
   \end{bmatrix} - \begin{bmatrix}
      \W_0 \\
      \W_0
   \end{bmatrix}^\top \begin{bmatrix}
       \brho^{-1} \bnu^1 & \0 \\
       \0 & \I_{Q \times Q}
   \end{bmatrix}\right)^\top \right. \\
   &\hspace{1.5cm} \left. \left(\begin{bmatrix}
      \tf^0(\varphi(\Z_{\W_0}, \bXi), \Z_{\W_0} \brho^{-1} \bnu^1 + \W_0 \hat{\bnu}^1 + \Z^1) \\
      \W_0
   \end{bmatrix} - \begin{bmatrix}
      \W_0 \\
      \W_0
   \end{bmatrix}^\top \begin{bmatrix}
       \brho^{-1} \bnu^1 & \0 \\
       \0 & \I_{Q \times Q}
   \end{bmatrix}\right) \right]\\
   &\hspace{1.25cm} = \begin{bmatrix}
      \bkappa^{1, 1} & \0 \\
      \0 & \0
  \end{bmatrix}. 
\end{align*}
 Applying Lemma 14 in \cite{gerbelot_graph-based_2023} to the pseudo-Lipschitz function
\[
\tilde{\Phi}_N\left( \X^{aug, 1}, \dots, \X^{aug, t}\right) := \Phi_N\left( (\X^{aug, 1})_{[:, :Q]}, \dots, (\X^{aug, t})_{[:, :Q]}, (\X^{aug, 1})_{[:, Q:]}\right),\] 
we obtain:
\begin{align*}
    &\Phi_N\left( \X^1, \dots, \X^{t}, \A \W_0\right) \\
    & \Phi_N\left( (\X^{aug, 1})_{[:, :Q]}, \dots, (\X^{aug, t})_{[:, :Q]}, (\X^{aug, 1})_{[:, Q:]}\right) \\
    &= \tilde{\Phi}_N\left( \X^{aug, 1}, \dots, \X^{aug, t}\right) \\
    &\stackrel{\P}{\simeq} \E \tilde{\Phi}_N\left( \begin{bmatrix}
        \Z_{\W_0} & \Z_{\W_0}
    \end{bmatrix} \begin{bmatrix}
        \brho^{-1} \bnu^1 & \0 \\
        \0 & \I_{Q \times Q} 
    \end{bmatrix} + \begin{bmatrix}
        \W_0 & \W_0
    \end{bmatrix} \hat{\bnu}^{aug, 1} + \begin{bmatrix}
        \Z^1 & \0
    \end{bmatrix}, \right. \\
    & \qquad \left. \dots, \begin{bmatrix}
        \Z_{\W_0} & \Z_{\W_0}
    \end{bmatrix} \begin{bmatrix}
        \brho^{-1} \bnu^{t} & \0 \\
        \0 & \I_{Q \times Q} \end{bmatrix} + \begin{bmatrix}
        \W_0 & \W_0
    \end{bmatrix} \hat{\bnu}^{aug, t} + \begin{bmatrix}
        \Z^{t} & * 
    \end{bmatrix} \right) \\
    &= \E \tilde{\Phi}_N\left( \begin{bmatrix}
        \Z_{\W_0} \brho^{-1} \bnu^1 + \W_0 \hat{\bnu}^1 + \Z^1 & \Z_{\W_0}
    \end{bmatrix}, \dots, \right. \\
    & \qquad \qquad \left. \begin{bmatrix}
        \Z_{\W_0} & \Z_{\W_0}
    \end{bmatrix} \begin{bmatrix}
        \brho^{-1} \bnu^{t} & \0 \\
        \0 & \I_{Q \times Q} \end{bmatrix} + \begin{bmatrix}
        \W_0 & \W_0
    \end{bmatrix} \hat{\bnu}^{aug, t} + \begin{bmatrix}
        \Z^{t} & * 
    \end{bmatrix} \right) \\
    &= \E \Phi_N\left(\Z_{\W_0} \brho^{-1} \bnu^1 + \W_0 \hat{\bnu}^1 + \Z^1, \cdots, \Z_{\W_0} \brho^{-1} \bnu^t + \W_0 \hat{\bnu}^t + \Z^t, \Z_{\W_0} \right),
\end{align*}
and the result follows. 
\end{proof}
We next present the main reduction, mapping the AMP algorithm proposed in this work \eqref{eq:amp} to the symmetric one outlined in \eqref{eq:symm_AMP_Xt}--\eqref{eq:symm_AMP_deriv}. 
\begin{proof}[Proof of Theorem \ref{thm:SE}]
Consider the change point linear regression model \eqref{eq:chgpt-model}, and recall that it can be rewritten as \eqref{eq:model_psi}.  We reduce the algorithm \eqref{eq:amp} to \eqref{eq:symm_AMP_Xt}--\eqref{eq:symm_AMP_deriv}, following the alternating technique of \cite{javanmard_state-evolution_2013}. The idea is to define a symmetric GOE matrix with $\X$ and  $\X^\top$ on the off-diagonals. With a suitable initialization, the iteration \eqref{eq:symm_AMP_Xt}--\eqref{eq:symm_AMP_deriv} then yields $\B^{t+1}$ in the even iterations and for $\TTheta^t$ in the odd iterations. 

Let $N = n+p$. For a matrix $\bE\in\reals^{N\times L}$, we use $\bE_{[n]}$ and $\bE_{[-p]}$ to denote the first $n$ rows and  the last $p$ rows of $\bE$ respectively. Recall from \eqref{eq:amp} that $\X\in \R^{n\times p}, \B, \B^t\in \R^{p\times L}$ and $\TTheta = \X\B, \TTheta^t\in \R^{n\times L}$, and $n/p \to \delta$ as $n,p \to \infty$. We let
\begin{equation}\label{eq:def_A_W0}
\A = \sqrt{\frac{\delta}{\delta+1}} \begin{bmatrix}
\D_1 & \bX\\ \bX^\top & \D_2
\end{bmatrix} \sim \text{GOE}(N), \quad
\W_0=\begin{bmatrix}
\zero_{n\times L} \\
\B
\end{bmatrix}
\in \R^{N\times L}, \quad
\X^{0} = \begin{bmatrix}
\zero_{n \times L} \\
\B^{0}
\end{bmatrix}
\end{equation}
where $\D_1 \sim$ GOE($n$) and $\sqrt{\delta} \D_2\sim $ GOE($p$) are independent of each other and of $\X$. Let $\bXi := \begin{bmatrix} \bPsi & \bvarepsilon \end{bmatrix} \in \reals^{n \times 2}$ and define
\begin{equation}
    \varphi: (\W, \bXi) \mapsto q\left(\sqrt{\frac{\delta + 1}{\delta}} \W_{[n]}, \bPsi, \bvarepsilon \right) \in \reals^{N}, \label{eq:def_phi} 
\end{equation}
for any $\W \in \reals^{N \times L}$. We therefore have $\varphi(\A\W_0, \bXi) = q\left(\X \B, \bPsi, \bvarepsilon \right)$. Let \[\tilde{f}^t: \reals^{N \times L} \times \reals^N \to \reals^{N \times L}\] such that
\begin{align}
\begin{split}
&\tilde{f}^{2t+1}(\varphi(\A \W_0, \bXi), \U) = \sqrt{\frac{\delta+1}{\delta}} \begin{bmatrix}
g^t(\U_{[n]}, q(\X \B, \bPsi, \bvarepsilon)) \\ \0_{p \times L}
\end{bmatrix} \quad \text{and} \\
&\tilde{f}^{2t}(\varphi(\A \W_0, \bXi), \U)= \sqrt{\frac{\delta+1}{\delta}} \begin{bmatrix}
\zero_{n\times L} \\
f^t(\U_{[-p]})
\end{bmatrix}, 
\end{split} \label{eq:def_ftilde_reduction}
\end{align}
for any $\U$ in $\reals^{N \times L}$.

Next, consider the AMP iteration \eqref{eq:symm_AMP_Xt}--\eqref{eq:symm_AMP_deriv} with $\A, \W_0, \X^0, \bXi, \varphi, \tf^t$ defined as in \eqref{eq:def_A_W0}--\eqref{eq:def_ftilde_reduction}. Note that the assumptions $\ref{it:ass-sym-1}-\ref{it:ass-sym-6}$ are satisfied by construction, and hence the state evolution result in Lemma \ref{lemma:generalized_lemma_14} holds for the iteration \eqref{eq:symm_AMP_Xt}--\eqref{eq:symm_AMP_deriv}. We will now show that the state evolution equations decompose into those of interest, \eqref{eq:nu_B_SE}--\eqref{eq:kappa_theta_SE}. First, note that
\begin{align}
&\X^{2t + 1} \nonumber \\
&= \begin{bmatrix}
\X f^t(\X^{2t}_{[-p]}) - g^{t-1}(\X^{2t-1}_{[n]}, q(\X\B, \bXi)) \cdot \overbrace{ \frac{1}{n} \left(\sum_{i=1}^p \frac{\partial{f^t(\X^{2t}_{[-p]})_i}}{\partial{\X^{2t}_{[-p], i}}}\right)^\top }^{=(\F^{t})^\top}\\ \D_2 f^t(\X^{2t}_{[-p]})
\end{bmatrix}, \label{eq:SE_X_2t+1}\\
\nonumber\\
&\X^{2t} \nonumber \\
&= \begin{bmatrix}
\D_1 g^{t-1}(\X^{2t-1}_{[n]}, q(\X\B, \bXi)) \\
\X^\top g^{t-1}(\X^{2t-1}_{[n]}, q(\X\B, \bXi)) - f^{t-1}(\X^{2t-2}_{[-p]}) \cdot \underbrace{\left(\frac{1}{n} \sum_{i=1}^n \frac{\partial{g^{t-1}(\X^{2t-1}_{[n]}, q(\X\B, \bXi))}}{\partial{\X^{2t-1}_{[n],i}}} \right)^\top}_{=(\C^{t-1})^\top}
\end{bmatrix}, \label{eq:SE_X_2t}
\end{align}
and hence observe that $\X^{2t+1}_{[n]}$ and $\X^{2t}_{[-p]}$ are equal to $\TTheta^t$ and $\B^t$ in \eqref{eq:amp}  respectively. Define the main iterate $\bQ^t = \Z_{\W_0} \brho^{-1}_{\W_0} \bnu^{t} + \W_0 \hat{\bnu}^{t} + \Z^t \in\R^{N\times L}$. For $i \in [n]$, let $\tilde{g}_i^t: (\Z, \V, \bPsi, \bvarepsilon) \mapsto g_i^t(\V, q(\Z, \bPsi, \bvarepsilon))$ and let $\partial_{1i} \tilde{g}_i^t$ be the partial derivative (Jacobian) w.r.t. the $i$th row of the first argument.  Following \eqref{eq:generalized_lemma_14_init_nu}--\eqref{eq:generalized_lemma_14_kappa}, we then have that
\begin{align}
& \bnu^{2t+1} = \lim_{N \to \infty} \frac{1}{N} \sqrt{\frac{\delta + 1}{\delta}}\E\left[{\begin{bmatrix} \0_{L \times n} & \B^\top\end{bmatrix}} {\begin{bmatrix} \0_{n\times L} \\ 
f^t(\bQ^{2t}_{[-p]})\end{bmatrix}}\right], 
= \sqrt{\frac{\delta}{\delta + 1}} \lim_{n \to \infty} \frac{1}{n} \E\left[\B^\top f^t(\bQ^{2t}_{[-p]})\right]
\label{eq:nu_2t+1}\\
& \bnu^{2t} = \sqrt{\frac{\delta + 1}{\delta}} \lim_{N \to \infty} \frac{1}{N} \E \left[{\begin{bmatrix} \0_{L \times n} & \B^\top \end{bmatrix}} {\begin{bmatrix} g^{t-1}(\bQ_{[n]}^{2t-1}, \Y) \\ \0_{p\times L} \end{bmatrix}}\right] = \0_{L \times L}, \label{eq:nu_2t}\\
& \hat{\bnu}^{2t + 1} = \0_{L \times L}
\label{eq:nu_hat_2t+1}\\
& \hat{\bnu}^{2t} 
= \lim_{n \to \infty} \frac{1}{n} \E \left[\sum_{i=1}^{n} \partial_{1i}{\tilde{g}^{t-1}_i \left(\sqrt{\frac{\delta + 1}{\delta}}\Z_{\W_0,[n]}, \bQ^{2t-1}_{[n]}, \bPsi, \bvarepsilon \right)}\right] ,
\label{eq:nu_hat_2t}\\ 
& \bkappa^{2t, 2s} \notag\\
&= \lim_{n \to \infty} \frac{1}{n} \E\left[ g^{t-1}\left(\bQ^{2t-1}_{[n]}, q\left(\sqrt{\frac{\delta+1}{\delta}}\Z_{\W_0, [n]}, \bPsi, \bvarepsilon \right)\right)^\top \right. \nonumber \\
&\hspace{6cm} \left. \cdot \,  g^{s-1}\left(\bQ^{2s-1}_{[n]}, q\left(\sqrt{\frac{\delta+1}{\delta}}\Z_{\W_0, [n]}, \bPsi, \bvarepsilon\right)\right)\right] \label{eq:kappa_2t_2s}\\
& \bkappa^{2t + 1, 2s + 1} \notag \\
&= \lim_{n \to \infty}\frac{1}{n}\E\left[\left(f^t(\bQ_{[-p]}^{2t}) -\sqrt{\frac{\delta+1}{\delta}}\B\brho_{\B}^{-1}\bnu^{2t+1}\right)^\top\left(f^s(\bQ_{[-p]}^{2s}) -\sqrt{\frac{\delta+1}{\delta}}\B\brho_{\B}^{-1}\bnu^{2s+1}\right)\right]
\label{eq:kappa_2t+1_2s+1},
\end{align}
where in \eqref{eq:kappa_2t+1_2s+1} we used $\brho_{\W_0} = \frac{1}{N}\bB^{\top} \bB = \frac{\delta}{\delta+1}\brho_{\B}$.
Hence, we can associate
\begin{align}
&\bnu^{t}_{\TTheta} = \sqrt{\frac{\delta + 1}{\delta}} \bnu^{2t+1}, \quad \quad \label{eq:our_nu_Theta_in_terms_of_Lem14}
{\bnu}^{t}_{\B} = \hat{\bnu}^{2t}, \quad \quad \;\,
\bkappa_{\TTheta}^{t, s} = \bkappa^{2t+1, 2s+1},\quad 
\bkappa_{\B}^{t, s} = \bkappa^{2t, 2s},\\
&\Z  \stackrel{d}{=} \sqrt{\frac{\delta + 1}{\delta}} \Z_{\W_0, [n]},\quad\; 
\G^{t}_{\TTheta} \stackrel{d}{=} \Z^{2t+1}_{[n]}, \quad\;
\G^{t}_{\B} \stackrel{d}{=} \Z^{2t}_{[-p]},\\
&\V^{t}_{\TTheta} \stackrel{d}{=} \bQ^{2t+1}_{[n]}, \quad\quad \quad\qquad
\V^{t}_{\B} \stackrel{d}{=} \bQ^{2t}_{[-p]}.
\label{eq:our_V_B_in_terms_of_Lem14}
\end{align}
Substituting the change of variables \eqref{eq:our_nu_Theta_in_terms_of_Lem14}--\eqref{eq:our_V_B_in_terms_of_Lem14} into \eqref{eq:nu_2t+1}--\eqref{eq:kappa_2t+1_2s+1},  we obtain \eqref{eq:nu_B_SE}--\eqref{eq:kappa_theta_SE}. Moreover, substituting \eqref{eq:SE_X_2t+1}--\eqref{eq:SE_X_2t} and \eqref{eq:our_V_B_in_terms_of_Lem14} into Lemma \ref{lemma:generalized_lemma_14} yields Theorem \ref{thm:SE}. 
% }
\end{proof}
\section{State Evolution Limits and Simplifications} \label{app:SE_limits}
\subsection{Parametric Dependence of State Evolution on Signal and Noise}
In this section, we give a set of sufficient conditions for the existence of the limits in the state evolution equations \eqref{eq:nu_B_SE}--\eqref{eq:kappa_theta_SE}, which also allow for removing the parametric dependence of the state evolution on $\B, \bvarepsilon$. Assume \ref{it:simp-ass-1} on p.\pageref{it:simp-ass-1}, and also that the following assumptions hold:
\begin{enumerate}[font={\bfseries},label={(S\arabic*)}]
\item \label{it:simp-ass-2} As $n\to \infty$, the entries of the normalized change point vector $\eeta/n$ converge to constants $\alpha_0, \dots, \alpha_L$ such that $0 = \alpha_0 < \alpha_1 < \dots <\alpha_{L^*} < \ldots < \alpha_L = 1$. 
\item \label{it:simp-ass-3} For $t \geq 0$, $f^t$ is separable, and $g^t$ acts row-wise on its input. (We recall that a separable function acts row-wise \emph{and} identically on each row.)
\item \label{it:simp-ass-4} For $\ell \in [L^* - 1]$, the empirical distributions  of 
$\{\tilde{g}_i^t(\Z_1, (\V^t_{\TTheta})_1, \Psi_{\eta_\ell}, \varepsilon_i) \}_{i \in [\eta_\ell, \eta_{\ell + 1})}$ and  $\{\partial_{1}{\tilde{g}_i^t(\Z_1, (\V^t_{\TTheta})_1, \Psi_{\eta_\ell}, \varepsilon_i)}\}_{i \in [\eta_\ell, \eta_{\ell + 1})}$ converge weakly to the laws  of random variables $\hat{g}^t_{\eta_\ell}(\Z_1, (\V^t_{\TTheta})_1, \Psi_{\eta_\ell},, \bar{\varepsilon}))$ and  ${\check{g}^t}_{\eta_\ell}(\Z_{1}, (\V_{\TTheta}^t)_1, \Psi_{\eta_\ell}, \bar{\varepsilon})$,  respectively, 
where $\partial_{1}$ denotes Jacobian with respect to the first argument. 
\end{enumerate}
The assumption  \ref{it:simp-ass-2} is natural in the  regime where the number of samples $n$ is proportional to $p$, and the number of degrees of freedom in the signals also grows linearly in $p$.  Without change points,   $f^t, g^t$ can both be assumed separable without loss of optimality (due to \ref{it:simp-ass-1}).  To handle the heterogeneous dependence structure induced by change points, we require  $g^t_i$ to depend on $i$, for $i \in [n]$. However, Proposition \ref{prop:opt_ensemble_ft_gt} shows that it can be chosen to act row-wise, that is, $g^{*t}_i(\TTheta^t, \by) = g^{*t}_ i(\TTheta^t_i, y_i)$. This justifies \ref{it:simp-ass-3}. 
When $g_i^t$ is chosen to be $g_i^{*t}$ for $i \in [n]$, the condition \ref{it:simp-ass-4} can be translated into regularity conditions on the prior marginals $\pi_{\bar{\Psi}_i}$ and distributional convergence conditions on the noise $\varepsilon_i$, as these are the only quantities that differ along the elements of the sets $\{\tilde{g}_i^t(\Z_1, (\V^t_{\TTheta})_1, \Psi_{\eta_\ell}, \varepsilon_i)\}_{i \in [\eta_\ell, \eta_{\ell + 1})}$ and $\{\partial_{1}{\tilde{g}_i^t(\Z_1, (\V^t_{\TTheta})_1, \Psi_{\eta_\ell}, \varepsilon_i)}\}_{i \in [\eta_\ell, \eta_{\ell + 1})}$ for $\ell \in [L^* - 1]$. Under assumptions \ref{it:simp-ass-1}-\ref{it:simp-ass-4}, the state evolution equations in \eqref{eq:nu_B_SE}--\eqref{eq:kappa_theta_SE} reduce to: 
\begin{align}
&{\bnu}_{\B}^{t+1} = \sum_{\ell = 0}^{L-1} \E\left[  {\check{g}^{t}}_{\eta_\ell}(\Z_{1}, (\V_{\TTheta}^t)_1, \Psi_{\eta_\ell}, \bar{\varepsilon}) \right], \label{eq:nu_B_SE_simplified}\\
&\bkappa_{\B}^{s+1, t+1} = \sum_{\ell = 0}^{L-1} \E\left[\hat{g}^s_{\eta_\ell}((\V_{\TTheta}^s)_1, q_1(\Z_{1}, \Psi_{\eta_\ell}, \bar{\varepsilon}))
\ \hat{g}^t_{\eta_\ell}((\V_{\TTheta}^t)_1, q_1(\Z_{1}, \Psi_{\eta_\ell}, \bar{\varepsilon}))^\top   \right], \label{eq:kappa_B_SE_simplified} \\
&\bnu^{t+1}_{\TTheta} = \frac{1}{\delta} \E\left[\bar{\B} f^{t+1}_1(({\bnu}^{t+1}_{\B})^\top \bar{\B}  + (\G^{t+1}_{\B})_1 )^\top  \right], \label{eq:nu_theta_SE_simplified}\\
&\bkappa_{\TTheta}^{s+1, t+1}  = \frac{1}{\delta} \E\left[\left(f^{s+1}_1(({\bnu}^{s+1}_{\B})^\top \bar{\B}  + ({\G}^{s+1}_{\B})_1) - (\bnu^{s+1}_{\TTheta})^\top \brho^{-1} \bar{\B} \right) \right. \notag \\
&\hspace{4cm} \left. \left(f^{t+1}_1(({\bnu}^{t+1}_{\B})^\top \bar{\B} + (\G^{t+1}_{\B})_1) -(\bnu^{t+1}_{\TTheta})^\top \brho^{-1} \bar{\B} \right)^\top \right], \label{eq:kappa_theta_SE_simplified}
\end{align}
where \ref{it:simp-ass-3} has allowed us to reduce the matrix products into sums of vector outer-products, and assumptions \ref{it:simp-ass-1}--\ref{it:simp-ass-4} have removed the parametric dependence on $\B$ and $\bvarepsilon$ due to the law of large numbers argument in Lemma 4 of \cite{Bayati_2011_dynamics}. 
\subsection{Removing Dependencies on Noise on the RHS of \eqref{eq:hausdorff_convergence} and \eqref{eq:posterior_convergence}}
Under assumptions \ref{it:simp-ass-1}--\ref{it:simp-ass-4} above, the parametric dependence of $\V_{\TTheta}^t$ on the RHS of \eqref{eq:hausdorff_convergence}--\eqref{eq:size_convergence} on $\B, \bvarepsilon$ can be removed. Moreover, the parametric dependence of the RHS on $\bvarepsilon$ through $\hat{\eeta}$ can be removed on a case-by-case basis. We expect this to hold, for example, under \ref{it:simp-ass-1}--\ref{it:simp-ass-4} for estimators whose parametric dependence on $\bvarepsilon$ is via a normalized inner product. For example the estimator in  \eqref{eq:example_estimator} can be expressed as:
\begin{align}
    &U(\hat{\eeta}(\V_{\TTheta}^t, q(\Z, U(\eeta), \bvarepsilon))) = \argmax_{\psi \in \mathcal{X}} \frac{\langle (\V_{\TTheta}^t)_{[:, \bpsi]}, \Z_{[:, \bpsi]} \rangle}{n} + \frac{\langle (\V_{\TTheta}^t)_{[:, \bpsi]}, \bvarepsilon \rangle}{n}.
\end{align}
Indeed, the numerical experiments in Section \ref{sec:experiments} for chosen estimators demonstrate a strong agreement between the left-hand and right-hand sides of \eqref{eq:hausdorff_convergence} and of \eqref{eq:posterior_convergence}, when $\bvarepsilon$ on the right-hand side is substituted by an independent copy with the same limiting moments. 

\section{Proof and Computation of Optimal Denoisers $f^t, g^t$ and likelihood $\cL$}
\subsection{Proof of Proposition \ref{prop:opt_ensemble_ft_gt}}\label{sec:optimal_ft_gt_proof}
This proof is similar to the derivation of the optimal denoisers for mixed regression in \cite{tan_mixed_2023}, with a few differences in the derivation of $g^{*t}$.  We include both parts here for completeness. 
The proof relies on  Lemma \ref{lem:cauchy} and Lemma \ref{lem:steins}, Cauchy-Schwarz inequality and Stein's Lemma extended to vector or matrix random variables. 
Recall that we treat vectors, including rows of matrices, as column vectors, and that functions $h: \R^q\to \R^q$ have column vectors as input and output. 
\begin{lemma} [Matrix Cauchy-Schwarz inequality, Lemma 2 in \cite{lavergne2008Cauchy}]\label{lem:cauchy}
\leavevmode \\ % forces a break after the heading
Let $\bA,\bB\in\reals^{n\times L}$ be random matrices such that $\E\|\bA\|_F^2<\infty, \E\|\bB\|_F^2<\infty,$ and $\E [\A^\top \A]$ is non-singular, then 
\begin{equation}
\E [\B^\top\B] - \E [\B^\top\A] (\E [\A^\top\A])^{-1} \E [\A^\top\B] \succcurlyeq 0.   
\end{equation}
\end{lemma}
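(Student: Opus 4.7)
The plan is to realize the target matrix as the second moment of an affine function of $\A$ and $\B$, which is manifestly positive semi-definite, and hence obtain the inequality by completing the square in a matrix variable.

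Specifically, I would set $M := (\E[\A^\top \A])^{-1} \E[\A^\top \B] \in \reals^{L \times L}$; this matrix is well-defined by the non-singularity assumption on $\E[\A^\top \A]$, and the entries of $\E[\A^\top \B]$ are finite because entry-wise Cauchy--Schwarz combined with the Frobenius moment bounds gives $|\E[A_{ij} B_{ik}]|^2 \leq \E[A_{ij}^2]\, \E[B_{ik}^2] < \infty$. I would then form the residual random matrix $\B - \A M \in \reals^{n \times L}$ and consider its Gram matrix $\E[(\B - \A M)^\top (\B - \A M)]$. This matrix is positive semi-definite by construction, since for any $\v \in \reals^L$,
\[
\v^\top\, \E\!\left[(\B - \A M)^\top (\B - \A M)\right] \v \;=\; \E \|(\B - \A M) \v\|_2^2 \;\geq\; 0.
\]

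Next, I would expand the Gram matrix by linearity of expectation:
\[
\E[(\B - \A M)^\top (\B - \A M)] = \E[\B^\top \B] - \E[\B^\top \A]\, M - M^\top\, \E[\A^\top \B] + M^\top\, \E[\A^\top \A]\, M.
\]
Substituting the chosen $M$ and using the symmetry of $(\E[\A^\top \A])^{-1}$ (which holds since $\E[\A^\top \A]$ is symmetric), each of the two cross terms simplifies to $\E[\B^\top \A] (\E[\A^\top \A])^{-1} \E[\A^\top \B]$, and the quadratic term reduces to the same expression because $(\E[\A^\top \A])^{-1} \E[\A^\top \A] (\E[\A^\top \A])^{-1} = (\E[\A^\top \A])^{-1}$. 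Combining signs yields precisely $\E[\B^\top \B] - \E[\B^\top \A] (\E[\A^\top \A])^{-1} \E[\A^\top \B]$, which therefore equals the positive semi-definite matrix on the left-hand side and so is itself positive semi-definite, proving the claim.

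There is no real obstacle in this argument; the entire content lies in the choice of $M$, which corresponds to completing the square in the matrix-valued quadratic functional $M \mapsto \E[(\B - \A M)^\top (\B - \A M)]$. The only bookkeeping points are the verification that the mixed moments appearing in the expansion are finite (handled by scalar Cauchy--Schwarz and the Frobenius moment hypothesis) and the tracking of transposes to see that the two cross terms coincide.
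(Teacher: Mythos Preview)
Your argument is correct and is the standard ``completing the square'' proof of the matrix Cauchy--Schwarz inequality. The paper does not actually supply a proof of this lemma; it merely states it and attributes it to Lemma~2 of \cite{lavergne2008Cauchy}, so there is nothing to compare against beyond noting that your construction of the residual $\B - \A M$ with $M = (\E[\A^\top \A])^{-1}\E[\A^\top \B]$ is exactly the classical route.
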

\begin{lemma}[Extended Stein's Lemma]\label{lem:steins}
     Let $\x\in\reals^L$ and $h: \reals^{ L} \to \reals^L $ be such that for  $\ell\in[L]$, the function $h_\ell: x_{\ell} \to [h(\x)]_\ell$ is absolutely continuous for Lebesgue almost every $(x_i: i\neq \ell)\in \reals^{L-1} $, with weak derivative $\partial h_\ell(\x)/\partial x_\ell: \reals^{L} \to \reals$ satisfying $\E |\partial h_\ell({\x})/\partial {x}_{\ell} | <\infty$.
      If ${\x}\sim\N(\bmu, \bSigma)$ with $\bmu\in\reals^{L}$ and $\bSigma\in\reals^{L\times L} $ positive definite, then 
      \begin{equation}
      \E\left[(\x-\bmu) h(\x)^\top\right] = \bSigma \, \E[h'(\x)] ^\top,
      \end{equation}
      where %$\nabla h(\x) = d h(\x)/ d\x$ 
      $h'(\x)$ is the Jacobian matrix of $h$. 
      %\RV{changed notation for Jacobian}
\end{lemma}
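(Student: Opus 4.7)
The plan is to reduce the matrix-valued identity to scalar identities entrywise, and to prove each scalar identity by integration by parts against the Gaussian density. Concretely, the $(i,j)$ entry of $\E[(\x - \bmu) h(\x)^\top]$ is $\E[(x_i - \mu_i) h_j(\x)]$ while the $(i,j)$ entry of $\bSigma \E[h'(\x)]^\top$ equals $\sum_k \Sigma_{ik} \E[\partial h_j/\partial x_k]$. Hence the lemma is equivalent, for each fixed $j \in [L]$, to the vector identity
\begin{equation*}
\E[(\x - \bmu)\, h_j(\x)] \;=\; \bSigma\, \E[\nabla h_j(\x)],
\end{equation*}
i.e.\ the standard multivariate Stein identity applied to the scalar-valued component $h_j$.

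To establish this, I would exploit the key structural property of the Gaussian density $p$ of $\N(\bmu,\bSigma)$, namely $\nabla_\x p(\x) = -p(\x)\, \bSigma^{-1}(\x - \bmu)$. Reading off coordinate $k$ and integrating against $h_j$,
\begin{equation*}
\E\bigl[(\bSigma^{-1}(\x - \bmu))_k\, h_j(\x)\bigr] \;=\; -\!\int \frac{\partial p}{\partial x_k}(\x)\, h_j(\x)\, d\x \;=\; \int p(\x)\, \frac{\partial h_j}{\partial x_k}(\x)\, d\x \;=\; \E[\partial h_j/\partial x_k],
\end{equation*}
where the middle step is integration by parts in the $x_k$ variable (with Fubini used to separate out the remaining coordinates). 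Stacking over $k$ gives the matrix identity $\E[\bSigma^{-1}(\x - \bmu)\, h(\x)^\top] = \E[h'(\x)]^\top$, and left-multiplying by $\bSigma$ yields the stated result.

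The main obstacle is justifying the integration by parts — showing the boundary terms at $x_k = \pm\infty$ vanish and that one may differentiate under the integral. Under the stated regularity (absolute continuity of $h_j$ in each coordinate for Lebesgue-a.e.\ fixing of the remaining coordinates, plus integrability $\E|\partial h_j/\partial x_k| < \infty$, read as covering all $(j,k)$ pairs since the conclusion references the full Jacobian), this is standard: approximate $h$ by compactly supported smooth functions via mollification, for which the identity is immediate, and pass to the limit using dominated convergence, with the rapid Gaussian tail decay of $p$ absorbing the polynomial growth of $h$ and the assumed integrability controlling the right-hand side. An alternative route that avoids touching the regularity directly is to whiten by $\y = \bSigma^{-1/2}(\x - \bmu) \sim \N(\0, \I_L)$, apply the classical univariate Stein identity coordinate by coordinate to $\tilde h_j(\y) := h_j(\bmu + \bSigma^{1/2}\y)$ (using independence of the $y_k$'s and conditioning on all but one coordinate), and reassemble via the chain rule $\nabla_\y \tilde h_j = \bSigma^{1/2} \nabla_\x h_j$ together with symmetry of $\bSigma^{1/2}$.
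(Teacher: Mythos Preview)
Your argument is correct and is the standard proof of the multivariate Stein identity: use $\nabla p(\x) = -p(\x)\,\bSigma^{-1}(\x-\bmu)$ and integrate by parts coordinate-wise, or equivalently whiten and apply the univariate lemma. The entrywise reduction is right, the index bookkeeping (so that you end up with $\bSigma\,\E[h'(\x)]^\top$ rather than $\bSigma\,\E[h'(\x)]$) is correct, and you flag appropriately that the hypothesis should be read across all $(j,k)$ pairs since the conclusion involves the full Jacobian.

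As for comparison with the paper: the paper does not give an independent argument here---it simply records that the lemma follows from Lemma~6.20 of \cite{feng_unifying_2022}. Your write-up is therefore more explicit than what the paper provides, and is essentially the content of that cited result. There is no methodological difference to report.
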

The proof of Lemma \ref{lem:steins} follows from Lemma 6.20 of \citep{feng_unifying_2022}.

\paragraph{Proof of part \ref{enum:ensemble_ft} (optimal $f^t$).} Using the law of total expectation and applying \eqref{eq:opt_ensemble_ft_lem}, 
we can rewrite  $ \bar{\bnu}_{\TTheta}^{t}$ in \eqref{eq:nu_bar_theta_SE}  as:
\begin{align}
\bar{\bnu}_{\TTheta}^{t} & = \frac{1}{\delta} \lim_{p \to \infty} \frac{1}{p}\sum_{j=1}^p \E\left[\bar{\B} \left(f_j^{t}(\bar{\V}_{\B}^{t})\right)^\top \right] 
=\frac{1}{\delta} \E \left[\bar{\B} \left( f_j^{t}(\bar{\V}_{\B}^t) \right)^\top\right] 
= \frac{1}{\delta}\E \left[\E\left[\bar{\B}\left(  f_j^{t}(\bar{\V}_{\B}^t)\right)^\top \big| \bar{\V}_{\B}^t \right] \right] \nonumber \\
&=\frac{1}{\delta}\E \left[\E\left[\bar{\B}  | \bar{\V}_{\B}^t \right]  \left(f_j^{t}(\bar{\V}_{\B}^t)\right)^\top\right] = \frac{1}{\delta} \E\left[f_j^{*t} (f_j^{t})^\top\right],\label{eq:nu_theta_expanded}
\end{align}
where we used the shorthand $f_j^t\equiv f_j^{t}(\bar{\V}_{\B}^t)$ and $f_j^{*t}\equiv f_j^{*t}(\bar{\V}_{\B}^t)= \E[\bar{\B}|\bar{\V}_{\B}^t]$. Applying Lemma \ref{lem:cauchy} yields
\begin{align}
    \E\left[f_j^{*t}(f_j^{*t})^\top\right] - \E\left[f_j^{*t}(f_j^t)^\top\right]
    \E\left[f_j^{t}(f_j^t)^\top\right]^{-1}
    \E\left[f_j^{t}(f_j^{*t})^\top\right] 
    \succcurlyeq 0.\label{eq:f_cauchy}
\end{align}
Since \eqref{eq:kappa_bar_theta_SE} can be simplified into 
\begin{align}
\bar{\bkappa}_{\TTheta}^{t,t}=\frac{1}{\delta}\E\left[f_j^{t}(f_j^t)^\top\right] - (\bar{\bnu}_{\TTheta}^t)^\top \brho^{-1} \bnu_{\TTheta}^t  ,\label{eq:kappa_theta_simplified}
\end{align}
using \eqref{eq:nu_theta_expanded} and \eqref{eq:kappa_theta_simplified} in \eqref{eq:f_cauchy}, we obtain that
\begin{align*}
   \bDelta :=\frac{1}{\delta} \E\left[f_j^{*t}(f_j^{*t})^\top\right] - \bar{\bnu}_{\TTheta}^t
    \left[ \bar{\bkappa}_{\TTheta}^{t, t}+ (\bar{\bnu}_{\TTheta}^t)^\top \brho^{-1} (\bnu_{\TTheta}^t)\right]^{-1}
    (\bar{\bnu}_{\TTheta}^t)^\top  \succcurlyeq 0.
\end{align*}
Adding and subtracting $\bar{\bkappa}_{\TTheta}^{t,t}$ on the LHS gives  
 $ \bar{\bkappa}_{\TTheta}^{t,t}-\bar{\bkappa}_{\TTheta}^{t,t}+\bDelta\succcurlyeq 0$. 
Left multiplying 
 by $ \brho^\top \left((\bar{\bnu}_{\TTheta}^t)^{-1}\right)^\top$ and right multiplying by $ (\bar{\bnu}_{\TTheta}^t)^{-1}\brho$ maintains the positive semi-definiteness of the LHS and  further yields
 \begin{align}
  \brho^\top \left((\bar{\bnu}_{\TTheta}^t)^{-1}\right)^\top\bar{\bkappa}_{\TTheta}^{t,t}(\bar{\bnu}_{\TTheta}^t)^{-1}\brho - \underbrace{\brho^\top \left((\bar{\bnu}_{\TTheta}^t)^{-1}\right)^\top (\bar{\bkappa}_{\TTheta}^{t,t}-\bDelta)(\bar{\bnu}_{\TTheta}^t)^{-1}\brho }_{=:\bGamma_{\TTheta}^t} \succcurlyeq 0,
 \end{align}
 which implies
\begin{align}\label{eq:ft_trace_inequality}
     \trace\left( \brho^\top \left((\bar{\bnu}_{\TTheta}^t)^{-1}\right)^\top\bar{\bkappa}_{\TTheta}^{t,t}(\bar{\bnu}_{\TTheta}^t)^{-1}\brho\right) \ge   \trace\left(\bGamma_{\TTheta}^t \right).
 \end{align}
Recall from \eqref{eq:alternate_trace_theta} that the LHS of \eqref{eq:ft_trace_inequality} is the objective we wish to minimise via optimal $f^t$. Indeed, setting  $f^{t}=f^{*t}$, \eqref{eq:ft_trace_inequality} is satisfied with equality, proving part \ref{enum:ensemble_ft} of Proposition \ref{prop:opt_ensemble_ft_gt}.

\begin{remark}\label{rmk:opt_ft_se_simplification}
  When $f^t=f^{*t}$,  \eqref{eq:nu_theta_expanded} and \eqref{eq:kappa_theta_simplified}   reduce to 
$
\bar{\bkappa}_{\TTheta}^{t,t} =\bar{\bnu}_{\TTheta}^t - (\bar{\bnu}_{\TTheta}^t)^\top \brho^{-1}\bar{\bnu}_{\TTheta}^t
$.
\end{remark}

\paragraph{Proof of part \ref{enum:ensemble_gt} (optimal $g^t$)} 
Recall from \eqref{eq:nu_bar_B_SE}  that 
\[\bar{\bnu}_{\B}^{t+1} = \lim_{n \to \infty} \frac{1}{n} \sum_{i=1}^n \E\left[\partial_{1i} \tilde{g}_i^t(\Z_1, (\bar{\V}_{\TTheta}^t)_1, \bar{\Psi}_i, \bar{\varepsilon}) \right].\]
We can rewrite the transpose of each summand as follows:
\begin{align}
&\E\left[\partial_{1i} \tilde{g}_i^t(\Z_1, (\bar{\V}_{\TTheta}^t)_1, \bar{\Psi}_i, \bar{\varepsilon}) \right]^\top\nonumber
 \\
&\stackrel{(a)}{=}
\E_{(\bar{\V}_{\TTheta}^t)_1}\left\lbrace
\E_{\Z_1, \bar{\varepsilon}}\left[\partial_{1i}\tilde{g}_i^t\left(\Z_1, (\bar{\V}_{\TTheta}^t)_1, \bar{\Psi}_i, \bar{\varepsilon}\right) \bigg| (\bar{\V}_{\TTheta}^t)_1\right] \right\rbrace^\top\nonumber\\
&\stackrel{(b)}{=} 
\E_{(\bar{\V}_{\TTheta}^t)_1}\left\lbrace
\cov\left(\Z_1|(\bar{\V}_{\TTheta}^t)_1\right)^{-1} \right. \nonumber\\
&\hspace{0.5cm} \left. 
\cdot \E_{\Z_1, \bar{\Psi}_i, \bar{\varepsilon}}\left[\left(\Z_1 - \E[\Z_1|(\bar{\V}_{\TTheta}^t)_1]\right) \tilde{g}_i^t\left(\Z_1, (\bar{\V}_{\TTheta}^t)_1, \bar{\Psi}_i, \bar{\varepsilon}\right)^\top \bigg|(\bar{\V}_{\TTheta}^t)_1 \right]
\right\rbrace\nonumber\\
&\stackrel{(c)}{=}
\E_{(\bar{\V}_{\TTheta}^t)_1}\left\lbrace
\cov\left(\Z_{1}|(\bar{\V}_{\TTheta}^t)_1\right)^{-1} \right. \nonumber\\
& \hspace{0.5cm} \left. \cdot \E_{q(\Z_1, \bar{\Psi}_i, \bar{\varepsilon})}\left[\E_{\Z_1}\left[\left(\Z_1 - \E[\Z_1|(\bar{\V}_{\TTheta}^t)_1]\right) g_i^t\left( (\bar{\V}_{\TTheta}^t)_1, q(\Z_1, \bar{\Psi}_i, \bar{\varepsilon})\right)^\top \bigg|(\bar{\V}_{\TTheta}^t)_1, q(\Z_1, \bar{\Psi}_i, \bar{\varepsilon})\right] \right]
\right\rbrace\nonumber\\
&=\E_{(\bar{\V}_{\TTheta}^t)_1}\left\lbrace
\cov\left(\Z_{1}|(\bar{\V}_{\TTheta}^t)_1\right)^{-1} \right. \notag \\
& \hspace{0.5cm} \left.
\cdot \E_{q(\Z_1,  \bar{\Psi}_i, \bar{\varepsilon})}\left[\left(
\E[\Z_1|(\bar{\V}_{\TTheta}^t)_1, q(\Z_1, \bar{\Psi}_i, \bar{\varepsilon})]- \E[\Z_1|(\bar{\V}_{\TTheta}^t)_1]\right) g_i^t\left((\bar{\V}_{\TTheta}^t)_1, q(\Z_1, \bar{\Psi}_i, \bar{\varepsilon})\right)^\top\right]  
\right\rbrace\nonumber \\[18pt]
& \stackrel{(d)}{=} 
\E\left\lbrace
\smash[t]{\overbrace{\cov\left(\Z_{1}|(\bar{\V}_{\TTheta}^t)_1\right)^{-1} 
\left(\E\left[
\Z_1 |(\bar{\V}_{\TTheta}^t)_1, q(\Z_1, \bar{\Psi}_i, \bar{\varepsilon})\right] - \E[\Z_{1}|(\bar{\V}_{\TTheta}^t)_1]\right)}^{=g_i^{*t}\left((\bar{\V}_{\TTheta}^t)_1, q(\Z_1, \bar{\Psi}_i, \bar{\varepsilon})\right)}} \right. \nonumber \\
&\hspace{10.0cm} \left. \cdot g_i^t\left((\bar{\V}_{\TTheta}^t)_1, q(\Z_1, \bar{\Psi}_i, \bar{\varepsilon})\right)^\top
\right\rbrace \nonumber\\
&= \E\left[g_i^{*t} (g^{t}_i)^\top\right]
\label{eq:E_partial_g_expand},
\end{align}
where (a) and (c) follow from the law of total expectation; (b) uses Lemma \ref{lem:steins}; and (d) uses \eqref{eq:opt_ensemble_gt_lem}.  In the last line of \eqref{eq:E_partial_g_expand} we have used the shorthand $g_i^t\equiv g_i^t\left((\bar{\V}_{\TTheta}^t)_1, q(\Z_1, \bar{\Psi}_i, \bar{\varepsilon})\right)$ and $g_i^{*t}\equiv g_i^{*t}\left((\bar{\V}_{\TTheta}^t)_1, q(\Z_1, \bar{\Psi}_i, \bar{\varepsilon})\right) $. Substituting \eqref{eq:E_partial_g_expand} in \eqref{eq:nu_bar_B_SE} yields
\begin{align}
    \bar{\bnu}_{\B}^{t+1} = \lim_{n\to \infty}\frac{1}{n}\sum_{i=1}^n \E[g_i^{t}(g_i^{*t})^\top]
    = \lim_{n\to \infty} \E\left[\frac{1}{n}(g^{t})^\top g^{*t}\right].\label{eq:nu_B_expanded}
\end{align}
 Note that since $\pi_{\bar{\Psi}_i}$ differs across $i$, $g^t_i$ differs across $i$ and  the sum in \eqref{eq:nu_B_expanded} cannot be reduced.  Lemma \ref{lem:cauchy} implies that
 \begin{align}
\E\left[\frac{1}{n}(g^{*t})^\top g^{*t}\right] - \E\left[\frac{1}{n}(g^{*t})^\top(g^t)\right]
    \E\left[\frac{1}{n}(g^{t})^\top g^t\right]^{-1}
    \E\left[\frac{1}{n}(g^{t})^\top g^{*t}\right] 
    \succcurlyeq 0.\label{eq:g_cauchy}
 \end{align}
 Recalling from \eqref{eq:nu_bar_B_SE}-\eqref{eq:kappa_bar_B_SE} that the limiting
parameters $ \bar{\bnu}_{\B}^{t+1} = \lim_{n\to \infty}\E\left[\frac{1}{n} (g^{t})^\top g^{*t}\right]$ and $\bar{\bkappa}_{\B}^{t+1,t+1} = \lim_{n\to \infty} \E\left[\frac{1}{n} (g^t)^\top g^t\right]$ exist, we can take limits to obtain
 \begin{align}
   &\lim_{n\to\infty} \E\left[\frac{1}{n}(g^{*t})^\top g^{*t}\right] \notag \\
&\qquad - \underbrace{\lim_{n\to\infty}\E \left[\frac{1}{n}\left(g^{*t}\right)^\top g^t\right] }_{\left(\bar{\bnu}_{\B}^{t+1}\right)^\top}
\underbrace{\left(\lim_{n\to\infty} \E \left[\frac{1}{n}(g^t)^\top g^t\right]\right)^{-1}}_{\left(\bar{\bkappa}_{\B}^{t+1,t+1}\right)^{-1}}
    \underbrace{\lim_{n\to\infty} \E \left[\frac{1}{n}(g^t)^\top g^{*t}\right]}_{\bar{\bnu}_{\B}^{t+1}} \succcurlyeq  0.
 \end{align}
 Left multiplying $\bar{\bnu}_{\B}^{t+1} \left((\bar{\bnu}_{\B}^{t+1})^{-1}\right)^{\top}$ and right multiplying $(\bar{\bnu}_{\B}^{t+1} )^{-1}\left(\bar{\bnu}_{\B}^{t+1}\right)^{\top}$ on the LHS maintains the positive semi-definiteness of the LHS to give
 \begin{equation}
 \underbrace{\bar{\bnu}_{\B}^{t+1} \left((\bar{\bnu}_{\B}^{t+1})^{-1}\right)^{\top}\lim_{n\to\infty} \E\left[\frac{1}{n}(g^{*t})^\top g^{*t}\right]
 (\bar{\bnu}_{\B}^{t+1} )^{-1}\left(\bar{\bnu}_{\B}^{t+1}\right)^{\top}}_{=:\bGamma_{\B}^{t+1}}
 -
 \bar{\bnu}_{\B}^{t+1} \left(\bar{\bkappa}_{\B}^{t+1, t+1}\right)^{-1}\left(\bar{\bnu}_{\B}^{t+1}\right)^{\top}
    \succcurlyeq 0,
\end{equation}
Moreover, 
 since for positive definite matrices $\bGamma_1$ and $ \bGamma_2$,  $\bGamma_1 -\bGamma_2\succcurlyeq  0$ implies $\bGamma_1^{-1} -\bGamma_2^{-1}\preccurlyeq  0$, we have that
\begin{equation}
[(\bar{\bnu}_{\B}^{t+1})^{-1}]^\top
    \bar{\bkappa}_{\B}^{t+1,t+1} (\bar{\bnu}_{\B}^{t+1})^{-1} 
    -\bGamma_{\B}^{t+1} 
    \succcurlyeq 0,
\end{equation}
which implies
\begin{align*}
\trace\left([(\bar{\bnu}_{\B}^{t+1})^{-1}]^\top
    \bar{\bkappa}_{\B}^{t+1,t+1} (\bar{\bnu}_{\B}^{t+1})^{-1} \right) \geq \trace\left(\bGamma_{\B}^{t+1}\right).
\end{align*}
Recall from \eqref{eq:alternate_trace_B} that $g^t$ is optimized by minimising $ \trace\left([(\bar{\bnu}_{\B}^{t+1})^{-1}]^\top
    \bar{\bkappa}_{\B}^{t+1,t+1} (\bar{\bnu}_{\B}^{t+1})^{-1}\right)$. Indeed, by choosing $g^t=g^{*t}$, the objective  achieves its lower bound, which completes the proof.
\begin{remark}\label{rmk:opt_gt_se_simplification}
    Note   setting $g^t=g^{*t}$ leads to  
   $        \bar{\bnu}_{\B}^{t+1} = \bar{\bkappa}_{\B}^{t+1, t+1}$. 
\end{remark}
\subsection{Computation of the Optimal Denoiser $g^{*t}$}\label{sec:full_computation_gt}
In this section, we compute the optimal denoiser $g^{*t}$ for the linear, logistic, and rectified linear models. We shall  use $\phi(\bx; \bmu, \bSigma)$ to denote the probability density function (PDF) of $\normal(\bmu, \bSigma)$ evaluated at $\bx$. We use  $\phi(x)$ to denote the PDF of a standard scalar Gaussian $\normal(0,1)$ evaluated at $x$, and $\Phi(x)$ to denote its cumulative density function (CDF) up to $x$.  For a generic random vector $\bU$, with a slight abuse of notation we will use $\P(
\bU = \bu)$ to denote its density evaluated at $\bu$.
We begin with the following standard lemmas  as useful preliminaries.
\begin{lemma}[Conditioning property of multivariate Gaussian]\label{lem:Gaussian_condition}
Let $\x\in \reals^n$ and  $\y\in \reals^m$ be jointly Gaussian, with \[\begin{bmatrix}
            \x\\ \y
        \end{bmatrix} \sim \N\left(\begin{bmatrix}
            \bmu_{x}\\\bmu_{y}
        \end{bmatrix},
        \begin{bmatrix}
            \bSigma_{x} & \bSigma_{xy}\\
            \bSigma_{xy}^\top &  \bSigma_{y}
        \end{bmatrix}\right).\] 
Then $(\bx\mid \by=\tilde{\by})\sim \normal(\bmu, \bSigma)$ where
$$
\bmu=\bmu_{x} + \bSigma_{xy}\bSigma_{y}^{-1}(\tilde{\y} - \bmu_{y}), 
\quad 
\bSigma= \bSigma_{x} - \bSigma_{xy}\bSigma_{y}^{-1} \bSigma_{xy}^{\top}.$$
\end{lemma} 
\begin{lemma}[Conditioning property of  Gaussian mixtures]\label{lem:gau_mixture_cond}
Let  $\Gamma \sim \pi_\Gamma$ be a categorical variable supported on $[1, 2, \dots, K]$. With $\x\in\reals^{n}$ and $\y\in\reals^{m}$, suppose  
\begin{align}
\begin{bmatrix}
    \x \\ \y
\end{bmatrix}\sim
 \N\left( \begin{bmatrix}
    \bmu_{k, x} \\ \bmu_{k, y} 
\end{bmatrix},
\begin{bmatrix}
 \bSigma_{k,x}, \bSigma_{k,xy}\\
  \bSigma_{k,xy}^\top, \bSigma_{k,y}
\end{bmatrix}\right)   \qquad \text{if}\quad \Gamma=k.
\end{align}
Then, with $\Gamma$ is independent of $\bx, \by$, we have
\begin{align}
    \E[\x \mid  \y = \y_0] & = \frac{\sum_{k=1}^K \pi_\Gamma(k)\phi(\y_0; \bmu_{k, y}, \bSigma_{k, y}) \E[\x \mid \y = \y_0, \Gamma=k]}{\sum_{\tilde{k}=1}^K \pi_\Gamma(\tilde{k})\phi(\y_0; \bmu_{\tilde{k}, y}, \bSigma_{\tilde{k},y})}.
\label{eq:gau_mixture_cond_expectation}
\end{align}
\end{lemma}
\subsubsection{Computation of $g^{*t}$ for Linear Regression} \label{sec:full_computation_gt_linear}
Assume   Gaussian noise $\bar{\varepsilon}\sim \N(0, \sigma^2)$. 
Using \eqref{eq:V_bar_TTheta}, we have  
\begin{align}
\label{eq:Z1_vtheta_cov}
\begin{bmatrix}
 \Z_1\\   (\bar{\V}_{\TTheta}^t)_1
\end{bmatrix}\sim   \N_{2L}(\bzero, \bGamma^t), \quad  \text{where}\quad  
% \bSigma^t_{\TTheta}
\bGamma^t: = \begin{bmatrix}
        \brho & \bar{\bnu}_{\TTheta}^t \\
        (\bar{\bnu}_{\TTheta}^t)^\top & \bSigma_{\V}^t
    \end{bmatrix}, \quad \bSigma_{\V}^t=(\bar{\bnu}_{\TTheta}^t)^\top \brho^{-1} \bar{\bnu}_{\TTheta}^t + \bar{\bkappa}_{\TTheta}^{t, t}.
\end{align}
 Lemma \ref{lem:Gaussian_condition} then gives the formulas for $\E[\Z_1 | (\bar{\V}_{\TTheta}^t)_1 = \V_i]$ and $\cov(\Z_1 | (\bar{\V}_{\TTheta}^t)_1= \V_i) $, the first and last terms of $g^{*t}_i$ in \eqref{eq:opt_ensemble_gt_lem}:
 \begin{align}
   \E[\Z_1 | (\bar{\V}_{\TTheta}^t)_1 = \V_i] & =
  \bar{\bnu}_{\TTheta}^t \left(\bSigma_{\V}^t\right)^{-1}  \V_i,\nonumber\\
    \cov(\Z_1 | (\bar{\V}_{\TTheta}^t)_1 =\V_i) & = \brho - (\bar{\bnu}_{\TTheta}^t)^\top 
  \left(\bSigma_{\V}^t\right)^{-1}\bar{\bnu}_{\TTheta}^t.
  \label{eq:cov_avg_Z1_given_V}
 \end{align}
 When $f^t=f^{*t}$, we can use the simplifications in Remark \ref{rmk:opt_ft_se_simplification} to obtain:
\begin{align*}
\cov(\Z_1 | (\bar{\V}_{\TTheta}^t)_1 = \V_i) = \brho - \bar{\bnu}_{\TTheta}^t \quad  \text{and}\quad    \E[\Z_1 | (\bar{\V}_{\TTheta}^t)_1 = \V_i] = \V_i.
\end{align*}
We now calculate the middle term $\E[\Z_1 | (\bar{\V}_{\TTheta}^t)_1  = \V_i, q(\Z_1, \bar{\Psi}_i, \bar{\varepsilon}) = u_i]$ of $g^{*t}_i$ in \eqref{eq:opt_ensemble_gt_lem}. Recalling from \eqref{eq:model_psi} that $ q(\Z_1, \bar{\Psi}_i, \bar{\varepsilon}) = (Z_1)_{\bar{\Psi}_i} +\bar{\varepsilon}$ where  $\Z_1\sim \N(\bzero, \brho)$, we have 
\begin{equation}
\label{eq:Z1q_corr}
\begin{split}
   & \E[\Z_1 q(\Z_1, \bar{\Psi}_i, \bar{\varepsilon})\ |\bar{\Psi}_i=\ell]
    =\E[\Z_1 (Z_1)_{\ell}] = \brho_{[:, \ell]} ,\\
    &\E[q(\Z_1, \bar{\Psi}_i, \bar{\varepsilon})^2\  |\bar{\Psi}_i=\ell]
=\E[(Z_1)_\ell^2 +\bar{\varepsilon}^2]
=\rho_{\ell,\ell} +\sigma^2,\\
&   \E[(\bar{\V}_{\TTheta}^t)_1q(\Z_1, \bar{\Psi}_i, \bar{\varepsilon})\ |\bar{\Psi}_i=\ell] 
= \E[(\Z \brho^{-1}\bar{\bnu}_{\TTheta}^t)_1 (Z_1)_{\ell} ] 
= (\bar{\bnu}_{\TTheta}^t)^\top\brho^{-1}\brho_{[:,\ell]}
=\left((\bar{\bnu}_{\TTheta}^t)_{[\ell,:]}\right)^\top,
\end{split}
\end{equation}
which implies that conditioned on $ \bar{\Psi}_i=\ell$, 
\begin{align}
\label{eq:vtheta_q_cov}
 &\begin{bmatrix}
    (\bar{\V}_{\TTheta}^t)_1\\
        q(\Z_1, \bar{\Psi}_i, \bar{\varepsilon})
    \end{bmatrix} 
\sim\N_{L+1}\left(\bzero, \bSigma^t\right), \; \text{where} \; \bSigma^t := 
\begin{bmatrix}
\bSigma_{\V}^t
&  \left((\bar{\bnu}_{\TTheta}^t)_{[\ell, :]}\right)^\top \\
(\bar{\bnu}_{\TTheta}^t)_{[\ell, :]} & \rho_{\ell, \ell} + \sigma^2
\end{bmatrix}.
\end{align}
Conditioned on $\Psi_i = \ell$ the random variables $(\Z_1, (\bar{\V}_{\TTheta}^t)_1, q_1(\Z_1, \bar{\Psi}_i, \bar{\varepsilon}))$ are jointly Gaussian with zero mean and covariance matrix determined by \eqref{eq:Z1_vtheta_cov}-\eqref{eq:vtheta_q_cov}.  Applying Lemmas \ref{lem:Gaussian_condition}--\ref{lem:gau_mixture_cond}  on these jointly Gaussian variables,   we  obtain: 
\begin{align}
&\E\Big[\Z_1 \mid  (\bar{\V}_{\TTheta}^t)_1  = \V_i, \, q(\Z_1, \bar{\Psi}_i, \bar{\varepsilon}) = u_i  \Big]  
 = \frac{\sum_{\ell=1}^L  \pi_{\bar{\Psi}_i}(\ell)\,\blambda^t(\V_i, u_i, \ell)  \, \mathcal{L}_1(\V_i, u_i| \ell)}{\sum_{\tilde{\ell}=1}^L \pi_{\bar{\Psi}_i}(\tilde{\ell}) \, \mathcal{L}_1(\V_i, u_i| \tilde{\ell})} ,
\label{eq:g_middle_term_expansion}
\end{align}
where $\pi_{\bar{\Psi}_i}: \ell \mapsto \sum_{\{\bpsi : \psi_i = \ell\}} \pi_{\bar{\bPsi}}(\bpsi)$ denotes the marginal probability of $\bar{\Psi}_i$, and
\begin{align}
\cL_1(\V_i, u_i | \ell) 
 &: =\phi \left(\begin{bmatrix} \V_i \\ u_i \end{bmatrix}; \0, \bSigma^t \right) \label{eq:L_1_term},\\
\blambda^t(\V_i, u_i, \ell) 
&:=
 \E[\Z_1 \mid  (\bar{\V}_{\TTheta}^t)_1  = \V_i, \,  q_1(\Z_1, \bar{\Psi}_i, \bar{\varepsilon}) = u_i, \, \bar{\Psi}_i = \ell] \nonumber\\
 & 
 \;= \begin{bmatrix}
    \bar{\bnu}_{\TTheta}^t & \brho_{[:, \ell]}
\end{bmatrix} [\bSigma^t]^{-1}
\begin{bmatrix}
    \V_i \\ u_i
\end{bmatrix} \in \reals^L. \label{eq:lambda_term}
\end{align}
\subsubsection{Computation of   $g^{*t}$ for Logistic Regression}\label{sec:full_computation_gt_logistic} 
The only term in the expression for $g^{*t}$ in \eqref{eq:opt_ensemble_gt_lem} that differs from the linear regression case, and therefore needs to be re-computed, is $\E[\Z_1 \mid  (\bar{\V}_{\TTheta}^t)_1  = \V_i, \, q(\Z_1, \bar{\Psi}_i, \bar{\varepsilon}) = u_i ] $. For $k\in [L]$, we  can write
\begin{align*}
&\E\Big[(Z_1)_k \mid  (\bar{\V}_{\TTheta}^t)_1  = \V_i, \, q(\Z_1, \bar{\Psi}_i, \bar{\varepsilon}) = u_i \Big]  
 = \frac{\sum_{\ell=1}^L \pi_{\bar{\Psi}_i}(\ell)\, \tlambda^t(\V_i, u_i, \ell)  }{\sum_{\tilde{\ell}=1}^L \pi_{\bar{\Psi}_i}(\tilde{\ell}) \, \mathcal{L}_1(\V_i, u_i| \tilde{\ell})} \,,
\end{align*}
where
\begin{align}
&\cL_1(\V_i,u_i|\ell):=\P\left((\bar{\V}_{\TTheta}^t)_1 = \V_i, q(\Z_1, \bar{\Psi}_i, \bar{\varepsilon})=u_i\mid \bar{\Psi}_i=\ell\right), \label{eq:L1_logistic}\\
 & \tlambda^t(\V_i, u_i, \ell)
:= \int_{-\infty}^{\infty} z \,  \P\left((Z_1)_k=z, (\bar{\V}_{\TTheta}^t)_1  = \V_i, \, q(\Z_1, \bar{\Psi}_i, \bar{\varepsilon}) = u_i \mid \bar{\Psi}_i=\ell\right) \de z.\label{eq:lambda_logistic}
\end{align}
In the rest of this subsection, we provide expressions for $\cL_1(\V_i, u_i|\ell)$ and  $\tlambda^t(\V_i, u_i, \ell)$. As preliminaries, we adopt the following shorthand  for $\ell\in[L]$:
\begin{align}\label{eq:shorthand_mu_sigma_l}
     \mu_\ell\equiv \mu_\ell(\V_i):=\left(\E[\Z_1| (\bar{\V}_{\TTheta}^t)_1=\V_i]\right)_\ell\,,\quad 
     \sigma_\ell\equiv \sigma_\ell(\V_i) := \sqrt{\left[\cov(\Z_1| (\bar{\V}_{\TTheta}^t)_1 = \V_i)\right]_{\ell,\ell}}\,.
 \end{align} 
where the quantities on the right are defined in \eqref{eq:cov_avg_Z1_given_V}.  
We also recall from \eqref{eq:cov_avg_Z1_given_V}  that conditioned on $(\bar{\V}_{\TTheta}^t)_1=\V_i$, 
\begin{align*}
   &\begin{bmatrix}
       ( Z_1)_k\\
       ( Z_1)_\ell
   \end{bmatrix}
   \sim \normal\left(\begin{bmatrix}
       \mu_k\\\mu_\ell
   \end{bmatrix}, \cov\Big((Z_1)_k, (Z_1)_\ell\Big)\right),
   \end{align*}
   where 
   \begin{align*}
  &\cov\Big((Z_1)_k, (Z_1)_\ell\Big)
   = \begin{bmatrix}
      \rho_{kk}& \rho_{k\ell}\\
      \rho_{k\ell} & \rho_{\ell\ell}
  \end{bmatrix} -
  \begin{bmatrix}
      \bar{\bnu}_{\TTheta[k,:]}^t\\
        \bar{\bnu}_{\TTheta[\ell,:]}^t
  \end{bmatrix}
  \left(\bSigma_{\V}^t\right)^{-1}
   \begin{bmatrix}
      \bar{\bnu}_{\TTheta[k,:]}^t\\
        \bar{\bnu}_{\TTheta[\ell,:]}^t
  \end{bmatrix}^\top=:
  \begin{bmatrix}
       \sigma_k^2& \lambda_{k\ell}\\
       \lambda_{k\ell}& \sigma_\ell^2
   \end{bmatrix}.
\end{align*}
Applying Lemma \ref{lem:Gaussian_condition}, this implies that conditioned on $(\bar{\V}_{\TTheta}^t)_1=\V_i$,
\begin{align*}
    (Z_1)_k\mid (Z_1)_\ell \sim \normal\left(\mu_{k|\ell}, \sigma^2_{k|\ell}\right),
\end{align*}
where 
\begin{align}\label{eq:params_k_given_l}
    \mu_{k|\ell} = \mu_k + \frac{\lambda_{k\ell}}{\sigma_\ell^2}\big( (Z_1)_\ell -\mu_\ell\big)\,, \qquad 
    \sigma^2_{k|\ell}=\sigma^2_k - \frac{\lambda_{kl}^2}{\sigma_\ell^2}\,.
\end{align}
We will also use Lemmas \ref{lem:approx_logistic_by_gau}--\ref{lem:integral_Gaussian} below  to avoid intractable integrals in the sequel.
\begin{lemma}[Logistic approximation, Section 4.5.2 in \cite{bishop2006pattern}]\label{lem:approx_logistic_by_gau}

The logistic function $\zeta'(z) $ in \eqref{eq:logistic-chgpt-model} can be approximated by a standard Gaussian CDF as:
    \begin{align}\label{eq:approx_logistic}
        \zeta'(z) \approx \Phi\left(\gamma z\right), \quad \text{where}\quad \gamma=\sqrt{\frac{\pi}{8}}\;.
    \end{align}
\end{lemma}
\begin{lemma}[Useful Gaussian Identities, Table 1 in \cite{owen1980gaussian}]\label{lem:integral_Gaussian}It holds that 
\begin{align}
  &  \int_{-\infty}^{\infty}\Phi(a+bx)\phi(x)\de x = \Phi\left(\frac{a}{\sqrt{1+b^2}}\right), \\ 
   & \int_{-\infty}^{\infty}x\Phi(a+bx)\phi(x)\de x = \frac{b}{\sqrt{1+b^2}}\,\phi\left(\frac{a}{\sqrt{1+b^2}}\right).
\end{align}
\end{lemma}
% Like in Appendix \ref{sec:full_computation_gt_ReLU}, 
We will compute $\tlambda^t(\V_i, u_i, \ell)$ in  the  $k=\ell$ and $k\neq \ell$ cases separately.
\paragraph{Case 1: $k=\ell$.}
We can rewrite the probability inside the integral in \eqref{eq:lambda_logistic} as follows:
\begin{align}
&    \P\left((Z_1)_\ell=z, (\bar{\V}_{\TTheta}^t)_1  = \V_i, \, q(\Z_1, \bar{\Psi}_i, \bar{\varepsilon}) = u_i \mid \bar{\Psi}_i=\ell\right) \nonumber\\
& = \P\left((\bar{\V}_{\TTheta}^t)_1  = \V_i\right)\P\left((Z_1)_\ell=z\mid (\bar{\V}_{\TTheta}^t)_1  = \V_i\right) \P\left(q(\Z_1, \bar{\Psi}_i, \bar{\varepsilon}) = u_i\mid (Z_1)_\ell=z, \bar{\Psi}_i=\ell\right)\nonumber\\
& = \phi(\V_i; \bzero, \bSigma_{\V}^t)
\phi(z; \mu_\ell, \sigma_\ell^2)
\left[\ind\{u_i=1\}\zeta'(z) + \ind\{u_i=0\}(1-\zeta'(z))\right],\label{eq:joint_pdf_logistic_k=l}
\end{align}
where we recalled from \eqref{eq:logistic-chgpt-model} that 
$ \P\left(q(\Z_1, \bar{\Psi}_i, \bar{\varepsilon})=1\mid (Z_1)_\ell=z, \bar{\Psi}_i=\ell\right)=\zeta'(z).$
Substituting \eqref{eq:joint_pdf_logistic_k=l} into  \eqref{eq:lambda_logistic}, and applying Lemmas \ref{lem:approx_logistic_by_gau}--\ref{lem:integral_Gaussian} yields
\begin{align*}
   \tlambda^t(\V_i, u_i, \ell)  \approx \phi(\V_i; \bzero, \bSigma_{\V}^t)& \left\lbrace\ind\{u_i=1\}\left[\tau_\ell\sigma_\ell^2\phi(\tau_\ell\mu_\ell) + \mu_\ell \Phi(\tau_\ell \mu_\ell)\right] + 
   \nonumber\right.\nonumber\\
    &\;\;  \left.\ind\{u_i=0\}\left[\mu_\ell- \tau_\ell\sigma_\ell^2\phi(\tau_\ell\mu_\ell) - \mu_\ell \Phi(\tau_\ell \mu_\ell)\right]\right\rbrace,
\end{align*}
where $\tau_\ell=\gamma/\sqrt{1+\gamma^2\sigma_\ell^2}$ with $\gamma=\sqrt{\pi/8}$ as defined in \eqref{eq:approx_logistic}. 
\paragraph{Case 2: $k\ne \ell$.} We rewrite the density inside the integral in \eqref{eq:lambda_logistic} as follows:
\begin{align}
&    \P\left((Z_1)_k=z, (\bar{\V}_{\TTheta}^t)_1  = \V_i, \, q(\Z_1, \bar{\Psi}_i, \bar{\varepsilon}) = u_i \mid \bar{\Psi}_i=\ell\right) \nonumber\\
& =\int_{-\infty}^{\infty} \P\left((\bar{\V}_{\TTheta}^t)_1  = \V_i\right)\P\left((Z_1)_\ell=s\mid (\bar{\V}_{\TTheta}^t)_1  = \V_i\right) \P\left(q(\Z_1, \bar{\Psi}_i, \bar{\varepsilon}) = u_i\mid (Z_1)_\ell=s, \bar{\Psi}_i=\ell\right)\nonumber\\
& \quad \;\P\left((Z_1)_k = z\mid (Z_1)_\ell=s, (\bar{\V}_{\TTheta}^t)_1=\V_i\right)\de s\nonumber\\
& = \phi(\V_i; \bzero, \bSigma_{\V}^t)\int_{-\infty}^{\infty}
\phi(s; \mu_\ell, \sigma_\ell^2)
\left[\ind\{u_i=1\}\zeta'(s) + \ind\{u_i=0\}(1-\zeta'(s))\right] \phi\left(z; \mu_{k|\ell}, \sigma_{k|\ell}^2\right)\de s.\label{eq:joint_pdf_logistic_k_ne_l}
\end{align}
Substituting \eqref{eq:joint_pdf_logistic_k_ne_l} into  \eqref{eq:lambda_logistic} and applying Lemmas \ref{lem:approx_logistic_by_gau}--\ref{lem:integral_Gaussian} yields
\begin{align*}
   \tlambda^t(\V_i, u_i, \ell) &\approx \phi(\V_i; \bzero, \bSigma_{\V}^t)\left\lbrace\ind\{u_i=1\} \left[ \lambda_{k\ell} \tau_\ell\phi(\tau_\ell\mu_\ell) + \mu_k\Phi(\tau_\ell \mu_\ell)\right] + \right.\nonumber\\
& \qquad \quad \qquad \qquad  \;\left. \ind\{u_i=0\}\left[\mu_k - \lambda_{k\ell}  \tau_\ell\phi(\tau_\ell \mu_\ell)-\mu_k \Phi(\tau_\ell \mu_\ell)\right]\right\rbrace.
\end{align*}
Finally, to compute $\cL_1(\V_i, u_i|\ell)$, we  sum over $z$ in \eqref{eq:joint_pdf_logistic_k=l} and apply Lemma \ref{lem:approx_logistic_by_gau} to obtain
\begin{align}
    \cL_1(\V_i, u_i|\ell) \approx \phi(\V_i; \bzero, \bSigma_{\V}^t)\left\lbrace\ind\{u_1=1\}  \Phi(\tau_\ell\mu_\ell)+ \ind\{u_1=0\}[1- \Phi(\tau_\ell\mu_\ell)]\right\rbrace. \label{eq:logistic_approx_likelihood}
\end{align}

\subsubsection{Computation of $g^{*t}$ for Rectified Linear Regression}\label{sec:full_computation_gt_ReLU}

Similar to Appendix \ref{sec:full_computation_gt_logistic}, the only term in the expressions for  $g^{*t}$ that differs from linear regression, and therefore needs to be re-computed, is $\E[\Z_1| (\bar{\V}_{\TTheta}^t)_1=\V_i, q(\Z_1, \bar{\Psi}_i, \bar{\varepsilon})=u_i]$. We will express this term using a binary auxiliary variable 
 \begin{align*}
 \Omega:=\ind\{ (Z_1)_\ell \ge 0\}    ,
 \end{align*}
 which indicates whether the data are generated through the sloped part or flat part of the rectified linear unit, conditioned on $\bar{\Psi}_i =\ell$. In many parts of the derivation, variables depending on $\Omega$  exhibit  truncated Gaussian distributions. We begin with some useful results on these distributions.
\begin{lemma}[Useful truncated Gaussian identities]\label{lem:trun_gau}
Consider the   truncated Gaussian variables $X^+$ and $X^-$ with probability densities 
\begin{align*}
    \phi^+(x;\mu, \sigma^2) &:= \ind\{x\ge 0 \}\frac{\phi(x; \mu,\sigma^2)}{\Phi(\mu/\sigma)}\,,  \qquad
     \phi^-(x;\mu, \sigma^2) := \ind\{x < 0 \}\frac{\phi(x; \mu,\sigma^2)}{\Phi(-\mu/\sigma)}\,.
\end{align*}
It holds that 
\begin{align*}
    \E[X^+]&=\mu +\sigma \frac{\phi(0; \mu, \sigma^2)}{\Phi(\mu/\sigma)}\,,\qquad
    \E[X^-] = \mu -\sigma \frac{\phi(0; \mu, \sigma^2)}{ \Phi(-\mu/\sigma)}\,,
\end{align*}
and it follows that:
    \begin{align*}
        \phi^+(x; \mu, \sigma^2) \phi(z; x, \tau^2)
        =\phi(z; \mu, \sigma^2+\tau^2)
        \frac{\Phi(\mu_*/\sigma_*)}{\Phi(\mu/\sigma)}\phi^+(x; \mu_*, \sigma_*^2),
    \end{align*}
    where 
$\mu_* = \frac{z\sigma^2 + \mu\tau^2}{\sigma^2+\tau^2}$ and   
    $\sigma_*^2 = \frac{\sigma^2\tau^2}{\sigma^2+\tau^2}$.
\end{lemma}
 Equipped with the definitions and lemmas above,  we can write for $k\in [L]$,
 \begin{align*}
     &\E\left[(Z_1)_k \mid  (\bar{\V}_{\TTheta}^t)_1= \V_i, q(\Z_1, \bar{\Psi}_i, \bar{\varepsilon})=u_i\right] = \frac{\sum_{\ell=1}^L \sum_{\omega\in \{0,1\}} \pi_{\bar{\Psi}_i}(\ell) \,\tlambda^t(k, \V_i, u_i, \ell, \omega) }{\sum_{\tilde{\ell}=1}^L \sum_{\tilde{\omega} \in \{0,1\}}\pi_{\bar{\Psi}_i}(\tilde{\ell}) \, \mathcal{L}_1(\V_i, u_i, \tilde{\omega}| \tilde{\ell})} ,
     \end{align*}
     where
    \begin{align}
      &\cL_1(\V_i, u_i, \omega|\ell) 
      := \P\left((\bar{\V}_{\TTheta}^t)_1 = \V_i, q(\Z_1, \bar{\Psi}_i, \bar{\varepsilon})=u_i, \Omega=\omega \mid \bar{\Psi}_i=\ell\right),  \nonumber \\
      &\tlambda^t(k, \V_i, u_i, \ell, \omega) 
      \;=  
      \int_{-\infty}^{\infty} z \P\left((Z_1)_k = z, (\bar{\V}_{\TTheta}^t)_1 = \V_i, q(\Z_1, \bar{\Psi}_i, \bar{\varepsilon})=u_i,   \Omega = \omega \mid  \bar{\Psi}_i=\ell\right)\de z\label{eq:expand_lambda_ReLU}.
      \end{align}
 In the rest of this subsection, we provide the expression for the terms $\cL_1(\V_i, u_i, \omega|\ell) $ and $\tlambda^t(k, \V_i, u_i, \ell, \omega)$ for $\omega=0$ or 1, respectively. 
\paragraph{Case 1: $\Omega=0$.} This implies  that $(Z_1)_\ell<0$, and  $q(\Z_1, \bar{\Psi}_i=\ell, \bar{\varepsilon}) = \max\{(Z_1)_\ell, 0\}+ \bar{\varepsilon}=\bar{\varepsilon}$. Using the law of conditional probability, we have
 \begin{align}
      \cL_1(\V_i, u_i, \Omega=0|\ell) &=\P\left((\bar{\V}_{\TTheta}^t)_1=\V_i\right)\P\left(\Omega =0\mid (\bar{\V}_{\TTheta}^t)_1=\V_i, \bar{\Psi}_i=\ell\right) \cdot \nonumber\\
    & \quad\; \P\left(q(\Z_1, \bar{\Psi}_i, \bar{\varepsilon})=u_i\mid (\bar{\V}_{\TTheta}^t)_1=\V_i, \Omega=0, \bar{\Psi}_i=\ell \right)\nonumber\\
    & = \phi\left(\V_i; \bzero,   \bSigma_{\V}^t\right)
    \Phi\left(-\mu_\ell/\sigma_\ell\right)
    \phi\left(u_i; 0, \sigma^2\right),
    \label{eq:L1_ReLU_w=0}
 \end{align}
 where $\bSigma_{\V}^t$ is defined in \eqref{eq:Z1_vtheta_cov}, and $\mu_\ell, \sigma_\ell$ in \eqref{eq:shorthand_mu_sigma_l}.
To evaluate $\tlambda^t(k, \V_i, u_i, \ell, \omega)$, we consider the  $k=\ell$ and  $k\neq \ell$ cases separately.

\paragraph{Case 1a: $\Omega=0, k=\ell$.}
The density inside the integral of \eqref{eq:expand_lambda_ReLU} can be rewritten as
\begin{align}
  &  \P\left((Z_1)_\ell = z, (\bar{\V}_{\TTheta}^t)_1 = \V_i, q(\Z_1, \bar{\Psi}_i, \bar{\varepsilon})=u_i,   \Omega = 0 \mid  \bar{\Psi}_i=\ell\right)\nonumber\\
  & = \P\left((\bar{\V}_{\TTheta}^t)_1=\V_i\right)\P\left(\Omega =0\mid (\bar{\V}_{\TTheta}^t)_1=\V_i, \bar{\Psi}_i=\ell\right) \cdot \nonumber\\
    & \quad\; \P\left( (Z_1)_\ell = z\mid (\bar{\V}_{\TTheta}^t)_1=\V_i, \Omega=0, \bar{\Psi}_i=\ell \right)\cdot \nonumber\\
    & \quad \; \P\left(q(\Z_1, \bar{\Psi}_i, \bar{\varepsilon})=u_i \mid  (Z_1)_\ell = z, (\bar{\V}_{\TTheta}^t)_1=\V_i,\Omega = 0, \bar{\Psi}_i=\ell\right)\nonumber\\
    & =\phi\left(\V_i; \bzero,   \bSigma_{\V}^t\right)
    \Phi\left(-\mu_\ell/\sigma_\ell\right)
    \phi^-(z; \mu_\ell, \sigma^2_{\ell})
    \phi(u_i; 0, \sigma^2).\label{eq:expand_joint_pdf_ReLU}
\end{align}
Substituting \eqref{eq:expand_joint_pdf_ReLU} into \eqref{eq:expand_lambda_ReLU} and applying Lemma \ref{lem:trun_gau} yields
\begin{align*}
    \tlambda^t(k=\ell, \V_i, u_i, \ell, \Omega=0)=
    \phi\left(\V_i; \bzero,   \bSigma_{\V}^t\right)
    \Phi\left(-\mu_\ell/\sigma_\ell\right)
    \left[ \mu_\ell -\sigma_\ell \frac{\phi(0; \mu_\ell, \sigma_\ell^2)}{\Phi(-\mu_\ell/ \sigma_\ell)}\right]
    \phi(u_i; 0, \sigma^2). 
\end{align*}
\paragraph{Case 1b: $\Omega=0, k\neq \ell$.}
Similar to \eqref{eq:expand_joint_pdf_ReLU},  the density inside the integral of \eqref{eq:expand_lambda_ReLU} can be rewritten as
\begin{align}\label{eq:joint_prob_w=0_k_neq_l}
     &\P\left((Z_1)_k = z, (\bar{\V}_{\TTheta}^t)_1 = \V_i, q(\Z_1, \bar{\Psi}_i, \bar{\varepsilon})=u_i,   \Omega = 0 \mid  \bar{\Psi}_i=\ell\right)\nonumber\\
     & = \phi\left(\V_i; \bzero,   \bSigma_{\V}^t\right)
    \Phi\left(-\mu_\ell/\sigma_\ell\right)
   \P\left( (Z_1)_k = z\mid (\bar{\V}_{\TTheta}^t)_1=\V_i, \Omega=0, \bar{\Psi}_i=\ell \right)
    \phi(u_i; 0, \sigma^2).
 \end{align}
To evaluate the second last term  in \eqref{eq:joint_prob_w=0_k_neq_l},we apply the law of total probability to obtain 
\begin{align}
   & \P\left( (Z_1)_k = z\mid (\bar{\V}_{\TTheta}^t)_1=\V_i, \Omega=0, \bar{\Psi}_i=\ell \right)\nonumber\\
   &=\int\P\left( (Z_1)_\ell=s| (\bar{\V}_{\TTheta}^t)_1=\V_i, (Z_1)_\ell <0 \right)    
    \P\left((Z_1)_k = z| (Z_1)_\ell=s, (\bar{\V}_{\TTheta}^t)_1=\V_i, (Z_1)_\ell <0 \right)\de s\nonumber\\
    & = \int \phi^-(s; \mu_\ell, \sigma^2_{\ell})\phi\left(z; \mu_{k|\ell}, \sigma^2_{k|\ell}\right)
    \de s,  \label{eq:cond_prob_k_neq_l_w=0}
\end{align}
where  $\mu_{k|\ell}, \sigma^2_{k|\ell}$ are given by \eqref{eq:params_k_given_l}. Substituting \eqref{eq:cond_prob_k_neq_l_w=0} into \eqref{eq:joint_prob_w=0_k_neq_l} then  into 
\eqref{eq:expand_lambda_ReLU} and applying Lemma \ref{lem:trun_gau} gives
\begin{align*}
    &\tlambda^t(k\neq \ell, \V_i, u_i, \ell, \Omega=0)\nonumber\\
    & =
    \phi\left(\V_i; \bzero,   \bSigma_{\V}^t\right)
    \Phi\left(-\mu_\ell/\sigma_\ell\right) 
    \phi(u_i; 0, \sigma^2)
    \int \phi^-(s; \mu_\ell, \sigma^2_\ell) \mu_{k|\ell} \, \de s\nonumber\\
    & =
    \phi\left(\V_i; \bzero,  \bSigma_{\V}^t\right)
    \Phi\left(-\mu_\ell/\sigma_\ell\right)
    \phi(u_i; 0, \sigma^2)
\left\lbrace\frac{\lambda_{k\ell}}{\sigma_\ell^2}\left[\mu_\ell-\sigma_\ell \frac{\phi(0; \mu_\ell, \sigma_\ell^2)}{\Phi(-\mu_\ell/\sigma_\ell)}\right] + \mu_k - \frac{\lambda_{k\ell}}{\sigma_\ell^2}\mu_\ell \right\rbrace.
\end{align*}
\paragraph{Case 2: $\Omega=1$.} 
In this case, $(Z_1)_\ell \ge 0$, and $q(\Z_1, \bar{\Psi}_i=\ell, \bar{\varepsilon}) = (Z_1)_\ell + \bar{\varepsilon}$. Applying Lemma \ref{lem:trun_gau} yields 
\begin{align}
  &\P\left(  q(\Z_1, \bar{\Psi}_i, \bar{\varepsilon})=u_i \mid  (\bar{\V}_{\TTheta}^t)_1=\V_i, \Omega=1, \bar{\Psi}_i=\ell\right)\nonumber\\
  &=\P\left((Z_1)_\ell + \bar{\varepsilon} = u_i\mid (\bar{\V}_{\TTheta}^t)_1=\V_i,(Z_1)_\ell \ge 0 \right)
   = \phi(u_i; \mu_\ell, \sigma^2_\ell + \sigma^2)\frac{\Phi(\mu_{\ell*}/\sigma_{\ell*})}{\Phi(\mu_\ell/\sigma_\ell)},\label{eq:P_q_ReLU}
\end{align}
where 
\begin{align}
 \mu_{\ell*} = \frac{u_i\sigma_\ell^2 + \mu_\ell \sigma^2}{\sigma^2_\ell + \sigma^2}, \qquad 
    \sigma_{\ell*}^2 = \frac{\sigma_\ell^2\sigma^2}{\sigma_\ell^2+\sigma^2 }
   .\label{eq:P_q_ReLU_params}
\end{align}
Using \eqref{eq:P_q_ReLU}--\eqref{eq:P_q_ReLU_params} and the law of conditional probability similarly to \eqref{eq:L1_ReLU_w=0}, we obtain 
\begin{align}
       \cL_1(\V_i, u_i, \Omega=1|\ell)
    & = \phi\left(\V_i; \bzero,   \bSigma_{\V}^t\right)
    \phi\left(u_i; \mu_\ell, \sigma_\ell^2+ \sigma^2\right)  \Phi(\mu_{\ell*}/\sigma_{\ell*})  .
    \label{eq:L1_ReLU_w=1}
\end{align}
Next, we evaluate $ \lambda^t(k, \V_i, u_i, \ell, \Omega=1)$ for $k=\ell$ and $k\neq \ell$ separately.
\paragraph{Case 2a: $\Omega=1, k=\ell$.}
Analogously to \eqref{eq:expand_joint_pdf_ReLU}, we apply the law of conditional probability and Lemma \ref{lem:trun_gau}  to obtain
\begin{align}
    &\P\left((Z_1)_\ell = z, (\bar{\V}_{\TTheta}^t)_1 = \V_i, q(\Z_1, \bar{\Psi}_i, \bar{\varepsilon})=u_i,   \Omega = 1\mid  \bar{\Psi}_i=\ell\right)\nonumber\\
    & = \phi\left(\V_i; \bzero,   \bSigma_{\V}^t\right)
    \Phi\left(\mu_\ell/\sigma_\ell\right)  \phi^+\left(z; \mu_\ell, \sigma_\ell^2\right)
    \phi(u_i; z, \sigma^2)\nonumber\\
    & = \phi\left(\V_i; \bzero,   \bSigma_{\V}^t\right)
     \phi\left(u_i; \mu_\ell, \sigma_\ell^2+\sigma^2\right)  \Phi(\mu_{\ell*}/ \sigma_{\ell*})  \phi^+(z; \mu_{\ell*}, \sigma_{\ell*}^2).
    \label{eq:expand_joint_pdf_ReLU_w1} 
\end{align}
Substituting \eqref{eq:expand_joint_pdf_ReLU_w1} 
 into \eqref{eq:expand_lambda_ReLU} and 
applying  Lemma \ref{lem:trun_gau} yields
\begin{align}
    &\tlambda^t(k=\ell, \V_i, u_i, \ell, \Omega=1)\nonumber\\
    & = 
    \phi\left(\V_i; \bzero,   \bSigma_{\V}^t\right)
    \phi\left(u_i; \mu_\ell, \sigma_\ell^2+\sigma^2\right)  \Phi(\mu_{\ell*}/ \sigma_{\ell*}) \left[\mu_{\ell*} + \sigma_{\ell*} \frac{\phi(0; \mu_{\ell*}, \sigma_{\ell*})}{\Phi(\mu_{\ell*}/ \sigma_{\ell*})}\right].
\end{align}
\paragraph{Case 2b: $\Omega=1, k\neq \ell$.} Following the same steps as \eqref{eq:joint_prob_w=0_k_neq_l}--\eqref{eq:cond_prob_k_neq_l_w=0} and applying Lemma \ref{lem:trun_gau}, we can write
\begin{align}
    &\P\left((Z_1)_k = z, (\bar{\V}_{\TTheta}^t)_1 = \V_i, q(\Z_1, \bar{\Psi}_i, \bar{\varepsilon})=u_i,   \Omega = 1\mid  \bar{\Psi}_i=\ell\right)\nonumber\\
    & = \phi\left(\V_i; \bzero,   \bSigma_{\V}^t\right)
    \Phi\left(\mu_\ell/\sigma_\ell\right)  \int\phi^+\left(s; \mu_\ell, \sigma_\ell^2\right) \phi\left(z; \mu_{k|\ell}, \sigma^2_{k|\ell}\right)\phi(u_i; s, \sigma^2)\de s\nonumber\\
    & = \phi\left(\V_i; \bzero,   \bSigma_{\V}^t\right)
       \phi(u_i; \mu_\ell, \sigma_\ell^2+ \sigma^2)  \Phi(\mu_{\ell*}/ \sigma_{\ell*}) \int\phi^+(s; \mu_{\ell*}, \sigma_{\ell*}^2)
    \phi\left(z; \mu_{k|\ell}, \sigma^2_{k|\ell}\right)
    \de s\label{eq:cond_prob_k_neq_l_w=1} 
\end{align}
where $\mu_{k|\ell}$ and $\sigma^2_{k|\ell}$ are defined in \eqref{eq:params_k_given_l}, while $\mu_{\ell*}$ and $\sigma_{\ell*}^2$ are defined in \eqref{eq:P_q_ReLU_params}. 
Substituting \eqref{eq:cond_prob_k_neq_l_w=1} into 
\eqref{eq:expand_lambda_ReLU} and applying Lemma \ref{lem:trun_gau} gives
\begin{align}
    &\tlambda^t(k\neq \ell, \V_i, u_i, \ell, \Omega=1)\nonumber\\
    & =
    \phi\left(\V_i; \bzero,   \bSigma_{\V}^t\right) \phi(u_i; \mu_\ell, \sigma_\ell^2+ \sigma^2)  \Phi(\mu_{\ell*}/ \sigma_{\ell*}) 
    \int \phi^+(s; \mu_{\ell*}, \sigma_{\ell*}^2)
    \int z\phi\left(z; \mu_{k|\ell}, \sigma^2_{k|\ell}\right)
   \de z \de s \nonumber\\
   & =
   \phi\left(\V_i; \bzero,   \bSigma_{\V}^t\right) \phi(u_i; \mu_\ell, \sigma_\ell^2+ \sigma^2)  \Phi(\mu_{\ell*}/ \sigma_{\ell*}) 
    \int \phi^+(s; \mu_{\ell*}, \sigma_{\ell*}^2)
      \mu_{k|\ell} \,  \de s \nonumber\\
   & =
   \phi\left(\V_i; \bzero,   \bSigma_{\V}^t\right) \phi(u_i; \mu_\ell, \sigma_\ell^2+ \sigma^2)  \Phi(\mu_{\ell*}/ \sigma_{\ell*}) \nonumber \\
   &\hspace{6cm} \cdot \left\lbrace\frac{\lambda_{k\ell}}{\sigma^2_\ell}\left[\mu_{\ell*} + \sigma_{\ell*}\frac{\phi(0; \mu_{\ell*}, \sigma^2_{\ell*})}{\Phi(\mu_{\ell*}/ \sigma_{\ell*})}\right] +  \mu_k - \frac{\lambda_{k\ell}}{\sigma^2_\ell}\mu_\ell  \right\rbrace.
\end{align}

\subsection{Computation of the Likelihood $\cL$} \label{sec:likelihood_computation}
Recall from \eqref{eq:posterior_density_2} that $\cL$ represents the likelihood of $({\V}_{\TTheta}^t, q(\Z, \bar{\bPsi}, \bar{\bvarepsilon}))$ given $\bar{\bPsi} = \bpsi$, where $\bnu_{\TTheta}^t, \bkappa_{\TTheta}^t$ are computed as in \eqref{eq:nu_B_SE}--\eqref{eq:kappa_theta_SE} with $\bpsi$ substituted for $\bPsi$. Under the assumption that $\bar{\varepsilon}_i \distas{i.i.d} \N(0, \sigma^2)$ for some $\sigma > 0$, the likelihood can be computed in closed form in the case of linear regression as follows: 
\begin{align*}
   \cL(\V, \u | \bpsi) 
 &:= \prod_{i = 1}^n \phi \left(\begin{bmatrix} \V_i \\ u_i \end{bmatrix}; \0, \bSigma_{\psi_i}^t \right), \;\; \text{ where } \bSigma_{\psi_i}^t := 
\begin{bmatrix}
({\bnu}_{\TTheta}^t)^\top \brho^{-1} {\bnu}_{\TTheta}^t + {\bkappa}_{\TTheta}^{t, t}
&  \left(({\bnu}_{\TTheta}^t)_{[\psi_i, :]}\right)^\top \\
({\bnu}_{\TTheta}^t)_{[\psi_i, :]} & \rho_{\psi_i, \psi_i} + \sigma^2
\end{bmatrix}.
\end{align*}
In the cases of logistic regression and rectified linear regression with $\bar{\varepsilon}_i \distas{i.i.d} \N(0, \sigma^2)$, the likelihood $\cL$ can be computed by letting $\bSigma_{\V}^t := ({\bnu}_{\TTheta}^t)^\top \brho^{-1} {\bnu}_{\TTheta}^t + {\bkappa}_{\TTheta}^{t, t}$, 
\begin{align*}
     \mu_\ell\equiv \mu_\ell(\V_i):=\left(\E[\Z_1| ({\V}_{\TTheta}^t)_1=\V_i]\right)_\ell\,,\qquad 
     \sigma_\ell\equiv \sigma_\ell(\V_i) := \sqrt{\left[\cov(\Z_1| ({\V}_{\TTheta}^t)_1 = \V_i)\right]_{\ell,\ell}}\,,
 \end{align*} 
in \eqref{eq:logistic_approx_likelihood}, \eqref{eq:L1_ReLU_w=0}, \eqref{eq:L1_ReLU_w=1}.
\section{Proof of Propositions \ref{prop:hausdorff_asymptotics} and \ref{prop:pointwise_posterior}} \label{sec:proposition_proofs}

\subsection{Proof of Proposition \ref{prop:hausdorff_asymptotics}} \label{sec:proof_hausdorff}
\begin{proof}
Recall that a signal configuration vector $\bPsi \in \mathcal{X}$ is a vector that is piece-wise constant with respect to its indices $i \in [n]$, with jumps of size $1$ occurring at the indices $\{ \eta_\ell \}_{\ell = 1}^{L^* - 1}$. Without loss of generality, we assume $\bPsi$ is also monotone (otherwise, any non-distinct signal can be treated as a new signal having perfect correlation with the first). Recall that $\eeta \in [n]^{L^* - 1}$ is the change point vector corresponding to $\bPsi$, that is, $\eeta = U^{-1}(\bPsi)$, and that for $i \in [n]$ the distance of $\eta_i$ to a change point estimate $\hat{\eeta} \in [n]^{L - 1}$ is defined as $d(\eta_i, \{\hat{\eta}_j\}_{j = 1}^{L - 1}) := \min_{\hat{\eta}_j \in \{\hat{\eta}_j\}_{j = 1}^{L - 1}} \|\eta_i - \hat{\eta}_j\|_2$. We then have that $d(\eta_i, \{\hat{\eta}_j\}_{j = 1}^{L - 1}) \leq \| U(\eeta) - U(\hat{\eeta})\|^2_F$ for all $i \in [L^* - 1]$, and similarly $d(\{\eta_i\}_{i=1}^{L^* - 1}, \hat{\eta}_j) \leq \| U(\eeta) - U(\hat{\eeta})\|^2_F$ for all $j \in [L - 1]$. This implies that $d_H(\eeta, \hat{\eeta}) \leq \|U(\eeta) - U(\hat{\eeta})\|_F^2$.

%%% --- NEW --- %%%
We first prove that $(\TTheta^t, \X\B) \mapsto d_H(\eeta, \hat{\eeta}(\TTheta^t, q(\X\B, \bPsi, \bvarepsilon)) / n$ is uniformly pseudo-Lipschitz. Consider two sets of inputs, $\A^{(1)} := \begin{bmatrix} (\TTheta^{t})^{(1)} & (\X\B)^{(1)} \end{bmatrix} \in \reals^{n \times 2L}$ and $\A^{(2)} := \begin{bmatrix} (\TTheta^{t})^{(2)} & (\X\B)^{(2)}) \end{bmatrix} \in \reals^{n \times 2L}$, and let $\hat{\eeta}(\A^{(1)}) := \hat{\eeta}\left((\TTheta^{t})^{(1)}, q((\X\B)^{(1)}, \bPsi, \bvarepsilon)\right)$, $\hat{\eeta}(\A^{(2)}) := \hat{\eeta}\left((\TTheta^{t})^{(2)}, q((\X\B)^{(2)}, \bPsi, \bvarepsilon)\right)$. We then have that: 
\begin{align}
    &\frac{1}{n} \left| d_{H}(\eeta, \hat{\eeta}(\A^{(1)})) - d_{H}(\eeta, \hat{\eeta}(\A^{(2)})) \right| \notag \\
    &\leq \frac{1}{n} d_{H}(\hat{\eeta}(\A^{(1)}), \hat{\eeta}(\A^{(2)})) \label{eq:rev_tri_ineq} \\
    &\leq \frac{1}{n} \|U(\hat{\eeta}(\A^{(1)})) - U(\hat{\eeta}(\A^{(2)}))\|_F^2 \label{eq:hausdorff_bound}\\
    &\leq L \cdot \frac{1}{\sqrt{n}} \|U(\hat{\eeta}(\A^{(1)})) - U(\hat{\eeta}(\A^{(2)}))\|_F \label{eq:bddness_of_psi} \\
    &\leq L \cdot C \left(1 + \left(\frac{\left\| \A^{(1)} \right\|_F}{\sqrt{n}}\right)^{r-1} + \left(\frac{\left\| \A^{(2)} \right\|_F}{\sqrt{n}}\right)^{r - 1} \right) \frac{\left\| \A^{(1)} - \A^{(2)} \right\|_F}{\sqrt{n}}, \label{eq:pl_assmpt_eta}
\end{align}
for some constants $C> 0, r \geq 1$. Here \eqref{eq:rev_tri_ineq} follows from the reverse triangle inequality for the metric $d_H$, \eqref{eq:hausdorff_bound} follows from the argument in the paragraph above, \eqref{eq:bddness_of_psi} follows from the fact that $\mathcal{X} \subseteq [L]^n$, and \eqref{eq:pl_assmpt_eta} follows from the pseudo-Lipschitz assumption on $(\V, \z) \mapsto U(\hat{\eeta}(\V, q(\z, \bPsi, \bvarepsilon))$. Applying Theorem \ref{thm:SE} with \[\bphi_n(\TTheta^t, \X\B ; \bPsi, \bvarepsilon) := \frac{1}{n} d_H(U^{-1}(\bPsi), \hat{\eeta}(\TTheta^t, q(\X \B, \bPsi, \bvarepsilon)),\] we obtain the first result in Proposition \ref{prop:hausdorff_asymptotics}. The result \eqref{eq:size_convergence} follows directly by applying Theorem \ref{thm:SE} to $\bphi_n(\TTheta^t, \X\B ; \bPsi, \bvarepsilon) := | \hat{\eeta}(\TTheta^t, q(\X \B ,\bPsi, \bvarepsilon) )|$. 
\end{proof}
\subsection{Proof of Proposition \ref{prop:pointwise_posterior}} \label{sec:proof_posterior}
\begin{proof}
    Let $\bphi_n(\V, \z ; \bPsi, \bvarepsilon) := p(\bpsi | \V, q(\z, \bPsi, \bvarepsilon))$. By assumption, $\bphi_n$ is uniformly pseudo-Lipschitz with respect to $(\V, \z)$. Applying Theorem \ref{thm:SE} to $\bphi_n$, we obtain that:
    \begin{align*}
        p(\bpsi | \TTheta^t, \y) = p(\bpsi | \TTheta^t, q(\X \B, \bPsi, \bvarepsilon)) \stackrel{\P}{\simeq}  \E_{{\V}_{\TTheta}^t, \Z} \, p(\bpsi | {\V}_{\TTheta}^t, q(\Z, \bPsi, \bvarepsilon)) =: \mu_p.
    \end{align*}
    Similarly, we apply Lemma 19 from \citep{gerbelot_graph-based_2023}, a concentration result regarding pseudo-Lipschitz functions of Gaussian random variables, to $\bphi_n({\V}_{\TTheta}^t, \Z; \bPsi, \bvarepsilon)$, to obtain $p(\bpsi | {\V}_{\TTheta}^t, q(\Z, \bPsi, {\bvarepsilon})) \stackrel{\P}{\simeq} \mu_p$. 
    Combining the aforementioned results, we obtain that for any $\epsilon >0$: 
    \begin{align*}
        &\P[ | p(\bpsi | \TTheta^t, \y) - p(\bpsi | {\V}_{\TTheta}^t, q(\Z, \bPsi, {\bvarepsilon}))| > \epsilon] \\
        &  \leq \P[|p(\bpsi | \TTheta^t, \y) -  \mu_p| > \epsilon/2] + \P[|p(\bpsi | {\V}_{\TTheta}^t, q(\Z, \bPsi, {\bvarepsilon})) - \mu_p| > \epsilon/2] \to 0.
    \end{align*}
\end{proof}
\section{Further Implementation and Experiment Details}\label{sec:further_implementation}
\paragraph{State Evolution Implementation}
Our state evolution implementation involves computing \eqref{eq:nu_B_SE_simplified}--\eqref{eq:kappa_theta_SE_simplified}. We estimate $\bnu_{\TTheta}^{t+1}, \bkappa_{\TTheta}^{t+1, t+1}, \bnu_{\B}^{t+1}, \bkappa_{\B}^{t+1, t+1}$ in \eqref{eq:nu_B_SE_simplified}--\eqref{eq:kappa_theta_SE_simplified} with finite $n, p$ via empirical averages, for a given change point configuration $\{\Psi_{\eta_\ell}\}_{\ell=0}^{L-1}$. Specifically, assuming $\bnu_{\TTheta}^{t}, \bkappa_{\TTheta}^{t, t}, \bnu_{\B}^{t}, \bkappa_{\B}^{t, t}$ have been computed, we compute $\bnu_{\TTheta}^{t+1}, \bkappa_{\TTheta}^{t+1, t+1}, \bnu_{\B}^{t+1}, \bkappa_{\B}^{t+1, t+1}$ as follows:
\begin{align}
&\bnu_{\B}^{t+1}\approx \sum_{\ell = 0}^{L-1} \frac{1}{n} \sum_{i \in [\eta_\ell, \eta_{\ell + 1})} \hat{\E}\left[ \partial_{1} \tilde{g}_i^t(\Z_{1}, (\V_{\TTheta}^t)_1, \Psi_{\eta_\ell}, \bar{\varepsilon}) \right], \label{eq:nu_b_MC}\\
&\bkappa_{\B}^{t+1, t+1}\approx \sum_{\ell = 0}^{L-1} \frac{1}{n} \sum_{i \in [\eta_\ell, \eta_{\ell + 1})} \hat{\E}\left[g_i^t\left( (\V_{\TTheta}^t)_1, q_1(\Z_{1}, \Psi_{\eta_\ell}, \bar{\varepsilon})\right)^\top g_i^t\left((\V_{\TTheta}^t)_1, q_1(\Z_{1}, \Psi_{\eta_\ell}, \bar{\varepsilon})\right)  \right], \label{eq:kappa_b_MC}\\
&\bnu^{t+1}_{\TTheta} \approx \frac{1}{\delta} \hat{\E}\left[\bar{\B} f_1^{t+1}(\V_{\B}^{t+1})^\top \right], \label{eq:nu_theta_MC}\\
&\bkappa_{\TTheta}^{t+1, t+1} \approx \frac{1}{\delta} \hat{\E}\left[\left(f_1^{t+1}(\V_{\B}^{t+1}) - \bar{\B}\brho^{-1} \bnu^{t+1}_{\TTheta}\right) \left(f^{t+1}(\V_{\B}^{t+1}) - \bar{\B}\brho^{-1} \bnu^{t+1}_{\TTheta} \right)^\top \right]. \label{eq:kappa_theta_MC}
\end{align}
where $\hat{\E}$ denotes an expectation estimate via Monte Carlo. For example, in the case of \eqref{eq:kappa_b_MC}, we generate 300 to 1000 independent samples of $(\Z_1, (\G_{\TTheta}^t)_1 ,  \bar{\varepsilon})$, with
$\bZ_1 \sim \N(\0, \brho)$, $(\G_{\TTheta}^t)_1 \distas{} \N(\0, \bkappa_{\TTheta}^{t, t})$, and $\bar{\varepsilon} \sim \P_{\bar{\varepsilon}}$. We form $(\V_{\TTheta}^t)_1$ according to the first row of \eqref{eq:V_TTheta}, that is, $(\V_{\TTheta}^t)_1 = \Z_1 \brho^{-1} \bnu_{\TTheta}^t + (\G_{\TTheta}^t)_1$. For $\ell \in \{0, \dots, L-1\}$, this yields a set of samples of the random variables $g_i^t\left( (\V_{\TTheta}^t)_1, q_1(\Z_{1}, \Psi_{\eta_\ell}, \bar{\varepsilon})\right) $ and $\partial_{1} \tilde{g}_i^t(\Z_{1}, (\V_{\TTheta}^t)_1, \Psi_{\eta_\ell}, \bar{\varepsilon})$ (the latter function is computed using Automatic Differentiation (AD)). We then compute $\hat{\E}$ in \eqref{eq:nu_b_MC} and \eqref{eq:kappa_b_MC} by averaging. 
The $\hat{\E}$ terms in \eqref{eq:nu_theta_MC}--\eqref{eq:kappa_theta_MC} are similarly computed.

The ensemble state evolution recursion \eqref{eq:nu_bar_B_SE}--\eqref{eq:kappa_bar_theta_SE} is used to compute the optimal denoisers $\{g^{*t}, f^{*t+1}\}_{t\ge 0}$ according to Proposition \ref{prop:opt_ensemble_ft_gt}. 
The expectation in the ensemble state evolution iterates   \eqref{eq:nu_bar_B_SE}--\eqref{eq:kappa_bar_theta_SE} are estimated through sample averages and AD, similar to the simplified state evolution iterates. 
\paragraph{Additional Numerical Results}
\begin{figure}[t!]
    \centering
    \subfloat[Solid purple: AMP using $L=3, 4$ and the optimal denoiser $f^t$ with the sparsity level estimated using  CV.]{\includegraphics[width=0.45\columnwidth]{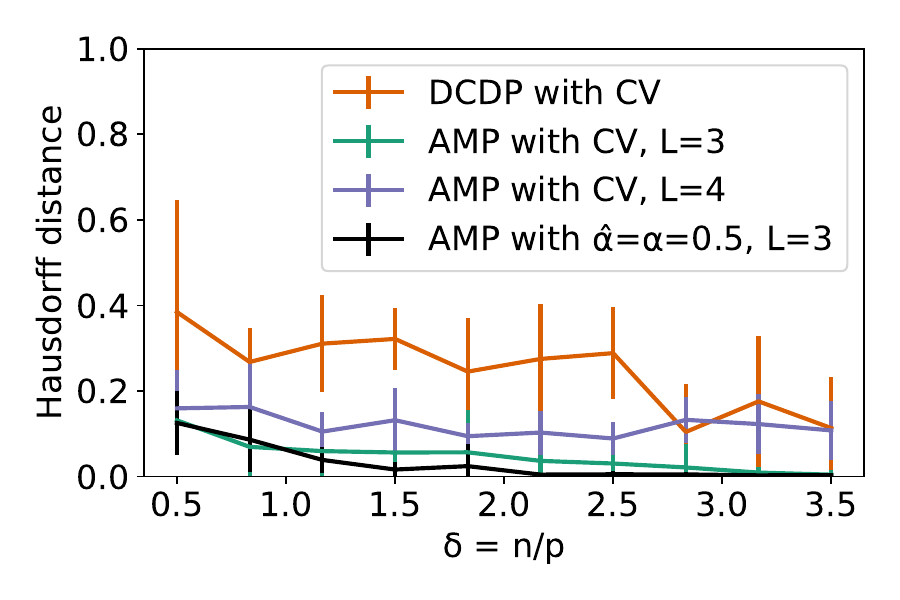}\label{fig:Lmax>L}}\hspace{1cm}
    \subfloat[Dashed purple: AMP using $L=3$ and a suboptimal soft thresholding (ST) denoiser $f^t$  whose threshold is selected using CV.]{\includegraphics[width=0.45\columnwidth]{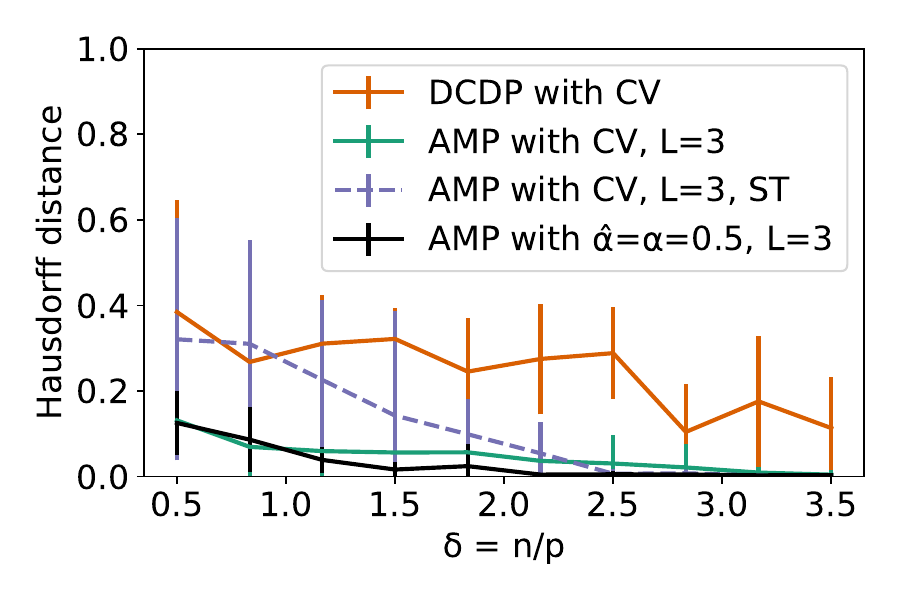}\label{fig:ST}}
    \caption{Comparison between AMP and  $\DCDP$ in the experimental setting of Figure \ref{fig:DPDU_L3}.
    Solid red: $\DCDP$ with hyperparameters chosen using CV. Solid green: AMP using $L=3$, the optimal denoiser $f^t$ with the sparsity level estimated using CV. Solid black: AMP using the true signal prior.} \label{fig:extra_comparison_DCDP}
\end{figure}
\begin{figure}[t!]
    \centering
     \subfloat{\includegraphics[width=0.45\columnwidth]{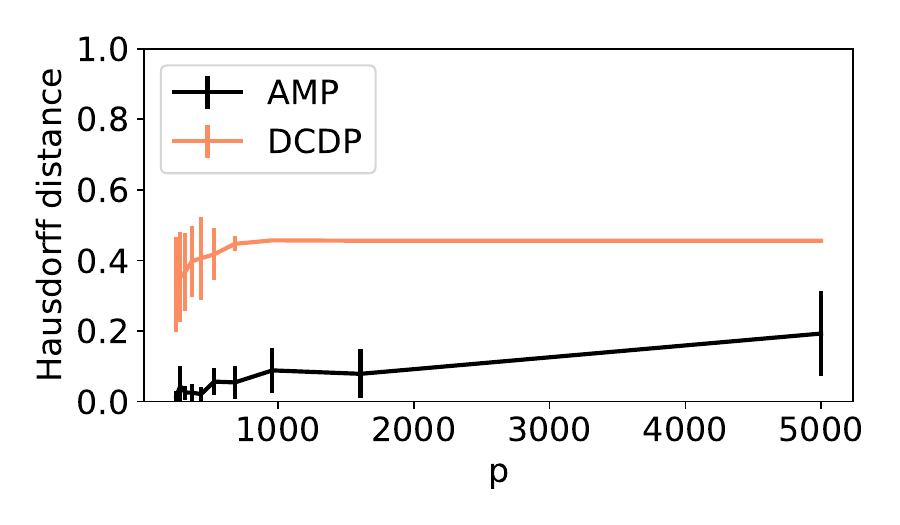}}\hspace{1cm}
    \subfloat{\includegraphics[width=0.45\columnwidth]{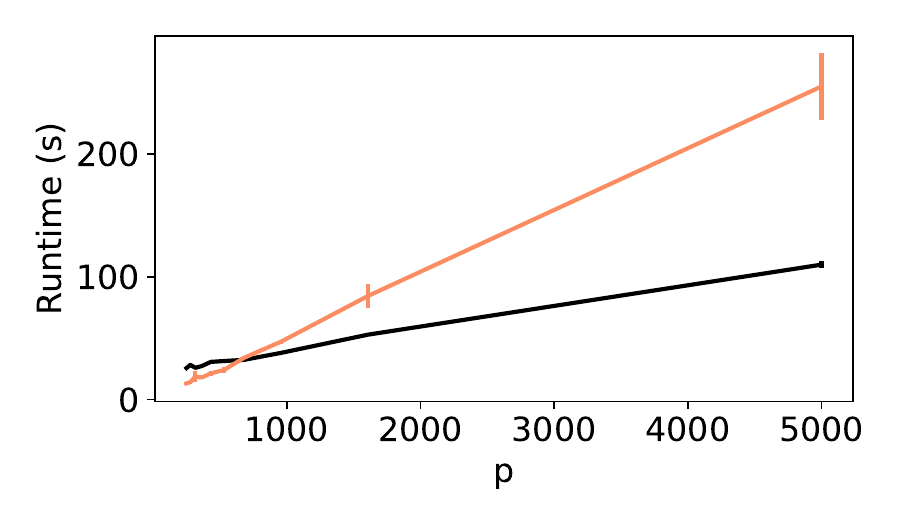}}
\vspace{-0.5cm}
    \caption{AMP vs. $\DCDP$ for fixed $n=500$ and varying $p$,  with $\sigma=0.1$, $L^*=3$ and   sparse signal  prior $\P_{\bar{\B}}=0.5\N(\0, \delta\I) + 0.5\delta_{\0}$. AMP uses $\Delta=n/10$ and $L=3$.}
    \label{fig:vary_p}
\vspace{-0.4cm}
\end{figure}
Figure \ref{fig:extra_comparison_DCDP} shows results from an additional set of  experiments comparing AMP with $\DCDP$ (the fastest algorithm in Figure \ref{fig:DPDU_L3}) for the linear model, using different change point priors $\pi_{\bar{\bPsi}}$ (Figure \ref{fig:Lmax>L}) or  denoisers $f^t$ (Figure \ref{fig:ST}). In both figures, the solid red plot shows  $\DCDP$ performance with hyperparameters chosen using CV.  Solid black corresponds to  AMP  using the true signal prior. Solid green  corresponds to  AMP  using $L=3$, the optimal denoiser $f^t$ with the sparsity level estimated using CV. 
In Figure \ref{fig:Lmax>L}, AMP performs slightly worse with $L=4$ instead of $L=3$, because the prior assigns non-zero probability to the change point configurations with $L = 4$ signals, which is mismatched from the ground truth $L^*=3$. In Figure \ref{fig:ST}, AMP performs slightly worse at lower $\delta$ when using  a suboptimal soft thresholding (ST) denoiser. Nevertheless, in both Figures \ref{fig:Lmax>L} and \ref{fig:ST}, AMP largely  outperforms $\DCDP$ despite the suboptimal choices of  prior or denoiser.

Figure \ref{fig:vary_p} compares AMP with $\DCDP$ for $n=500$ and varying $p$, both using hyperparameters chosen via cross-validation. The runtime shown is the average runtime per set of CV parameters. For a fixed set of hyperparameters,  the time complexity of $\DCDP$  scales as LASSO$(n,p)$ compared to $O(np)$ for AMP. The runtime of $\DCDP$ therefore grows faster with increasing $p$.
\paragraph{Experiment Details} 
For synthetic data experiments in Section \ref{sec:experiments}, we initialize AMP with $f^0(\B^0)$ sampled row-wise independently from the prior $\P_{\bar{\B}}$ used to define the ensemble state evolution \eqref{eq:nu_bar_B_SE}--\eqref{eq:kappa_bar_theta_SE}. Figures \ref{fig:DPDU_L3}, \ref{fig:satellite}, \ref{fig:logistic_compare}, \ref{fig:extra_comparison_DCDP} and \ref{fig:vary_p} use Bernoulli-Gaussian priors taking the form $\P_{\bar{\B}}=\alpha \normal(\bzero, \sigma^2\bI) + (1-\alpha) \delta_{\bzero}$ with $\alpha\in (0,1)$. Figure \ref{fig:satellite} uses $\alpha = 1/6$ and $\sigma  = 2.5$.  Figures \ref{fig:DPDU_L3}, \ref{fig:logistic_compare}, \ref{fig:extra_comparison_DCDP} and \ref{fig:vary_p} use $\alpha=0.5$ and $\sigma^2 = \delta$  to ensure that the signal power $ \E[(\X_i)^\top \bbeta^{(i)}]^2$  is held constant for varying $\delta$, and they use two change points at $n/3$ and $8n/15$ as the ground truth. In  Figures \ref{fig:DPDU_L3}, \ref{fig:extra_comparison_DCDP} and \ref{fig:vary_p},  AMP uses cross validation (CV) over 5 values of $\hat{\alpha}:\{ 0.1,  0.3,  0.45, 0.6,  0.9\}$ which do not contain the true $\alpha=0.5$.
 $\DCDP$, $\DPDU$, and $\DP$ each have two hyperparameters: one corresponding to the $\ell_1$ penalty and the other penalizing the number of change points. We run cross-validation on these hyperparameters for $12$, $12$, or  $42$ pairs of values, respectively, using the suggested values from the original papers.  

For the experiments in Figure \ref{fig:charcoal_L3}, we consider signals with sparse changes between adjacent signals. The prior takes the following form: 
\begin{align}
&\beta^{(\eta_0)}_j \distas{\text{i.i.d.}} \N(0, \kappa^2) \nonumber\\
&\beta^{(\eta_\ell)}_j = 
\left\{
    \begin{array}{lr}
        \beta^{(\eta_{\ell-1})}_j, & \text{with probability } 1 - \alpha\\
        \nu(\beta^{(\eta_{\ell-1})}_j + w_\ell), & \text{with probability } \alpha
    \end{array}
\right. , \quad \ell \in [L],
\label{eq:sparse_diff_prior}
\end{align}
where $w_\ell \distas{\text{i.i.d.}} \N(0, \sigma_w^2)$ creates the sparse change between adjacent signals, and $\nu := \sqrt{\frac{\kappa^2}{\kappa^2 + \sigma_w^2}}  $ is a rescaling factor that ensures uniform signal magnitude $\E[(\beta_j^{(\eta_{0})})^2]=\dots =\E[(\beta_j^{(\eta_{L})})^2] =\kappa^2$.
 We run the experiment with sparsity level $\alpha=0.5$, variance of the entries of the first signal $\kappa^2 = {8\delta}$, and perturbation to each consecutive signal of variance $\sigma_w^2=400{\delta}$, for varying $\delta$. This ensures that the signal power $ \E[(\X_i)^\top \bbeta^{(i)}]^2$ and the signal difference $\E[(\X_i)^\top \bbeta^{(i)} - (\X_{i+1})^\top \bbeta^{({i+1})}]^2$ for $i\in \{\eta_\ell\}_{\ell=1}^{L-1} $ are  held constant for varying $\delta$. There are two change points at $n/3$ and $8n/15$. Without perfect knowledge  of the magnitude $\sigma_w^2$ and sparsity level $\alpha$ of the sparse difference vector, AMP assumes $\hat{\sigma}_w^2=2500\delta$ and $\hat{\alpha}=0.9$ in the experiments in Figure \ref{fig:charcoal_L3}. 
 \vspace{-0.4cm}
\section{Computational Cost} \label{sec:computational_cost}
\paragraph{Computing $g^{*t}, f^{*t+1}$} For $i\in [n]$, the function $g_i^{*t}$ in Proposition \ref{prop:opt_ensemble_ft_gt} only depends on $\bar{\bPsi}$ in \eqref{eq:g_middle_term_expansion} through the marginal probability of $\bar{\Psi}_i$, that is, $\pi_{\bar{\Psi}_i}: \ell \mapsto \sum_{\{\bpsi : \psi_i = \ell\}} \pi_{\bar{\bPsi}}(\bpsi)$. This means $g_i^{*t}$ can be efficiently computed, involving only a sum over $\bar{\Psi}_i\in [L]$. Indeed, from the implementation details in Appendix \ref{sec:full_computation_gt}, both $f_j^{*t}$ and $g_i^{*t}$  can be computed in $O(L^3)$ time for each $i, j$. Thus $f^{*t}$ and $g^{*t}$ can be computed in  $O(n L^3)$. The per-iteration computational cost of AMP is therefore dominated by the  matrix multiplications in \eqref{eq:amp}, which are $O(npL)$.

\paragraph{Computing Change Point Estimators} For estimators $\hat{\eeta}$ with $O(np)$ runtime, the combined AMP and estimator computation can be made to run in $O(np)$ time by selecting denoisers $f^{t}, g^{t}$ as in Proposition \ref{prop:opt_ensemble_ft_gt}. For example, the $\argmax$ in \eqref{eq:example_estimator} can be replaced with a greedy best-first search: search for the location of one change point at a time, conditioning on past estimates of the other change points. This will yield at most $L$ rounds of searching over $O(n)$ elements, resulting in $O(np)$ total runtime. 

\paragraph{Computing the Approximate and Exact Posteriors} 
For $\bPsi \in \mathcal{X}$, the denominator in \eqref{eq:posterior_density_2} only requires evaluating a polynomial number of terms, ${n \choose {L-1}}$. Therefore, choosing $f^t, g^t$ as in Proposition \ref{prop:opt_ensemble_ft_gt}, the computational complexity of computing the quantity $p_{\bar{\bPsi} | \bar{\V}_{\TTheta}^t, q(\Z, \bar{\bPsi}, \bar{\bvarepsilon})}(\bpsi | \TTheta^t, \y)$ from \eqref{eq:argmax_approx_posterior} along with AMP is $O(npL + nL^3 + n^{L-1}) = O(np + n^{L-1})$. Further, assuming the expectations in \eqref{eq:nu_B_SE}--\eqref{eq:kappa_theta_SE} can be well-approximated in $O(n)$ time, $p_{\bar{\bPsi} | {\V}_{\TTheta}^t, q(\Z, \bar{\bPsi}, \bar{\bvarepsilon})}(\bpsi | \TTheta^t, \y)$ from \eqref{eq:posterior_density_2} along with AMP can be computed in $O(np + n^{L-1})$ time.

\section{Background on AMP for Generalized Linear Models} \label{app:background}

Here we review the  AMP algorithm and its state evolution characterization for  the standard Generalized Linear Model (GLM) without change points. As in Section \ref{sec:main}, let  $\B \in \reals^{p \times L}$ be a signal matrix and let $\X \in \reals^{n \times p}$ be a design matrix. The observation $\y \in \reals^{n \times L}$
is produced as 
\begin{equation}
    \y = q(\X \B, \bvarepsilon)  \in \reals^{n \times L}, \label{eq:model_psi_nochangept}
\end{equation}
where $\bvarepsilon \in \reals^n$ is a noise vector, and $q: \reals^L \to \reals^L$ is a known output function. The only difference between this model and the one in \eqref{eq:model_psi} is the absence of the signal configuration vector $\bPsi$. The AMP algorithm for the GLM in \eqref{eq:model_psi_nochangept}  was derived by \citet{Ran11} for the case of vector signals ($L=1$);  see also Section 4 of \cite{feng_unifying_2022}. Here we discuss the algorithm  for the general case ($L \ge 1$), which can be found in \cite{tan_mixed_2023}. 
For ease of exposition, we will make the following standard assumption (see \ref{it:simp-ass-1} on p.\pageref{it:simp-ass-1}): as $n, p \to \infty$, the empirical distributions of $\{\B_j \}_{j \in [p]}$  and $\{\varepsilon_i\}_{i \in [n]}$ converge weakly to laws $\P_{\bar{\B}}$  and  $\P_{\bar{\varepsilon}}$, respectively,   with bounded second moments. We also recall that $n/p \to \delta$ as $n,p \to \infty$.

The AMP algorithm for the model \eqref{eq:model_psi_nochangept} is the same as the one in \eqref{eq:amp}, but due to the assumption above, we can take $f^t, g^t$ to be separable, that is, $f^t: \reals^L \to \reals^L$  and  $g^t:  \reals^L  \times \reals^L \to \reals^L$ act row-wise on their matrix inputs. Then, the matrices $\F^t$ and $\C^t$ in \eqref{eq:amp} can be simplified to $\C^t= \frac{1}{n}\sum_{i=1}^n  \d g^t(\TTheta_i^t,y_i)$ and $\F^{t}=\frac{1}{n}\sum_{j=1}^p \d f^t(\B_j^{t})$, where $\d g^t$ and $\d f^t$ denote the $L \times L$ Jacobians with respect to the first argument.

\paragraph{State evolution}  The memory terms $-\hat{\bR}^{t-1} (\F^t)^\top$ and $-\hat{\B}^{t} (\C^t)^\top$ in \eqref{eq:amp} debias the iterates $\TTheta^t$ and $\B^{t+1}$ and enable a succinct distributional characterization, guaranteeing that their empirical distributions converge to well-defined limits as $n,p \to \infty$. Specifically, Theorem 1 in \cite{tan_mixed_2023} shows that for each $t \ge 1$, the empirical distribution of the rows of $\B^t$ converges to the law of a random vector $\tilde{\V}_{\B}^{t} := \bar{\B} \tilde{\bnu}^{t}_{\B} + \tilde{\G}^{t}_{\B} \, \in \reals^{1 \times L}$, where
$\tilde{\G}^{t+1}_{\B} \sim \normal(0, \tilde{\bkappa}_{\B}^{t, t})$ is independent of $\bar{\B} \sim \P_{\bar{\B}}$. Similarly, recalling that $\TTheta = \bX \B \in \reals^{n \times L}$, the empirical distribution of the rows of $(\TTheta, \, \TTheta^t)$ converges to the law of the random vectors $(\tilde{\bZ}, \tilde{\bZ}\tilde{\bnu}^{t}_{\TTheta} + \tilde{\G}^{t}_{\TTheta})$, where $\tilde{\G}^{t+1}_{\TTheta} \sim \normal(0, \tilde{\bkappa}_{\TTheta}^{t, t})$ is independent of $\tilde{\bZ} \sim \normal\left(0, \delta (\E[\bar{\bB} \bar{\bB}^\sT])^{-1} \right)$. The deterministic $L \times L$ matrices $\tilde{\bnu}^{t}_{\B}, \tilde{\bkappa}_{\B}^{t, t}, \tilde{\bnu}_{\TTheta}^{t, t} $, and $\tilde{\bkappa}_{\TTheta}^{t, t}$ can be recursively computed  for $t \ge 1$ via a state evolution recursion that depends on $f^t, g^t$ and the limiting laws $\P_{\bar{\B}}$  and  $\P_{\bar{\varepsilon}}$.

The state evolution characterization allows us to compute asymptotic performance measures such as the MSE of the AMP algorithm. Indeed, for each $t \ge 1$, we almost surely  have 
$\lim_{p \to \infty} \frac{1}{p} \| f^t(\B^t) - \B  \|_F^2 
= 
\E[\|f^t(\tilde{\V}_{\B}^{t}) - \bar{\B} \|^2_2]$, where the expectation on the right can be computed using the joint law of the $L$-dimensional random vectors  $\bar{\B}$ and  $\tilde{\V}_{\B}^{t} = \bar{\B} \tilde{\bnu}^{t}_{\B} + \tilde{\G}^{t}_{\B}$.

In the model \eqref{sec:main} with change points, we have an additional signal configuration vector $\bPsi$, because of which we cannot take the AMP denoising function $g^t$ to be separable, even under Assumption \ref{it:simp-ass-1}. This leads to a more complicated state evolution characterization, as described in Section \ref{sec:main}.

{\small{
\bibliography{references}

\begin{thebibliography}{80}
\providecommand{\natexlab}[1]{#1}
\providecommand{\url}[1]{\texttt{#1}}
\expandafter\ifx\csname urlstyle\endcsname\relax
  \providecommand{\doi}[1]{doi: #1}\else
  \providecommand{\doi}{doi: \begingroup \urlstyle{rm}\Url}\fi

\bibitem[Andreou and Ghysels(2002)]{andreou2002detecting}
Elena Andreou and Eric Ghysels.
\newblock Detecting multiple breaks in financial market volatility dynamics.
\newblock \emph{Journal of Applied Econometrics}, 2002.

\bibitem[Arpino et~al.(2024{\natexlab{a}})Arpino, Liu, and
  Gontarek]{arpino_impl_gen_24}
Gabriel Arpino, Xiaoqi Liu, and Julia Gontarek.
\newblock {AMP} implementation for change point inference in high-dimensional
  regression.
\newblock
  \href{https://github.com/gabrielarpino/AMP_chgpt_reg}{https://github.com/gabrielarpino/AMP\_chgpt\_reg},
  2024{\natexlab{a}}.

\bibitem[Arpino et~al.(2024{\natexlab{b}})Arpino, Liu, and
  Venkataramanan]{arpino2024inferring}
Gabriel Arpino, Xiaoqi Liu, and Ramji Venkataramanan.
\newblock Inferring change points in high-dimensional linear regression via
  approximate message passing.
\newblock In \emph{International Conference on Machine Learning (ICML)}, volume
  235, pages 1841--1864, 2024{\natexlab{b}}.

\bibitem[Aston and Kirch(2012)]{AstonKirch12}
John A.~D. Aston and Claudia Kirch.
\newblock Evaluating stationarity via change-point alternatives with
  applications to fmri data.
\newblock \emph{The Annals of Applied Statistics}, 2012.

\bibitem[Azad and Lemay(2014)]{Azad2014-ky}
Nahid Azad and Genevieve Lemay.
\newblock Management of chronic heart failure in the older population.
\newblock \emph{Journal of Geriatric Cardiology: JGC}, 11\penalty0
  (4):\penalty0 329, 2014.

\bibitem[Bai and Safikhani(2024)]{bai2023unified}
Yue Bai and Abolfazl Safikhani.
\newblock A unified framework for change point detection in high-dimensional
  linear models.
\newblock \emph{Statistica Sinica}, 2024.

\bibitem[Barbier et~al.(2019)Barbier, Krzakala, Macris, Miolane, and
  Zdeborov\'{a}]{Bar19}
Jean Barbier, Florent Krzakala, Nicolas Macris, L\'{e}o Miolane, and Lenka
  Zdeborov\'{a}.
\newblock Optimal errors and phase transitions in high-dimensional generalized
  linear models.
\newblock \emph{Proceedings of the National Academy of Sciences}, 116\penalty0
  (12):\penalty0 5451--5460, 2019.

\bibitem[Barbier et~al.(2020)Barbier, Macris, and Rush]{Bar20}
Jean Barbier, Nicolas Macris, and Cynthia Rush.
\newblock All-or-nothing statistical and computational phase transitions in
  sparse spiked matrix estimation.
\newblock In \emph{International Conference on Neural Information Processing
  Systems (NeurIPS)}, 2020.

\bibitem[Bayati and Montanari(2011)]{Bayati_2011_dynamics}
Mohsen Bayati and Andrea Montanari.
\newblock The dynamics of message passing on dense graphs, with applications to
  compressed sensing.
\newblock \emph{IEEE Transactions on Information Theory}, 57:\penalty0
  764--785, 2011.

\bibitem[Berthier et~al.(2019)Berthier, Montanari, and
  Nguyen]{berthier_state-evolution_2019}
Raphaël Berthier, Andrea Montanari, and Phan-Minh Nguyen.
\newblock State evolution for approximate message passing with non-separable
  functions.
\newblock \emph{Information and Inference: A Journal of the IMA}, 9\penalty0
  (1):\penalty0 33--79, 2019.

\bibitem[Bhattacharjee et~al.(2020)Bhattacharjee, Banerjee, and
  Michailidis]{bhattacharjee_change_2020}
Monika Bhattacharjee, Moulinath Banerjee, and George Michailidis.
\newblock Change point estimation in a dynamic stochastic block model.
\newblock \emph{Journal of Machine Learning Research}, 21\penalty0
  (107):\penalty0 1--59, 2020.

\bibitem[Bishop(2006)]{bishop2006pattern}
Christopher~M. Bishop.
\newblock \emph{Pattern Recognition and Machine Learning (Information Science
  and Statistics)}.
\newblock Springer-Verlag, 2006.

\bibitem[Bradbury et~al.(2018)Bradbury, Frostig, Hawkins, Johnson, Leary,
  Maclaurin, Necula, Paszke, Vander{P}las, Wanderman-{M}ilne, and
  Zhang]{jax2018github}
James Bradbury, Roy Frostig, Peter Hawkins, Matthew~James Johnson, Chris Leary,
  Dougal Maclaurin, George Necula, Adam Paszke, Jake Vander{P}las, Skye
  Wanderman-{M}ilne, and Qiao Zhang.
\newblock {JAX}: composable transformations of {P}ython+{N}um{P}y programs,
  2018.
\newblock URL \url{http://github.com/jax-ml/jax}.
\newblock http://github.com/jax-ml/jax.

\bibitem[Braun et~al.(2000)Braun, Braun, and M{\"u}ller]{braun2000multiple}
Jerome~V Braun, RK~Braun, and Hans-Georg M{\"u}ller.
\newblock Multiple changepoint fitting via quasilikelihood, with application to
  dna sequence segmentation.
\newblock \emph{Biometrika}, 2000.

\bibitem[Cho and Fryzlewicz(2014)]{cho2015multiple}
Haeran Cho and Piotr Fryzlewicz.
\newblock Multiple-change-point detection for high dimensional time series via
  sparsified binary segmentation.
\newblock \emph{Journal of the Royal Statistical Society Series B: Statistical
  Methodology}, 77\penalty0 (2):\penalty0 475--507, 2014.

\bibitem[Cho et~al.(2025)Cho, Kley, and Li]{cho2024detection}
Haeran Cho, Tobias Kley, and Housen Li.
\newblock Detection and inference of changes in high-dimensional linear
  regression with nonsparse structures.
\newblock \emph{Journal of the Royal Statistical Society Series B: Statistical
  Methodology}, 2025.

\bibitem[Deshpande and Montanari(2014)]{Des14}
Yash Deshpande and Andrea Montanari.
\newblock Information-theoretically optimal sparse {PCA}.
\newblock In \emph{IEEE International Symposium on Information Theory (ISIT)},
  pages 2197--2201, 2014.

\bibitem[Diakonikolas et~al.(2021)Diakonikolas, Park, and
  Tzamos]{diakonikolas_relu_2021}
Ilias Diakonikolas, Jongho Park, and Christos Tzamos.
\newblock {ReLU} regression with {Massart} noise.
\newblock In \emph{International Conference on Neural Information Processing
  Systems (NeurIPS)}, 2021.

\bibitem[Donoho et~al.(2009)Donoho, Maleki, and Montanari]{Don09}
David~L. Donoho, Arian Maleki, and Andrea Montanari.
\newblock Message passing algorithms for compressed sensing.
\newblock \emph{Proceedings of the National Academy of Sciences}, 106:\penalty0
  18914--18919, 2009.

\bibitem[Donoho et~al.(2013)Donoho, Javanmard, and Montanari]{Don13}
David~L. Donoho, Adel Javanmard, and Andrea Montanari.
\newblock Information-theoretically optimal compressed sensing via spatial
  coupling and approximate message passing.
\newblock \emph{IEEE Transactions on Information Theory}, 59\penalty0
  (11):\penalty0 7434--7464, 2013.

\bibitem[Fan and Guan(2018)]{fan2018approximate}
Zhou Fan and Leying Guan.
\newblock {Approximate $\ell_{0}$-penalized estimation of piecewise-constant
  signals on graphs}.
\newblock \emph{The Annals of Statistics}, 46\penalty0 (6B):\penalty0 3217 --
  3245, 2018.

\bibitem[Fearnhead(2006)]{fearnhead2006exact}
Paul Fearnhead.
\newblock Exact and efficient bayesian inference for multiple changepoint
  problems.
\newblock \emph{Statistics and Computing}, 16:\penalty0 203--213, 2006.

\bibitem[Feng et~al.(2022)Feng, Venkataramanan, Rush, and
  Samworth]{feng_unifying_2022}
Oliver~Y. Feng, Ramji Venkataramanan, Cynthia Rush, and Richard~J. Samworth.
\newblock A {Unifying} {Tutorial} on {Approximate} {Message} {Passing}.
\newblock \emph{Foundations and Trends{\textregistered} in Machine Learning},
  15\penalty0 (4):\penalty0 335--536, 2022.

\bibitem[Fletcher and Rangan(2018)]{Fle18}
Alyson~K. Fletcher and Sundeep Rangan.
\newblock Iterative reconstruction of rank-one matrices in noise.
\newblock \emph{Information and Inference: A Journal of the IMA}, 7\penalty0
  (3):\penalty0 531--562, 2018.

\bibitem[Fong et~al.(2015)Fong, Di, and Permar]{fong_change_2015}
Youyi Fong, Chongzhi Di, and Sallie Permar.
\newblock Change point testing in logistic regression models with interaction
  term: {Change} point testing in logistic regression models with interaction
  term.
\newblock \emph{Statistics in Medicine}, 34\penalty0 (9):\penalty0 1483--1494,
  2015.

\bibitem[Fryzlewicz(2014)]{fryzlewicz_wild_2014}
Piotr Fryzlewicz.
\newblock Wild binary segmentation for multiple change-point detection.
\newblock \emph{The Annals of Statistics}, 42\penalty0 (6):\penalty0
  2243--2281, 2014.

\bibitem[Gao and Wang(2022)]{gao_sparse_2022}
Fengnan Gao and Tengyao Wang.
\newblock Sparse change detection in high-dimensional linear regression.
\newblock arXiv:2208.06326, 2022.

\bibitem[Gerbelot and Berthier(2023)]{gerbelot_graph-based_2023}
Cédric Gerbelot and Raphaël Berthier.
\newblock Graph-based approximate message passing iterations.
\newblock \emph{Information and Inference: A Journal of the IMA}, 12\penalty0
  (4):\penalty0 2562--2628, 2023.

\bibitem[Golovenkin et~al.(2020)Golovenkin, Shulman, Rossiev, Shesternya,
  Nikulina, Orlova, , and
  Voino-Yasenetsky]{misc_myocardial_infarction_complications_579}
S.E. Golovenkin, V.A. Shulman, D.A. Rossiev, P.A. Shesternya, S.Yu. Nikulina,
  Yu.V. Orlova, , and V.F. Voino-Yasenetsky.
\newblock {Myocardial infarction complications}.
\newblock UCI Machine Learning Repository, 2020.
\newblock {DOI}: https://doi.org/10.24432/C53P5M.

\bibitem[He et~al.(2022)He, Burghardt, and Lerman]{he_leveraging_2022}
Yuzi He, Keith~A. Burghardt, and Kristina Lerman.
\newblock Leveraging change point detection to discover natural experiments in
  data.
\newblock \emph{EPJ Data Science}, 2022.

\bibitem[Hofrichter(2007)]{hofrichter}
Johannes Hofrichter.
\newblock \emph{Change Point Detection in Generalized Linear Models}.
\newblock Phd thesis, Technische Universität Graz, 2007.

\bibitem[Javanmard and Montanari(2013)]{javanmard_state-evolution_2013}
Adel Javanmard and Andrea Montanari.
\newblock State evolution for general approximate message passing algorithms,
  with applications to spatial coupling.
\newblock \emph{Information and Inference: A Journal of the IMA}, 2\penalty0
  (2):\penalty0 115--144, 2013.

\bibitem[Kabashima(2003)]{Kab03}
Yoshiyuki Kabashima.
\newblock A {CDMA} multiuser detection algorithm on the basis of belief
  propagation.
\newblock \emph{Journal of Physics A: Mathematical and General}, 36\penalty0
  (43):\penalty0 11111--11121, 2003.

\bibitem[Kaul et~al.(2019)Kaul, Jandhyala, and Fotopoulos]{kaul2019efficient}
Abhishek Kaul, Venkata~K. Jandhyala, and Stergios~B. Fotopoulos.
\newblock An efficient two step algorithm for high dimensional change point
  regression models without grid search.
\newblock \emph{Journal of Machine Learning Research}, 20\penalty0
  (111):\penalty0 1--40, 2019.

\bibitem[Krzakala et~al.(2012)Krzakala, Mézard, Sausset, Sun, and
  Zdeborová]{Krz12}
Florent Krzakala, Marc Mézard, Fran$\c{c}$ois Sausset, Yifan Sun, and Lenka
  Zdeborová.
\newblock Probabilistic reconstruction in compressed sensing: {A}lgorithms,
  phase diagrams, and threshold achieving matrices.
\newblock \emph{Journal of Statistical Mechanics: Theory and Experiment},
  2012\penalty0 (08):\penalty0 P08009, 2012.

\bibitem[Lavergne(2008)]{lavergne2008Cauchy}
Pascal Lavergne.
\newblock {A Cauchy-Schwarz inequality for expectation of matrices}.
\newblock Discussion Papers dp08-07, Department of Economics, Simon Fraser
  University, 2008.
\newblock URL \url{https://ideas.repec.org/p/sfu/sfudps/dp08-07.html}.

\bibitem[Lee et~al.(2015)Lee, Seo, and Shin]{lee2016lasso}
Sokbae Lee, Myung~Hwan Seo, and Youngki Shin.
\newblock The lasso for high dimensional regression with a possible change
  point.
\newblock \emph{Journal of the Royal Statistical Society Series B: Statistical
  Methodology}, 78\penalty0 (1):\penalty0 193--210, 2015.

\bibitem[Leonardi and Bühlmann(2016)]{leonardi2016computationally}
Florencia Leonardi and Peter Bühlmann.
\newblock Computationally efficient change point detection for high-dimensional
  regression.
\newblock arXiv:1601.03704, 2016.

\bibitem[Lesieur et~al.(2017)Lesieur, Krzakala, and Zdeborov{\'a}]{Les17}
Thibault Lesieur, Florent Krzakala, and Lenka Zdeborov{\'a}.
\newblock Constrained low-rank matrix estimation: Phase transitions,
  approximate message passing and applications.
\newblock \emph{Journal of Statistical Mechanics: Theory and Experiment},
  2017\penalty0 (7):\penalty0 073403, 2017.

\bibitem[Li et~al.(2024)Li, Fearnhead, Fryzlewicz, and Wang]{li_automatic_2023}
Jie Li, Paul Fearnhead, Piotr Fryzlewicz, and Tengyao Wang.
\newblock Automatic change-point detection in time series via deep learning.
\newblock \emph{Journal of the Royal Statistical Society Series B: Statistical
  Methodology}, 2024.

\bibitem[Li et~al.(2023)Li, Wang, and Rinaldo]{li_divide_2023}
Wanshan Li, Daren Wang, and Alessandro Rinaldo.
\newblock Divide and conquer dynamic programming: An almost linear time change
  point detection methodology in high dimensions.
\newblock In \emph{International Conference on Machine Learning (ICML)}, volume
  202, pages 20065--20148, 2023.

\bibitem[Liu et~al.(2026)Liu, Qi, Zhang, and Liu]{liu2024change}
Bin Liu, Zhengling Qi, Xinsheng Zhang, and Yufeng Liu.
\newblock Change point detection for high-dimensional linear models: a general
  tail-adaptive approach.
\newblock \emph{Statistica Sinica}, 2026.

\bibitem[Liu et~al.(2021)Liu, Gao, and Samworth]{liu2021minimax}
Haoyang Liu, Chao Gao, and Richard~J Samworth.
\newblock Minimax rates in sparse, high-dimensional change point detection.
\newblock \emph{The Annals of Statistics}, 49\penalty0 (2), 2021.

\bibitem[Londschien et~al.(2023)Londschien, Bühlmann, and
  Kovács]{londschien_random_2023}
Malte Londschien, Peter Bühlmann, and Solt Kovács.
\newblock Random forests for change point detection.
\newblock \emph{Journal of Machine Learning Research}, 24\penalty0
  (216):\penalty0 1--45, 2023.

\bibitem[Lungu et~al.(2022)Lungu, Papageorgiou, and
  Kontoyiannis]{lungu2022changepoint}
Valentinian Lungu, Ioannis Papageorgiou, and Ioannis Kontoyiannis.
\newblock Bayesian change-point detection via context-tree weighting.
\newblock In \emph{IEEE Information Theory Workshop (ITW)}, pages 125--130,
  2022.

\bibitem[Ma and Ping(2017)]{ma2017orthogonal}
Junjie Ma and Li~Ping.
\newblock Orthogonal {AMP}.
\newblock \emph{IEEE Access}, 5:\penalty0 2020--2033, 2017.

\bibitem[Ma et~al.(2019)Ma, Rush, and Baron]{ma2019analysis}
Yanting Ma, Cynthia Rush, and Dror Baron.
\newblock Analysis of approximate message passing with non-separable denoisers
  and markov random field priors.
\newblock \emph{IEEE Transactions on Information Theory}, 65\penalty0
  (11):\penalty0 7367--7389, 2019.

\bibitem[Madrid~Padilla et~al.(2022)Madrid~Padilla, Yu, Wang, and
  Rinaldo]{madrid_padilla_optimal_2022}
Oscar~Hernan Madrid~Padilla, Yi~Yu, Daren Wang, and Alessandro Rinaldo.
\newblock Optimal nonparametric multivariate change point detection and
  localization.
\newblock \emph{IEEE Transactions on Information Theory}, 68\penalty0
  (3):\penalty0 1922--1944, 2022.

\bibitem[Maillard et~al.(2020)Maillard, Loureiro, Krzakala, and
  Zdeborov\'{a}]{Mai20}
Antoine Maillard, Bruno Loureiro, Florent Krzakala, and Lenka Zdeborov\'{a}.
\newblock Phase retrieval in high dimensions: statistical and computational
  phase transitions.
\newblock In \emph{Neural Information Processing Systems (NeurIPS)}, volume~33,
  pages 11071--11082, 2020.

\bibitem[Metzler et~al.(2016)Metzler, Maleki, and
  Baraniuk]{metzler2016denoising}
Christopher~A Metzler, Arian Maleki, and Richard~G Baraniuk.
\newblock From denoising to compressed sensing.
\newblock \emph{IEEE Transactions on Information Theory}, 62\penalty0
  (9):\penalty0 5117--5144, 2016.

\bibitem[Mondelli and Venkataramanan(2021)]{Mon21}
Marco Mondelli and Ramji Venkataramanan.
\newblock Approximate message passing with spectral initialization for
  generalized linear models.
\newblock In \emph{International Conference on Artificial Intelligence and
  Statistics (AISTATS)}, 2021.

\bibitem[Montanari and Venkataramanan(2021)]{Mont21}
Andrea Montanari and Ramji Venkataramanan.
\newblock Estimation of low-rank matrices via approximate message passing.
\newblock \emph{Annals of Statistics}, 45\penalty0 (1):\penalty0 321--345,
  2021.

\bibitem[Owen(1980)]{owen1980gaussian}
D.~B. Owen.
\newblock A table of normal integrals.
\newblock \emph{Communications in Statistics - Simulation and Computation},
  9\penalty0 (4):\penalty0 389--419, 1980.

\bibitem[Pandit et~al.(2020)Pandit, Sahraee-Ardakan, Rangan, Schniter, and
  Fletcher]{pandit2020inference}
Parthe Pandit, Mojtaba Sahraee-Ardakan, Sundeep Rangan, Philip Schniter, and
  Alyson~K. Fletcher.
\newblock Inference with deep generative priors in high dimensions.
\newblock \emph{IEEE Journal on Selected Areas in Information Theory},
  1\penalty0 (1):\penalty0 336--347, 2020.

\bibitem[Pastor‐Barriuso et~al.(2003)Pastor‐Barriuso, Guallar, and
  Coresh]{pastor-barriuso_transition_2003}
Roberto Pastor‐Barriuso, Eliseo Guallar, and Josef Coresh.
\newblock Transition models for change‐point estimation in logistic
  regression.
\newblock \emph{Statistics in Medicine}, 22\penalty0 (7):\penalty0 1141--1162,
  2003.

\bibitem[Rangan(2011)]{Ran11}
Sundeep Rangan.
\newblock Generalized approximate message passing for estimation with random
  linear mixing.
\newblock In \emph{IEEE International Symposium on Information Theory (ISIT)},
  2011.

\bibitem[Rangan et~al.(2019)Rangan, Schniter, and Fletcher]{rangan2019vector}
Sundeep Rangan, Philip Schniter, and Alyson~K Fletcher.
\newblock Vector approximate message passing.
\newblock \emph{IEEE Transactions on Information Theory}, 65\penalty0
  (10):\penalty0 6664--6684, 2019.

\bibitem[Rinaldo et~al.(2021)Rinaldo, Wang, Wen, Willett, and
  Yu]{rinaldo_localizing_2021}
Alessandro Rinaldo, Daren Wang, Qin Wen, Rebecca Willett, and Yi~Yu.
\newblock Localizing changes in high-dimensional regression models.
\newblock In \emph{International Conference on Artificial Intelligence and
  Statistics (AISTATS)}, volume 130, pages 2089--2097, 2021.

\bibitem[Rossetti and
  Reeves(2023)]{rossetti2023approximatemessagepassingmatrix}
Riccardo Rossetti and Galen Reeves.
\newblock Approximate message passing for the matrix tensor product model.
\newblock arXiv:2306.15580, 2023.

\bibitem[Schniter and Rangan(2014)]{Sch14}
Philip Schniter and Sundeep Rangan.
\newblock Compressive phase retrieval via generalized approximate message
  passing.
\newblock \emph{IEEE Transactions on Signal Processing}, 63\penalty0
  (4):\penalty0 1043--1055, 2014.

\bibitem[Soltanolkotabi(2017)]{soltanolkotabi_learning_2017}
Mahdi Soltanolkotabi.
\newblock Learning {ReLU}s via gradient descent.
\newblock In \emph{Neural Information Processing Systems (NeurIPS)}, 2017.

\bibitem[Som and Schniter(2012)]{som2012compressive}
Subhojit Som and Philip Schniter.
\newblock Compressive imaging using approximate message passing and a
  {M}arkov-tree prior.
\newblock \emph{IEEE Transactions on Signal Processing}, 60\penalty0
  (7):\penalty0 3439--3448, 2012.

\bibitem[Stein et~al.(2012)Stein, Kremer, Shochat, Bental, Korenfeld, Abramson,
  Ben-Gal, Sagie, and Fuchs]{stein_diversity_2012}
Gideon~Y. Stein, Angelika Kremer, Tzipi Shochat, Tamir Bental, Roman Korenfeld,
  Evgeny Abramson, Tuvia Ben-Gal, Alexander Sagie, and Shmuel Fuchs.
\newblock The {Diversity} of {Heart} {Failure} in a {Hospitalized}
  {Population}: {The} {Role} of {Age}.
\newblock \emph{Journal of Cardiac Failure}, 18\penalty0 (8):\penalty0
  645--653, 2012.

\bibitem[Takeuchi(2020)]{takeuchi2020rigorous}
Keigo Takeuchi.
\newblock Rigorous dynamics of expectation-propagation-based signal recovery
  from unitarily invariant measurements.
\newblock \emph{IEEE Transactions on Information Theory}, 66\penalty0
  (1):\penalty0 368--386, 2020.

\bibitem[Tan and Venkataramanan(2023)]{tan_mixed_2023}
Nelvin Tan and Ramji Venkataramanan.
\newblock Mixed regression via approximate message passing.
\newblock \emph{Journal of Machine Learning Research}, 24\penalty0
  (317):\penalty0 1--44, 2023.

\bibitem[Tieplova et~al.(2025)Tieplova, Lahiry, and
  Barbier]{tieplova2025informationtheoreticlimitsapproximatemessagepassing}
Daria Tieplova, Samriddha Lahiry, and Jean Barbier.
\newblock Information-theoretic limits and approximate message-passing for
  high-dimensional time series.
\newblock arXiv:2501.13625, 2025.

\bibitem[Tromp et~al.(2021)Tromp, Paniagua, Lau, Allen, Blaha, Gansevoort,
  Hillege, Lee, Levy, Vasan, Van Der~Harst, Van~Gilst, Larson, Shah, De~Boer,
  Lam, and Ho]{tromp_age_2021}
Jasper Tromp, Samantha M~A Paniagua, Emily~S Lau, Norrina~B Allen, Michael~J
  Blaha, Ron~T Gansevoort, Hans~L Hillege, Douglas~E Lee, Daniel Levy,
  Ramachandran~S Vasan, Pim Van Der~Harst, Wiek~H Van~Gilst, Martin~G Larson,
  Sanjiv~J Shah, Rudolf~A De~Boer, Carolyn S~P Lam, and Jennifer~E Ho.
\newblock Age dependent associations of risk factors with heart failure: pooled
  population based cohort study.
\newblock \emph{BMJ}, page n461, 2021.

\bibitem[Wang et~al.(2020)Wang, Yu, and Rinaldo]{wang_univariate_2020}
Daren Wang, Yi~Yu, and Alessandro Rinaldo.
\newblock Univariate mean change point detection: penalization, {CUSUM} and
  optimality.
\newblock \emph{Electronic Journal of Statistics}, pages 1917 -- 1961, 2020.

\bibitem[Wang et~al.(2021{\natexlab{a}})Wang, Yu, and
  Rinaldo]{wang_optimal_2021a}
Daren Wang, Yi~Yu, and Alessandro Rinaldo.
\newblock {Optimal change point detection and localization in sparse dynamic
  networks}.
\newblock \emph{The Annals of Statistics}, 49\penalty0 (1):\penalty0 203 --
  232, 2021{\natexlab{a}}.

\bibitem[Wang et~al.(2021{\natexlab{b}})Wang, Yu, and
  Rinaldo]{wang_optimal_2021b}
Daren Wang, Yi~Yu, and Alessandro Rinaldo.
\newblock Optimal covariance change point localization in high dimensions.
\newblock \emph{Bernoulli}, 27\penalty0 (1):\penalty0 554--575,
  2021{\natexlab{b}}.

\bibitem[Wang et~al.(2021{\natexlab{c}})Wang, Zhao, Lin, and
  Willett]{wang_2021_statistically}
Daren Wang, Zifeng Zhao, Kevin~Z. Lin, and Rebecca Willett.
\newblock Statistically and computationally efficient change point localization
  in regression settings.
\newblock \emph{Journal of Machine Learning Research}, 22\penalty0
  (248):\penalty0 1--46, 2021{\natexlab{c}}.

\bibitem[Wang and Samworth(2017)]{wang_high_2018}
Tengyao Wang and Richard~J. Samworth.
\newblock High dimensional change point estimation via sparse projection.
\newblock \emph{Journal of the Royal Statistical Society Series B: Statistical
  Methodology}, 80\penalty0 (1):\penalty0 57--83, 2017.

\bibitem[Wang et~al.(2024)Wang, Zhong, and Fan]{Wang22Universality}
Tianhao Wang, Xinyi Zhong, and Zhou Fan.
\newblock Universality of approximate message passing algorithms and tensor
  networks.
\newblock \emph{The Annals of Applied Probability}, 34\penalty0 (4):\penalty0
  3943--3994, 2024.

\bibitem[Wang et~al.(2023)Wang, Liu, Zhang, Liu, and
  Initiative]{wang_efficient_2023}
Xianru Wang, Bin Liu, Xinsheng Zhang, Yufeng Liu, and for the Alzheimer's
  Disease~Neuroimaging Initiative.
\newblock Efficient multiple change point detection for high-dimensional
  generalized linear models.
\newblock \emph{Canadian Journal of Statistics}, 51, 2023.

\bibitem[Xu et~al.(2024)Xu, Wang, Zhao, and Yu]{xu_change_2022}
Haotian Xu, Daren Wang, Zifeng Zhao, and Yi~Yu.
\newblock Change-point inference in high-dimensional regression models under
  temporal dependence.
\newblock \emph{The Annals of Statistics}, 52\penalty0 (3):\penalty0 999--1026,
  2024.

\bibitem[Yang et~al.(2025)Yang, Zhang, Sun, and Liu]{yang_robust_2024}
Zhi Yang, Liwen Zhang, Siyu Sun, and Bin Liu.
\newblock Robust change point detection for high‐dimensional linear models
  with tolerance for outliers and heavy tails.
\newblock \emph{Canadian Journal of Statistics}, 53\penalty0 (1):\penalty0
  e11826, 2025.

\bibitem[Yi et~al.(2014)Yi, Caramanis, and Sanghavi]{Yi14}
Xinyang Yi, Constantine Caramanis, and Sujay Sanghavi.
\newblock Alternating minimization for mixed linear regression.
\newblock In \emph{International Conference on Machine Learning (ICML)}, 2014.

\bibitem[Zhang et~al.(2022)Zhang, Mondelli, and
  Venkataramanan]{zhang2022precise}
Yihan Zhang, Marco Mondelli, and Ramji Venkataramanan.
\newblock Precise asymptotics for spectral methods in mixed generalized linear
  models.
\newblock arXiv:2211.11368, 2022.

\bibitem[Zhang et~al.(2024)Zhang, Ji, Venkataramanan, and
  Mondelli]{zhang_spectral_2023}
Yihan Zhang, Hong~Chang Ji, Ramji Venkataramanan, and Marco Mondelli.
\newblock Spectral estimators for structured generalized linear models via
  approximate message passing.
\newblock In \emph{Conference on Learning Theory (COLT)}, 2024.

\bibitem[Zhao et~al.(2022)Zhao, Sur, and Candès]{zhao_asymptotic_2022}
Qian Zhao, Pragya Sur, and Emmanuel~J. Candès.
\newblock The asymptotic distribution of the {MLE} in high-dimensional logistic
  models: {Arbitrary} covariance.
\newblock \emph{Bernoulli}, 28\penalty0 (3):\penalty0 1835--1861, 2022.

\end{thebibliography}
}}

\end{document}